\documentclass[preprint,3p,12pt]{elsarticle}
\usepackage{lineno,hyperref}
\usepackage{amsmath,amssymb,epsfig}
\usepackage[utf8]{inputenc}
\usepackage[english]{babel}
\usepackage{mathtools}
\usepackage{helvet,enumerate}         
\usepackage{courier}        
\usepackage{type1cm}
\usepackage{csquotes}
\usepackage{makeidx}        
\usepackage{graphicx}
\usepackage{multicol}
\usepackage{appendix}
\usepackage{stix}
\usepackage[bottom]{footmisc}
\usepackage[dvipsnames]{xcolor}
\usepackage{multicol}
\usepackage[bottom]{footmisc}     
\usepackage{hyperref} 
\usepackage{tikz}
\usepackage{float}  
\usepackage{helvet,overpic} 
\usepackage{subfig}
\usepackage{soul}
\usepackage{amsthm}
\newtheorem{thm}{Theorem}
\newtheorem{lem}{Lemma}
\newtheorem{rmk}{Remark}

\newtheorem{limit}{Limitation}
\newtheorem{Counter}{Counter example}









\bibliographystyle{elsarticle-num}

\newcommand{\rectangleblack}{\scalebox{0.5}{\ensuremath \hrectangleblack}}

\let\mybibitem\bibitem
\renewcommand{\bibitem}[1]{%
  \ifstrequal{#1}{nature}
    {\color{blue}\mybibitem{#1}}
    {\color{black}\mybibitem{#1}}%
}
\usepackage{etoolbox}

\begin{document}

\begin{frontmatter}

\title{Singularity Distance Computations for 3-RPR Manipulators Using Extrinsic Metrics}

\author{Aditya Kapilavai and Georg Nawratil}
\address{Institute of Discrete Mathematics and Geometry \& 
Center for Geometry and Computational Design, TU Wien\\
Wiedner Hauptstrasse 8--10, Vienna 1040, Austria}


\cortext[mycorrespondingauthor]{Corresponding author: Aditya Kapilavai}

\ead{\{akapilavai,nawratil\}@geometrie.tuwien.ac.at}
 
\begin{abstract}
{
It is well-known that parallel manipulators are prone to singularities. However, there is still a lack of distance evaluation functions, referred to as metrics, for computing the distance between two 3-RPR configurations. 
The proposed extrinsic metrics take the combinatorial structure of the manipulator into account as well as different design options. Utilizing these extrinsic metrics, we formulate constrained optimization problems. These problems are aimed at identifying the closest singular configurations for a given non-singular configuration.
The solution to the associated system of polynomial equations relies on algorithms from numerical algebraic geometry implemented in the software package \texttt{Bertini}. 
Furthermore, we have developed a computational pipeline for determining the distance to singularity during a one-parametric motion of the manipulator. 
To facilitate these computations for the user, an open-source interface is developed between software packages \texttt{Maple}, \texttt{Bertini}, and \texttt{Paramotopy}. The effectiveness of the presented approach is demonstrated based on numerical examples and compared with existing indices evaluating the singularity closeness.}

\end{abstract}

\begin{keyword}
3-RPR planar parallel manipulator, singularity distance, extrinsic metric,  numerical homotopy continuation, constrained optimization, kinematic performance indices
\end{keyword}
\end{frontmatter}

\section{Introduction}

A 3-RPR manipulator (cf. Fig.~\ref{3rpr}) is a three Degree-of-Freedom (DoF) planar parallel manipulator with two translational DoFs and one rotational DoF. The base and platform are connected by three legs, where each leg consists of two passive revolute (R) joints connected by an actuated prismatic (P) joint  (cf.\ Fig.~\ref{3rpr}).  
Note that we will not consider any motion range limitations on the P and R joints.

Let $\mathbf{k}_i$ denote the 
coordinate vectors of the base anchor points with respect 
to the fixed frame, having coordinates $(x_i,y_i)^T$
for $i=1,2 ,3$. 
The coordinate vectors of the platform anchor points with respect to the moving frame are  $\mathbf{p}_{j}$, having coordinates $(x_j,y_j)^T$ 
for $j=4,5,6$. Their coordinate vectors with respect to the fixed frame can be computed as follows:
\begin{equation}
\mathbf{k}_{j} =
\mathbf{R} \mathbf{p}_j+ \mathbf{t},
\label{initial}
\end{equation}
\noindent where $\mathbf{R} $ is a $2\times2$ rotation matrix and $\mathbf{t}$ is the translation vector. 

Without loss of generality, we can assume that $\mathbf{k}_1$ is the origin of the fixed frame ($\Rightarrow x_1=y_1=0$), and  $\mathbf{k}_2$ is located on the $x$-axis of the fixed frame ($\Rightarrow y_2=0$). The same assumptions can be made for the moving frame, which implies $x_4=y_4=y_5=0$. 
The remaining six coordinates $x_2,x_3,y_3,x_5,x_6,y_6$ can be seen as the design parameters of the base and platform, respectively. 

In the remainder of the paper, we represent a 3-RPR configuration as $\mathbf{K}$, where $\mathbf{K}:= (\mathbf{k}_1, \ldots, \mathbf{k}_6)$. Hence, $\mathbb{R}^{12}$ denotes the 12-dimensional configuration space.

\begin{figure}[H]
\centering
\begin{overpic}[width=70mm]{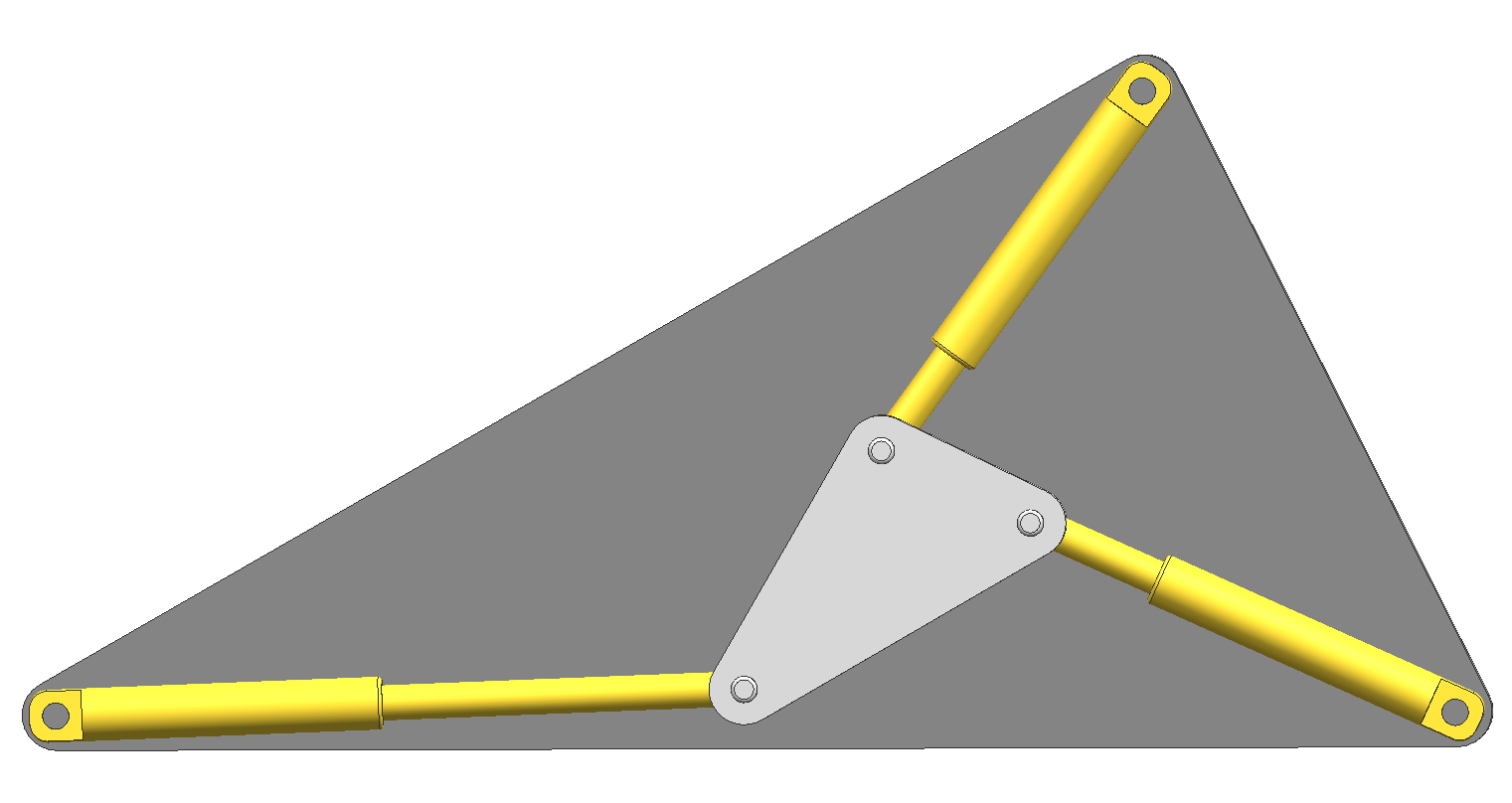}
\put (48,9.5){$\mathbf{k}_{4}$}
\put (-4.1,5){$\mathbf{k}_{1}$}
\put (99,5){$\mathbf{k}_{2}$}
\put (61.5,16.5){$\mathbf{k}_{5}$}
\put (52,22){$\mathbf{k}_{6}$}
\put (76,49){$\mathbf{k}_{3}$}

\end {overpic}
\qquad
 \begin{overpic}[width=70mm]{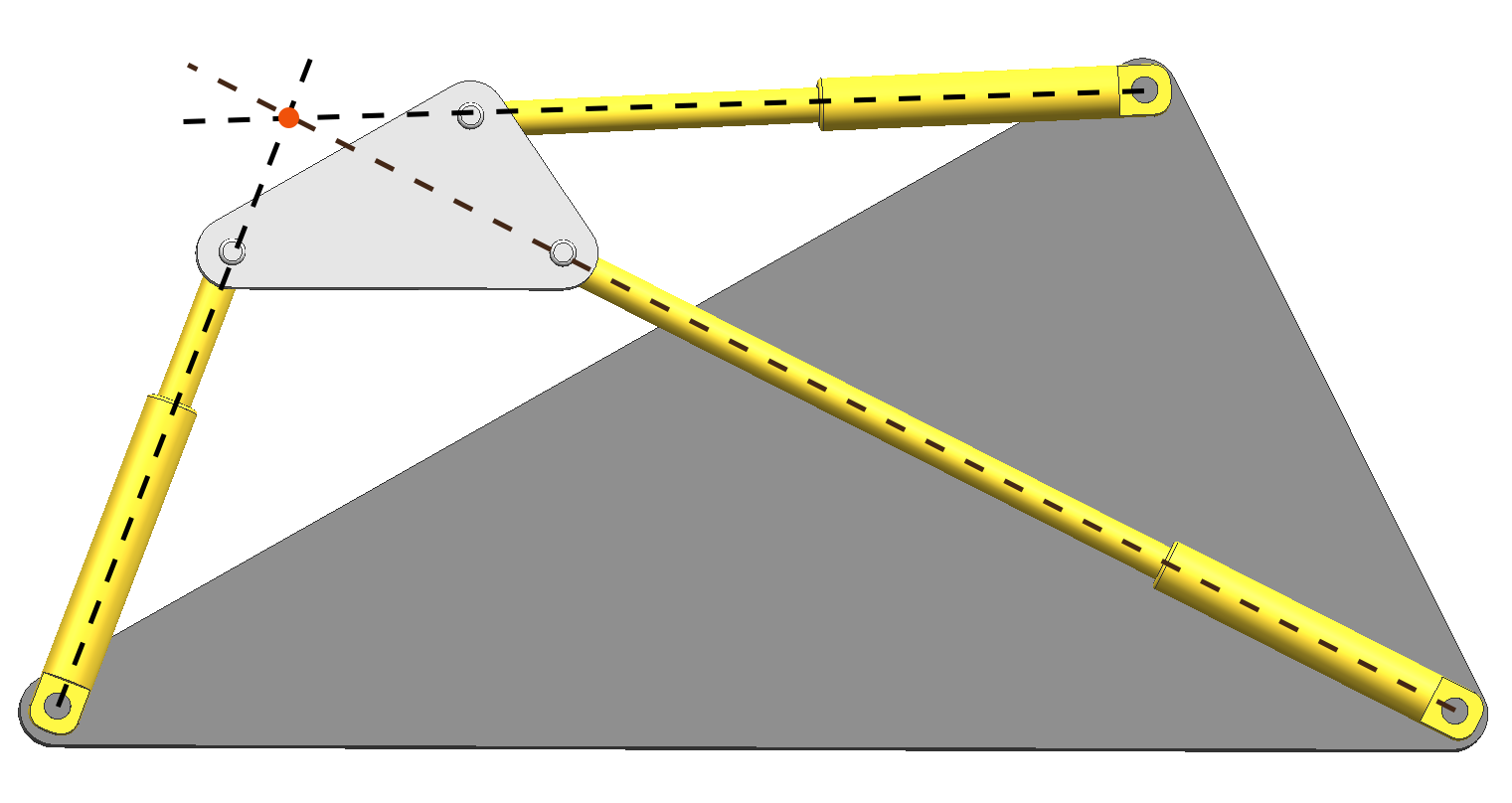}
 \put (-3,5){R}
  \put (3,18){P}
   \put (12,37.5){R}
  \end {overpic}
\caption{(left) 3-RPR planar parallel manipulator with base anchor points $\mathbf{k}_1,\mathbf{k}_2,\mathbf{k}_3$ and platform anchor points 
$\mathbf{k}_4,\mathbf{k}_5,\mathbf{k}_6$, where $\mathbf{k}_{i}$ 
denotes the coordinate vector of the $i$th attachment with respect to the fixed frame. (right) Singular configuration of a 3-RPR manipulator.}
\label{3rpr}
\end{figure}

The 3-RPR manipulator is the best-studied planar parallel mechanism, and their singularities are well understood from various points of view (e.g.\ \cite{merlet2006parallel}). 
It is well known that 3-RPR manipulators 
have at least one uncontrollable instantaneous DoF in singular (also known as shaky or infinitesimal flexible) configurations. Moreover in their neighborhood, minor geometry variations caused by, e.g., backlash in the joints and uncertainties in the actuation of P-joints, respectively,  can significantly affect the resulting manipulator pose.  Another phenomenon that appears close to singularities is that the actuator forces can become very large, which may result in a breakdown of the manipulator \cite{hubert}. 
Therefore, singularities and their vicinities should be avoided, which reasons the interest of the kinematic/robotic community in evaluating the singularity closeness. Even though a lot of performance indices are given in the literature, which can also be seen as closeness indices to singularities, there is still a lack of such distance metrics, where one can distinguish the following two kinds: 
\begin{enumerate}
    \item 
    Intrinsic metrics: The distance to the singularity is measured based on the inner metric of the manipulator, which is determined by the distances $\ell_{ij}$ between $\mathbf{k}_i$ and $\mathbf{k}_j$  with $i<j$ and $(i,j)\in\{(1, 2), (2, 3), (1, 3), (1, 4), (2, 5), (3, 6), (4, 5), (5, 6), (4, 6)\}$. Singularity distance computations for 3-RPR manipulators using intrinsic metrics is presented by the authors in \cite{intrinsic}.
    \item
    Extrinsic metrics: The distance to the singularity is measured based on the metric of the embedding space of the manipulator, which, in our case, is the Euclidean plane. $\mathbb{R}^2$. 
\end{enumerate}

\subsection{Review on singularity closeness indices}\label{sec:review}

Since the early days of robotics, numerous {\it kinematic performance indices}\footnote{A kinematic performance index of a robotic mechanical system converts the capability of the system to transmit motions (at the level of velocities) into a scalar (cf.\cite[page 171]{angeles})} (KPI) have been defined.

By denoting the part of the robotic system, for which manipulation of the mechanical device is built as the end-effector (EE),  one can distinguish the following two types of KPIs: If the index depends on the EE 
in some sense (e.g.\ relative position to the mechanical device, shape, and size of the EE, \ldots) then 
the index is called an {\it EE-dependent KPI}; otherwise it is an {\it EE-independent KPI}.

As a 3-RPR robot cannot transmit the motion standstill of the P-joints to the moving platform within a singular configuration, kinematic performance indices are also sometimes regarded as {\it closeness indices} to singularities. 
Some papers on parallel manipulators pointing out this property explicitly; e.g. \cite{bu2016closeness,bu2017closeness,hartley2001invariant, huang2014force, laryushkin2015estimation,liu2012new,mao2013new,nawratil2009new,takeda1996kinematic, Wolf, Chao, ebrahimi2007actuation,Gomez}.

This interpretation only makes sense for EE-independent KPIs, as the singularity 
variety is solely determined by the geometry of the 3-RPR robot. 
Therefore, the indices \cite {bu2016closeness, bu2017closeness, huang2014force, liu2012new, Chao}, which depend on the choice of a "{\it point of interest}" and therefore on the EE, are not appropriate to evaluate the closeness to singularities. For the same reason the most prominent class of KPIs, namely the {\it condition number indices} (cf.\ \cite{nawratil2009new}), are not suited to indicate the vicinity to singularities.

A further problem of many indices is that they cannot handle the dimensional inhomogeneity which arises for parallel manipulators with rotational and translational dofs. These mixed dimensions cause troubles for the normalization of instantaneous screws which is required for the approaches given in~\cite{laryushkin2015estimation, Wolf,pottmann1998approximation, hartley2001invariant}

\begin{rmk}
    Hubert and Merlet \cite{hubert} suggested using the maximal joint forces to indicate the closeness to a singularity without giving an index\footnote{In \cite{hubert} only regions within the constant-orientation workspace were computed for which the joint forces are lower than a fixed threshold for a given wrench.}.
Such a closeness index was proposed in 
\cite{laryushkin2015estimation} but its computation involves the problematic 
normalization of the wrench. \hfill $\diamond$
\end{rmk}

The following EE-independent KPIs remain as candidates to indicate singularity closeness:

\begin{enumerate}[$\bullet$]
     \item Manipulability (M): This index was introduced by Yoshikawa~ \cite{yoshikawa1985manipulability} and used for a detailed study of 3-RPR manipulators in~\cite{lee1996optimum}. For these planar robots it equals the absolute value of the determinant of the Jacobian matrix $\mathbf{J}$; i.e.\ 
     $M(\mathbf{K})=|\det\mathbf{J}|$. 
      \item Incircle radius (IR): In \cite{ebrahimi2007actuation} it is proposed to take the radius of the incircle of the triangle determined by the carrier lines of the three legs to indicate the closeness to singularities\footnote{Poses with parallel lines have to be treated in a special way (for details see  \cite[Section 6.2]{ebrahimi2007actuation}) which causes jump discontinuities.\label{jump}}.
\begin{rmk}\label{rmk:mani}
The {\it manipulability} and the {\it incircle radius} are only based on the carrier lines of the legs, but
ignore the location of the anchor points on these lines. \hfill $\diamond$
\end{rmk}
    \item Transmission index (TI): In the first step, Takeda and  Funabashi \cite{takeda1995motion} determined the so-called pressure angles,  which equal the angle $\alpha_i\in[0,\tfrac{\pi}{2}]$  between the carrier line of the $i$-th leg and the velocity vector of the corresponding platform anchor point when the $i$-th prismatic link is driven alone. 
   Based on these pressure angles $\alpha_i$ the {\it transmission index} is defined as 
	\begin{equation}\label{eq:TI}
	min(\cos\alpha_1, \ldots,\cos\alpha_3)\in [0;1],
	\end{equation}
 for the case of 3-RPR manipulators. 
 \item 
 In \cite{Gomez} an index similar to the transmission index was introduced for 3-RPR robots, which claims to determine the {\it distance to singularity} (DS) by  
\begin{equation}\label{eq:DPS}
 DS:=1-\frac{2max(\alpha_1,\ldots,\alpha_3)}{\pi}\in [0;1], 
\end{equation}
using the notation from Eq.\ (\ref{eq:TI}). 
 \begin{rmk}\label{rmk:transmission}
 The indices TI and DS do not only take the carrier lines of the legs into account but also the location of the platform anchor points on these lines.  \hfill $\diamond$    
 \end{rmk} 
     \item Control number (CN): The minimum $\mu_-$ and maximum $\mu_+$ of the sum of the squared angular velocities of the passive joints are computed under the side condition, that the sum of the squared velocities of the prismatic joints equals 1. 
	Then the control number is given by $\sqrt{\mu_-/\mu_+}\in[0,1]$ according to \cite{nawratil2009new}.
  \begin{rmk}\label{rmk:control}
In contrast to TI and DS (cf.\ Remark \ref{rmk:transmission}) the control number also takes the location of the base anchor points on the carrier lines of the legs into account and therefore the complete geometry of the parallel manipulator.  \hfill $\diamond$   
 \end{rmk} 
\end{enumerate}

Note that there are also other approaches \cite{Voglewede, Voglewede1} to evaluate 
the vicinity to singularities involving physical quantities (like kinetic energy, and stiffness), which 
are not within the scope of this review as we are only interested in singularity closeness from a pure kinematic/geometric point of view.

\subsection{{Motivation}}\label{sec:motivation}

All the approaches for evaluating the closeness to a singularity mentioned so far are referred to as {\it indices} as the resulting values are not based on a distance function; i.e.\ a metric.
Therefore, from these index values no conclusion can be drawn on the shape and size of a singularity-free region in the workspace around the given configuration. For the determination of this information, the following method is proposed in the literature:

\begin{enumerate}[$\bullet$]
\item
	Li et al \cite{LI20061157} determined the singularity-free zone around a non-singular configuration as follows: 
	They parameterized the 3-dimensional configuration space by $x,y,\zeta$, where 
	$x,y$ are the two position variables and  $\zeta$ the orientation angle. 	
	Then point $(x,y,\zeta)$ of the singularity variety which minimizes the function 
	\begin{equation}
	d =(x-x_0)^2+(y-y_0)^2,
        \label{compare1}
	\end{equation}
	where $(x_0,y_0)^T$ corresponds to the position of the given non-singular configuration. 
	Note that the orientation $\zeta_0$ of the given configuration is not taken into account thus $\sqrt{d}$ 
	is the radius of the circular directrix centered in  $(x_0,y_0)^T$ of the ``{\it singularity-free cylinder}''. 
	This concept was also used in \cite{Abbasnejad}.

 Note that $\sqrt{d}$ cannot give the distance to the singularity as for a given non-singular configuration also $\sqrt{d}=0$ can hold. 
 But this method implies a kind of closest singular configuration, which can be seen as the contact point of the singularity-free cylinder and the singularity variety. 
\end{enumerate}

By computing the singularity distance using an extrinsic metric one does not only get the distance to the singularity variety (which can also be used as a KPI) but also the radius of a singularity-free sphere in the configuration space. In addition, we receive the closest singular configuration, i.e.\ the contact point of the singularity-free sphere and the singularity variety.

Following the argumentation of Wolf and Shoham \cite{Wolf} the closest pencil of lines to the legs of the 3-RPR robot, which is determined by the legs of the closest singular configuration, provides additional information and a better physical understanding of the motion the manipulator tends to perform in a singular configuration or its neighborhood. 
Moreover, the singularity-free spheres and the associated closest singular configurations can be used for path-planning \cite{LI20061157,Abbasnejad} and path-optimization \cite{RASOULZADEH2020104002}.

Note that the computation of 
the distance to the next singularity (or rather the radius of the largest singularity-free sphere) for a fixed orientation or a fixed position, respectively, are further concepts known in kinematics \cite{li2007determination, nag2021singularity, mao2013new}. But 
from these two pieces of separated information, no conclusion about the distance to the next singularity (i.e.\ the radius of the largest singularity-free sphere) within the configuration space of robots with mixed dofs can be drawn.

Instead of determining the configuration having the largest singularity-free sphere over all constant-orientation workspaces \cite{kaloorazi2016determining}, the 
proposed extrinsic metrics can also be used to 
find the configuration possessing the largest singularity-free sphere in the manipulator's total workspace. The center of this sphere can be regarded as the robot's home configuration and the maximization of its radius can be used as an objective for optimizing the design of the manipulator (cf.\ \cite{jiang2006maximal}). This concludes the motivation for our research on singularity distances using extrinsic metrics, which are reviewed next.

\subsection{Review on singularity distance computation using an extrinsic metric}\label{review}

	The second author presented in \cite[Eq.\ (3)]{Nawratil_2019} the following extrinsic metric 
    to compute the distance between two configurations $\mathbf{K}$ and $\mathbf{K}'$:
    \begin{equation}
    D^{\bullet}_{\bullet}(\mathbf{K},\mathbf{K}')^{2} = {\frac{1}{6}\sum_{i=1}^{6}||{\mathbf{k}'_{i}}-\mathbf{k}_{i}||^2}.
    \label{eq:metric0}
    \end{equation}
The upper and lower bullets in Eq.~(\ref{eq:metric0}) indicate that the distance is measured between the corresponding platform and base anchor points of the two configurations. This preliminary extrinsic metric was already used for determining the singularity distance, which has the following physical interpretation according to  \cite[Theorem 1]{Nawratil_2019}: {\it If the radial clearance of the six passive R-joints is smaller than $D^{\bullet}_{\bullet}$ then the parallel manipulator is guaranteed to be not in a singular configuration.} 
In the following, we give details on the computation of this singularity distance.

From the line-geometric point of view a 3-RPR configuration $\mathbf{K}'$ is singular if and only if the carrier lines of the three legs intersect in a common point (cf.\ Fig.~ \ref{3rpr}) or are parallel. This is equivalent to the fact that the Pl\"ucker coordinates of these lines are linearly dependent, resulting in the algebraic characterization in the form of 
so-called {\it singularity variety} $V={0}$ with 
 \begin{equation}\label{variety}
\mathrm{V} = \det \mathbf{V}(\mathbf{K}'), \quad  {\mathbf{V}(\mathbf{K}')}:=\begin{pmatrix}
 \mathbf{k}'_{4}-\mathbf{k}'_{1} & \phantom{-}\mathbf{k}'_{5}-\mathbf{k}'_{2} & \phantom{-}\mathbf{k}'_{6}-\mathbf{k}'_{3} \\ 
\det\left(\mathbf{k}'_{1},\mathbf{k}'_{4}-\mathbf{k}'_{1}\right)& \phantom{-}\det\left(\mathbf{k}'_{2}, \mathbf{k}'_{4}-\mathbf{k}'_{2}\right)& \phantom{-}\det\left(\mathbf{k}'_{3}, \mathbf{k}'_{4}- \mathbf{k}'_{3}\right) 
 \end{pmatrix}.
\end{equation}

The {\it singularity polynomial} $V$ is of degree 4 in the coordinates  $(c_i,d_i)^T$ of the points $\mathbf{k}_i'$ with respect to the fixed frame (for $i=1,\ldots,6$). Therefore, it can be seen as a quartic hypersurface in the configuration space $\mathbb{R}^{12}$.

The minimization of $D^{\bullet}_{\bullet}(\mathbf{K},\mathbf{K}')$ given in   Eq.\ (\ref{eq:metric0})
under the side condition $V=0$ of Eq.\ (\ref{variety})  is a constrained optimization problem, which can be formulated as the Lagrange function $L$ with
\begin{equation}
L =D^{\bullet}_{\bullet}(\mathbf{K}, \mathbf{K}')^2+\lambda V,
 \label{eq:distance}
\end{equation}
\noindent where $\lambda$ is the Lagrange multiplier. 

To find the closest singular configuration, one needs to compute the critical points of $L$ as the zero sets of the partial derivatives of $L$ with respect to the 13 involved unknowns: $c_1, \ldots, c_6$, $d_1, \ldots, d_6$ and $\lambda$. In \cite{Nawratil_2019}, the resulting systems of square polynomial equations were solved using symbolic computations (Gr\"obner basis methods) implemented in a computer algebra system (software \texttt{Maple}). Subsequently, the configuration corresponding to the critical point that yields the smallest value for $D^{\bullet}_{\bullet}(\mathbf{K},\mathbf{K}')$ has been designated as the closest singular configuration (cf. Fig.~\ref{measure}).

\begin{figure}[h!]
    \begin{center}
        \begin{overpic}[width=12.5cm]{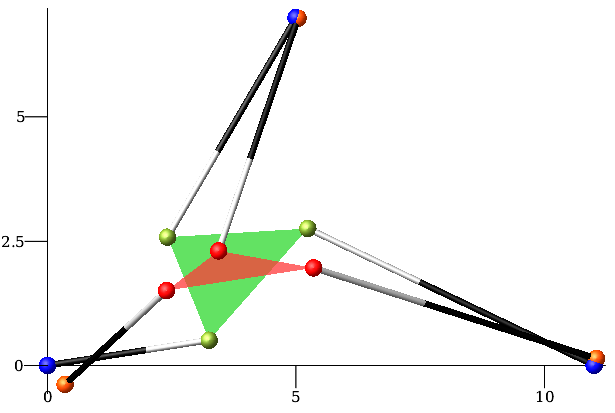}
       \put(4,10){$\mathbf{k}_{1}$}
        \put(12,2){{{$\mathbf{k}'_{1}$}}}
         \put(97,4){{$\mathbf{k}_{2}$}}
         \put(96,12){{$\mathbf{k}'_{2}$}}
           \put(50,18){{$\mathbf{k}'_{5}$}}
             \put(48,33){{$\mathbf{k}_{5}$}}
              \put(36,10){{$\mathbf{k}_{4}$}}
               \put(25,23){{$\mathbf{k}'_{4}$}}
              \put(24,31){{$\mathbf{k}_{6}$}}
              \put(32,29){{$\mathbf{k}'_{6}$}}
                \put(43,65){{$\mathbf{k}_{3}$}}
                  \put(50,63){{$\mathbf{k}_{3}'$}}
                   
        \end{overpic}
        \end{center}
 \caption{
Closest singular configuration $\mathbf{K}'$ (red) for a given non-singular 3-RPR manipulator pose $\mathbf{K}$ (green) with  $D^{\bullet}_{\bullet}(\mathbf{K}, \mathbf{K}')=0.7541454$ units. $\mathbf{K}$ corresponds to the configuration given in \cite[Section 3]{Nawratil_2019} for $\phi = 0.8471710528$ radians.}
\label{measure}
\end{figure}

\begin{rmk}
In~\cite{jaquier2021geometry} it is pointed out that a Riemannian distance between the manipulability ellipsoids of two configurations can be well-defined; apart from its dependence on the operation point and the dimensional inhomogeneity in the case of mixed dof robots.
Nevertheless, this metric cannot be used to compute a singularity distance as it is only defined for non-singular configurations (see also \cite{MARIC2021103865}). \hfill $\diamond$
\end{rmk}

\subsection{Contribution and outline}\label{sec:moti}
A limitation of the approach presented in \cite{Nawratil_2019} concerns the computational efficiency, caused by Gr\"obner basis calculations. As we aim to compute the singularity distance, (incl. the closest singularity), along a one-parametric motion of the manipulator, we use the numerical algebraic geometry tool of homotopy continuation, which is implemented in the freeware \texttt{Bertini}~\cite{BHSW06}. To simplify these computations for users, we have developed an open-source interface that connects the software packages \texttt{Maple}, \texttt{Bertini} \cite{bates2013numerically}, and \texttt{Paramotopy} \cite{bates2018paramotopy}.

Moreover, the paper at hand also fills the gap that singular points of the singular variety $V=0$ are excluded from the Lagrangian approach given in Eq.\ (\ref{eq:distance}) as pointed out in~\cite[ Section 3.2.2]{RASOULZADEH2020104002}. We identify these points, which are considered separately within the presented computational pipeline. 

A further limitation of the distance metric $D^{\bullet}_{\bullet}$ given in Eq.\ (\ref{eq:metric0}) is that it does not take into account how the six anchor points are connected combinatorially. Especially for the three platform/base anchor points one can distinguish two basic design options; namely if they are considered as the vertices of a triangular plate ($\blacktriangle$) or as the pin-joints of a triangular bar structure ($\vartriangle$). 
Note that in the latter case, additional shaky configurations arise because of the collinearity of the three base/platform points.

In the general case, the structural components (bars and plates) are assumed to be made of deformable material allowing not only a variation of the leg lengths\footnote{Note that the legs are also considered as bars. } but also a change of the platform/base geometry by an affine transformation. 
We also take the option into account that the  platform/base is made of a non-deformable material, which is indicated by the symbol 
$\hrectangleblack$, restricting affine transformations to Euclidean motions.
Thus we end up with three possibilities ($\blacktriangle, \vartriangle, \hrectangleblack$)
for the platform/base, which results in a total of nine interpretations illustrated in Fig.\ \ref{inter}.

\begin{figure}[t]
    \begin{center}
       \begin{overpic}[width=14cm]{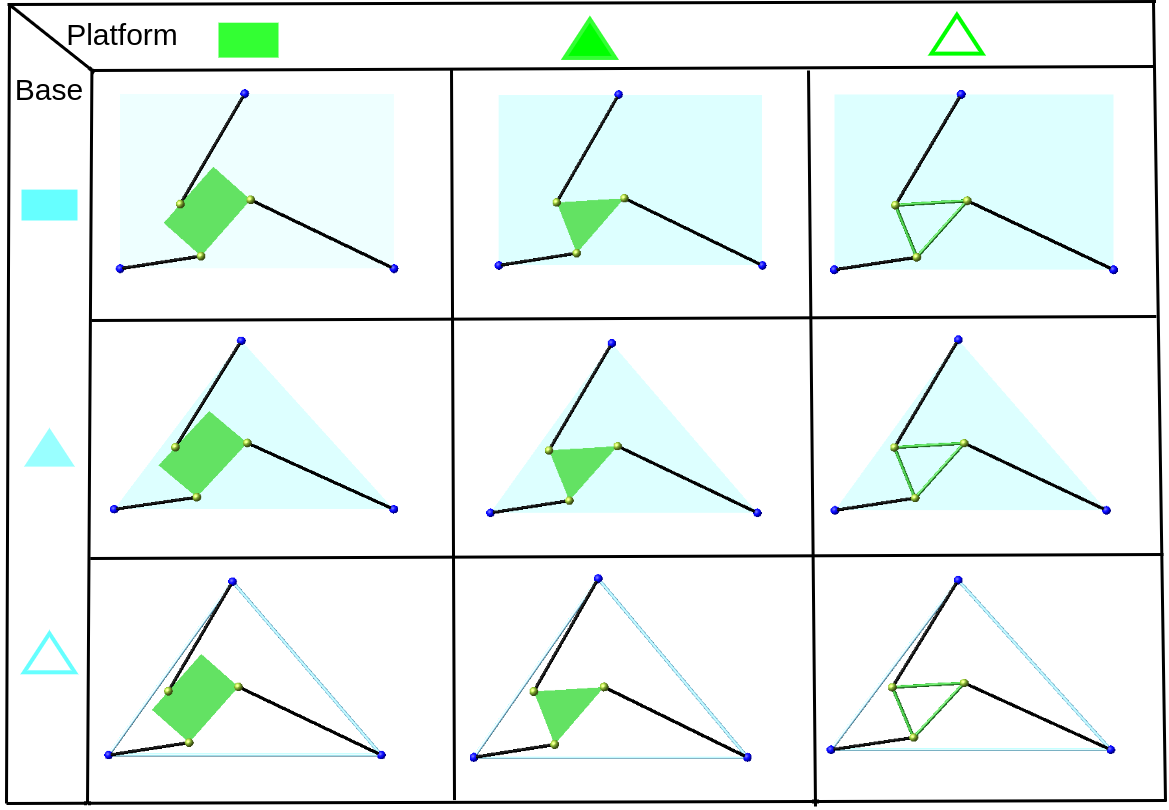}
     \end{overpic}
       \end{center}
     \caption{An illustration of the nine considered interpretations of 3-RPR manipulators arranged in a $3\times 3$ matrix.}
      \label{inter}  
\end{figure}

\begin{rmk}\label{rmk:sensitivity}
Note that we have distinguished the same nine interpretations in the already mentioned study \cite{intrinsic} using intrinsic metrics. In the paper at hand, we present the corresponding extrinsic metrics. 
These paired intrinsic and extrinsic metrics can be used for quantifying the change in the shape of the manipulator implied by variations of the inner geometry, which contributes to the 
topic of sensitivity analysis (e.g.~\cite{caro2009sensitivity,goldsztejn2016three}). 
\hfill $\diamond$ 
\end{rmk}

The combinatorial structure of the nine interpretations of Fig.\ \ref{inter}  is taken into account by constructing extrinsic metrics, which rely on the distance computation between corresponding structural components (bars or plates) outlined in Section \ref{sec:extrinsic}. 
The constrained optimization problems related to these extrinsic metrics are set up in Section \ref{generalcase}.
A geometric characterization of the singular points of the constrained varieties is obtained in Section \ref{sec:singpoints}. 
The developed computational pipeline for computing the singularity distance along a one-parametric motion of the manipulator is given in Section \ref{results}. In Section~\ref{numericalexample} we present a numerical example, and compare the obtained singularity distances with existing approaches already pointed out in Sections \ref{sec:review} and \ref{sec:motivation}
and discuss the results. 
We conclude the paper in Section \ref{end}.
\section{Extrinsic metric formulation}\label{sec:extrinsic}

We begin by setting up the distance function between the geometric elements (line segments and triangles) which can be associated with the structural components (bars and plates) of the manipulator.

According to~\cite{chen1999approximation}, the squared distance between two oriented line-segments $\vert_{ij}=(\mathbf{k}_{i},\mathbf{k}_{j})$  and $\vert_{ij}^{'}=(\mathbf{k}'_{i},\mathbf{k}'_{j})$ can be defined as as shown in (cf.~Fig.~\ref{fig:test1} left):

\begin{figure}
\centering
  \begin{overpic}[width=.3\linewidth]{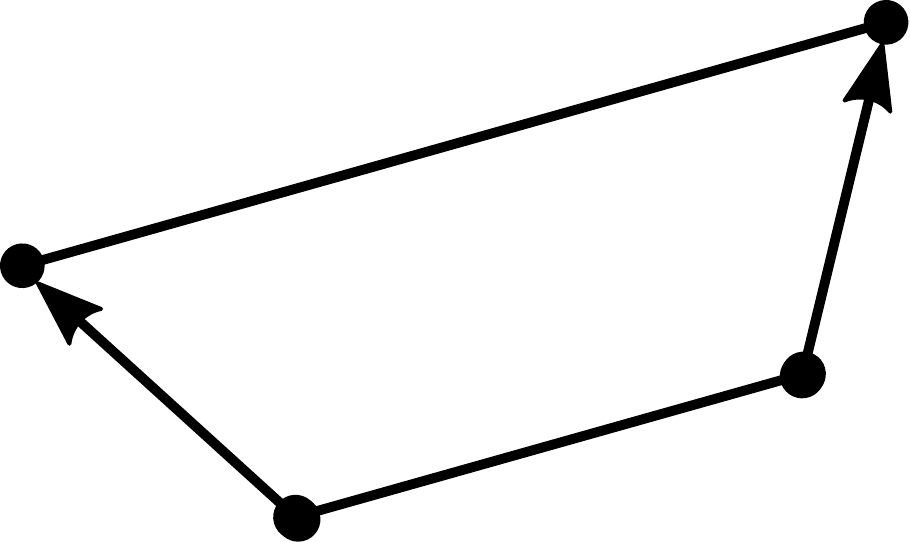}
  \put(55,15){$\vert_{ij}$}
  \put(28,8){$\mathbf{k}_{i}$}
  \put(-6.8,33){$\mathbf{k}'_{i}$}
  \put(80,23.5){$\mathbf{k}_{j}$}
  \put(98,60){$\mathbf{k}'_{j}$}
  \put(34,46.8){$\vert_{ij}^{'}$}
  \end {overpic}
\qquad \qquad
  \begin{overpic}[width=.3\linewidth]{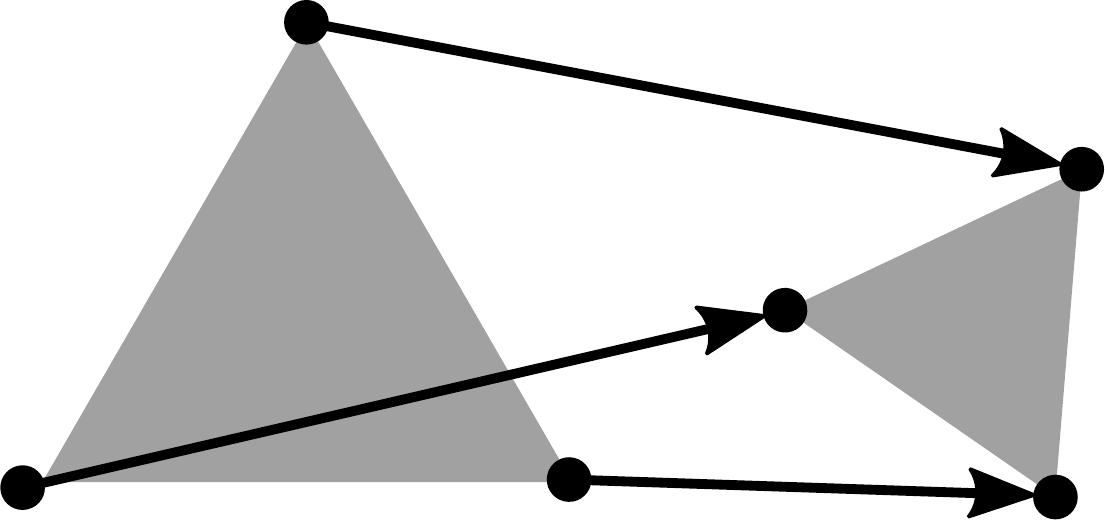}
  \put(52.5,7.5){$\mathbf{k}_{j}$}
  \put(-2,7.7){$\mathbf{k}_{i}$}
  \put(26,48.6){$\mathbf{k}_{k}$}
   \put(81.5,15){$\blacktriangle_{ijk}'$}
    \put(25,20){$\blacktriangle_{ijk}$}
  \put(98,0){$\mathbf{k}'_{j}$}
  \put(97,34.5){$\mathbf{k}'_{k}$}
    \put(67,24.5){$\mathbf{k}'_{i}$}
  \end{overpic}
  \caption{(left) Similarity mapping between line-segments. (right) Affine mapping between triangles.}
  \label{fig:test1}
\end{figure}

\begin{align}
d\left(\vert_{ij},\vert'_{ij}\right)^2=
\frac{1}{3}\Big[
\|\mathbf{k}_{i}-\mathbf{k}'_{i}\|^{2}+ \|\mathbf{k}_{j}-\mathbf{k}'_{j}\|^{2}+(\mathbf{k}_{i}-\mathbf{k}'_{i})^{T} (\mathbf{k}_{j}-\mathbf{k}'_{j})\Big].
\label{metricdp}
\end{align}

This distance metric equals the square root of the mean of squared distances of corresponding\footnote{The correspondence is given by the associated similarity transformation between  $\vert_{ij}$ and  $\vert_{ij}'$.} points over the entire line segment (see also \cite[Section 4.1]{survey}. 
One can extend this idea to compute the squared distance between two triangles $\blacktriangle_{ijk} = (\mathbf{k}_{i},\mathbf{k}_{j}, \mathbf{k}_{k})$ and $\blacktriangle'_{ijk}=(\mathbf{k}'_{i},\mathbf{k}'_{j}, \mathbf{k}'_{k})$ with  $i<j<k$ (cf.~Fig.~\ref{fig:test1} right) 
which yields (see ~\ref{minterpretation} for the derivation)
\begin{equation}\label{metricdp1}
d(\blacktriangle_{ijk}, \blacktriangle'_{ijk})^2=\frac{1}{6}\Big[\sum_{x=i,j,k} \|\mathbf{k}_x-\mathbf{k}'_x \|^{2} +(\mathbf{k}_{i}-\mathbf{k}'_{i})^{T} (\mathbf{k}_{k}-\mathbf{k}'_{k})
+(\mathbf{k}_{i}-\mathbf{k}'_{i})^{T} (\mathbf{k}_{j}-\mathbf{k}'_{j})+ (\mathbf{k}_{k}-\mathbf{k}'_{k})^{T} (\mathbf{k}_{j}-\mathbf{k}'_{j})\Big],
\end{equation}
where the index set $\{i,j,k\}$ equals either $\{1,2,3\}$ or $\{4,5,6\}$. Therefore Eq.\ (\ref{metricdp1}) equals the integral of the squared distance of corresponding\footnote{The correspondence is given by the associated affine transformation between  $\blacktriangle_{ijk}$ and  $\blacktriangle_{ijk}'$.} points over the triangle, divided by its area. 

By using Eqs.\ (\ref{metricdp}) and (\ref{metricdp1}) the squared extrinsic distance functions $D_\star^\circ(\mathbf{K},\mathbf{K}')^2$ with $\circ,\star\in\left\{\blacktriangle, \vartriangle, \hrectangleblack\right\}$ 
can be defined as the sum of the squared distances between corresponding deformable structural elements divided by the number of summands, which yields the following expressions:

\begin{align}
  D_{\rectangleblack}^{\rectangleblack}(\mathbf{K},\mathbf{K}')^2 &=
 \frac{1}{3} \sum_{(i, j) \in I_{1}} {d\left(\vert_{ij},\vert'_{ij}\right)^2},
   \label{Ex1} \\
    D^{\blacktriangle}_{\rectangleblack} (\mathbf{K},\mathbf{K}')^2 &= \frac{1}{4} \left[ \sum_{(i, j) \in I_{1}}  {d\left(\vert_{ij},\vert'_{ij}\right)^2} + {d(\blacktriangle_{456}, \blacktriangle_{456}')^2}\right],
    \label{Ex2}\\
     D_{\rectangleblack}^{\vartriangle}(\mathbf{K},\mathbf{K}')^2 &= \frac{1}{6} \sum_{(i, j) \in I_{2}} {d\left(\vert_{ij},\vert'_{ij}\right)^2},
      \label{Ex3} \\
   D^{\rectangleblack}_{\blacktriangle} (\mathbf{K},\mathbf{K}')^2 &=
   \frac{1}{4} \left[ \sum_{(i, j) \in I_{1}} {d\left(\vert_{ij},\vert'_{ij}\right)^2}  + {d(\blacktriangle_{123}, \blacktriangle_{123}')^2}\right],
    \label{Ex4} \\ 
   D_{\vartriangle}^{\rectangleblack}(\mathbf{K},\mathbf{K}')^2 &=\frac{1}{6} \sum_{(i, j) \in I_{3}} {d\left(\vert_{ij},\vert'_{ij}\right)^2} ,
    \label{Ex5} \\
     D_{\blacktriangle}^{\blacktriangle}(\mathbf{K},\mathbf{K}')^2 &=\frac{1}{5} \left[{ \sum_{(i, j)\in I_{1}}  d\left(\vert_{ij},\vert'_{ij}\right)^2}  +  d(\blacktriangle_{123}, \blacktriangle_{123}')^2 + d(\blacktriangle_{456}, \blacktriangle_{456}')^2 \right],
    \label{Ex6}
\end{align}
\begin{align}
   D_{\blacktriangle}^{\vartriangle}(\mathbf{K},\mathbf{K}')^2 &= \frac{1}{7}\left[{ \sum_{(i, j)\in I_{2}} d\left(\vert_{ij},\vert'_{ij}\right)^2} +{d(\blacktriangle_{123}, \blacktriangle_{123}')^2}\right],
 \label{Ex7} \\
  D_{\vartriangle}^{\blacktriangle}(\mathbf{K},\mathbf{K}')^2 &=\frac{1}{7}\left[{ \sum_{(i, j)\in I_{3}} d\left( \vert_{ij}, \vert_{ij}^{'}\right)^2}+{d(\blacktriangle_{456}, \blacktriangle_{456}')^2}\right],
 \label{Ex8} \\
  D_{\vartriangle}^{\vartriangle}(\mathbf{K},\mathbf{K}')^2 &=\frac{1}{9} \sum_{(i, j) \in I_{4}} {d\left( \vert_{ij}, \vert_{ij}^{'}\right)^2},
   \label{Ex9}
 \end{align}
with 
 \begin{align}
    I_1 &= \{(1, 4), (2,5), (3,6)\}, \\
    I_2 &= \{(1, 4), (2, 5), (3, 6), (4, 5), (4, 6), (5, 6)\},\\
    I_3 &= \{(1, 4), (2, 5), (3, 6), (1, 2), (2, 3), (1, 3)\}, \\
     I_4 &= \{(1, 2), (2, 3), (1, 3), (1, 4), (2, 5), (3, 6), (4, 5), (5, 6), (4, 6)\}.
 \end{align}

Note that we restrict to deformable structural components in order to keep the analogy to the corresponding intrinsic metrics mentioned in Remark \ref{rmk:sensitivity}.


\section{The constrained optimization problem for computing the singularity distance}\label{generalcase}

\subsection{Closest configuration on the singularity variety}\label{sec:singvar}

In the following, we set up the optimization problems for computing the closest singularity on the singularity variety $V=0$ with respect to the nine extrinsic metrics presented in Section \ref{sec:extrinsic}. 

\begin{itemize}
\item  
$D_{\rectangleblack}^{\rectangleblack}(\mathbf{K},\mathbf{K}')$: In this case, the transformations of the platform and the base are both restricted to the Euclidean motion group SE(2). 
As the so-called {\it point-based representation} of SE(2) has the best computational performance according to~\cite{kapilavai2020homotopy}, we use it for our calculations. The coordinates of $\mathbf{k}'_i$ can be given in dependence of $\mathbf{k}'_{i-1}$ and $\mathbf{k}'_{i-2}$ for $i=3,6$ by
\begin{multicols}{2}
\begin{equation}
\begin{pmatrix}
    c_3\\
    d_3
\end{pmatrix} =\begin{pmatrix}
\frac{(c_{2}-c_{1})x_{3}+(d_{1}-d_{2})y_{3}+c_{1}x_{2}}{x_{2}}\\
\frac{(d_{2}-d_{1})x_{3}+(c_{2}-c_{1})y_{3}+d_{1}x_{2}}{x_{2}}
\end{pmatrix},
\label{PBR1}
\end{equation}\quad 
\begin{equation}
\begin{pmatrix}
    c_6\\
    d_6
\end{pmatrix} =\begin{pmatrix}
\frac{(c_{5}-c_{4})x_{6}+(d_{4}-d_{5})y_{6}+c_{4}x_{5}}{x_{5}}\\
\frac{(d_{5}-d_{4})x_{6}+(c_{5}-c_{4})y_{6}+d_{4}x_{5}}{x_{5}}
\end{pmatrix}
\label{PBR2}
\end{equation}
\end{multicols}
under the side condition that the distance between  $\mathbf{k}'_{i-1}$ and $\mathbf{k}'_{i-2}$ does not change, which is expressed by the conditions $E_B=0$ and $E_P=0$ with
\begin{multicols}{2}
\begin{equation}
 E_B=\|\mathbf{k}'_{2}-\mathbf{k}'_{1}\|^{2}-\|\mathbf{k}_{2}-\mathbf{k}_{1}\|^2,
 \label{base1}
 \end{equation}\quad
\begin{equation}
E_P=\|\mathbf{k}'_{5}-\mathbf{k}'_{4}\|^{2}-\|\mathbf{p}_{5}-\mathbf{p}_{4}\|^2.
\label{platform1}
\end{equation}
\end{multicols}
Note that due to Lemma \ref{lem:app} given in \ref{app:lem} we can always assume without loss of generality that there exists a labeling of our 3-RPR manipulator such that $x_2\neq 0$ and $x_5\neq 0$ hold, which is needed for the properness of Eqs.\ (\ref{PBR1}) and (\ref{PBR2}).  

As a consequence, the Lagrange function $L$ reads as 
\begin{equation}
L = D_{\rectangleblack}^{\rectangleblack}(\mathbf{K},\mathbf{K}')^2+\lambda V+\mu E_B+\kappa E_P, \label{formulation1}
\end{equation}
 where $\lambda$, $\mu$ and $\kappa$  are the Lagrange multipliers. 
 \item  
 $D_{\rectangleblack}^\circ(\mathbf{K},\mathbf{K}')$ with $\circ \in \{{\blacktriangle}, {\vartriangle}\}$: In these two cases only the base is transformed by a Euclidean displacement, thus the Lagrange function reads as:
 \begin{equation} \label{side1}
 L=D_{\rectangleblack}^\circ(\mathbf{K},\mathbf{K}')^2+\lambda V+\mu E_B.
\end{equation}

 \item
$D^{\rectangleblack}_\star(\mathbf{K},\mathbf{K}')$ with $\star \in \{{\blacktriangle}, {\vartriangle}\}$: 
In these two cases, only the platform is transformed by a Euclidean displacement, thus the Lagrange function reads as:
 \begin{equation} \label{side2}
 L=D^{\rectangleblack}_\star(\mathbf{K},\mathbf{K}')^2+\lambda V+\kappa E_P.
\end{equation}

 \item  
 $D^{\circ}_\star(\mathbf{K,K'})$ with $\circ,\star \in \{{\blacktriangle}, {\vartriangle}\}$:
In these four cases the platform, as well as the base, are transformed affinely thus the Lagrangian reads as: 
\begin{equation}\label{side3}
L = D^{\circ}_\star(\mathbf{K},\mathbf{K}')^2 +\lambda V.
\end{equation}
\end{itemize}

The unknowns appearing in the given Lagrange functions $L$ are summarized in Table \ref{extab1}.

As already mentioned in Section \ref{sec:moti}, we get additional singular configurations if the platform/base is interpreted as a triangular bar structure. The optimization problem for computing the closest configurations with collinear platform/base anchor points is discussed in the next subsection. 

\begin{table}[t]
\caption{Summary of the Lagrange formulations of the constrained optimization problems of Sections \ref{sec:singvar} and \ref{collinearity}}
\centering
\begin{tabular}{|l||l|l|}
\hline
Extrinsic metric & unknowns in the Lagrangian & $\#$ unknowns \\ \hline
$D_{\rectangleblack}^{\rectangleblack}(\mathbf{K},\mathbf{K}')$                &   $c_{1},d_{1},c_{2},d_{2},c_{4},d_{4},c_{5},d_{5},
               \lambda, \kappa, \mu$  &  11 \\ \hline
$D_{\rectangleblack}^\circ(\mathbf{K},\mathbf{K}')$ with $\circ \in \{{\blacktriangle}, {\vartriangle}\}$               &  $c_{1},d_{1},c_{2},d_{2},c_{4},d_{4},c_{5},d_{5},c_{6},d_{6}, \lambda, \mu $        &  12 \\ \hline
$D^{\rectangleblack}_\star(\mathbf{K},\mathbf{K}')$ with $\star \in \{{\blacktriangle}, {\vartriangle}\}$               &  $c_{1},d_{1},c_{2},d_{2},c_{3},d_{3}, c_{4},d_{4},c_{5},d_{5}, \lambda, \kappa $          &  12 \\ \hline
$D^{\circ}_\star(\mathbf{K,K'})$ with $\circ,\star \in \{{\blacktriangle}, {\vartriangle}\}$ &
$c_{1},d_{1},c_{2},d_{2},c_{3},d_{3}, c_{4},d_{4},c_{5},d_{5}, c_{6},d_{6}, \lambda$        & 13  \\ \hline
\end{tabular}
\label{extab1}
\end{table}
\subsection{Closest configuration on the collinearity variety}\label{collinearity}

If the base or platform is interpreted as a triangular bar-structure $(\vartriangle)$ the additional singularities can be characterized 
algebraically by the condition $C_B=0$ and $C_P=0$, respectively, with
\begin{equation}
\mathrm{C}_B =\det\begin{pmatrix}
1 & 1 & 1 \\ 
c_{1} & c_{2} &  c_{3} \\
d_{1} & d_{2} &  d_{3} \\
\end{pmatrix},\quad
\mathrm{C}_P =\det\begin{pmatrix}
1 & 1 & 1 \\ 
c_{4} & c_{5} &  c_{6} \\
d_{4} & d_{5} &  d_{6} \\
\end{pmatrix}.
\label{coll}
\end{equation}
Note that the so-called {\it collinearity varieties} 
$C_B=0$ and $C_P=0$ are quadratic in $c_1,\ldots,c_6,d_1,\ldots,d_6$. 

In the following, we set up the optimization problems for computing the closest singularity on the collinearity variety $C_{B}=0$ and $C_{P}=0$, respectively. 

\begin{itemize}
    \item  
 $D_{\rectangleblack}^{\vartriangle}(\mathbf{K},\mathbf{K}')$: In this case, only the platform can deform, thus the Lagrange function with collinearity condition $C_{P}=0$ reads as: 
  \begin{equation} \label{c1}
 L=D_{\rectangleblack}^{\vartriangle}(\mathbf{K},\mathbf{K}')^2+\lambda C_{P}+\mu E_B.
\end{equation}

\item  $D_{\vartriangle}^{\rectangleblack}(\mathbf{K},\mathbf{K}')$: In this case, only the base can deform, thus the Lagrange function with collinearity condition $C_{B}=0$ reads as: 

  \begin{equation} \label{c2}
 L=D^{\rectangleblack}_{\vartriangle}(\mathbf{K},\mathbf{K}')^2+\lambda C_{B}+\kappa E_P.
\end{equation}
 
\item  $D^{\vartriangle}_\star(\mathbf{K,K'})$ with $\star \in \{{\blacktriangle}, {\vartriangle}\}$:
In these two cases the platform, as well as the base, are transformed affinely thus the Lagrangian reads as: 
\begin{equation}\label{c3}
L = D^{\vartriangle}_\star(\mathbf{K},\mathbf{K}')^2 +\lambda C_{P}.
\end{equation}

\item  $D^{\circ}_{\vartriangle}(\mathbf{K,K'})$ with $\circ \in \{{\blacktriangle}, {\vartriangle}\}$:
In these two cases the platform, as well as the base, are transformed affinely thus the Lagrangian reads as: 
\begin{equation}\label{c4}
L = D^{\circ}_\vartriangle(\mathbf{K},\mathbf{K}')^2 +\lambda C_{B}.
\end{equation}
\end{itemize}

The number of unknowns for the Lagrange optimization problems for the mentioned above four cases either with collinearity variety $C_{B}=0$ and  $C_{P}=0$ is the same as summarized in Table~\ref{extab1}.

\begin{thm}\label{regression1}
    The platform anchor points of the closest configuration $\mathbf{K'}$ on the collinearity variety $C_{P}=0$ with respect to the extrinsic metric $D_{\star}^{\vartriangle}(\mathbf{K},\mathbf{K}')$ with $\star \in \{{\blacktriangle}, {\vartriangle}\}$ are the pedal points of $\mathbf{k}_{4},\mathbf{k}_{5},\mathbf{k}_{6}$ on their line of regression (cf. Fig.~\ref{example}(b) and~\ref{example}(c)). 
    Moreover, the distance  $D_{\star}^{\vartriangle}(\mathbf{K},\mathbf{K}')$  only depends on the geometry of the manipulator; i.e.\ it is pose independent.
\end{thm}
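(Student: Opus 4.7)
We work with the perturbation vectors $\mathbf{u}_i := \mathbf{k}_i - \mathbf{k}'_i$, so that by Eqs.\ (\ref{metricdp})--(\ref{metricdp1}) the squared metric $Q := D_\star^\vartriangle(\mathbf{K},\mathbf{K}')^2$ is a positive-definite quadratic form in $\mathbf{u}_1,\ldots,\mathbf{u}_6$, while the collinearity constraint $C_P=0$ involves only $\mathbf{k}'_4,\mathbf{k}'_5,\mathbf{k}'_6$. Since the three base variables $\mathbf{u}_1,\mathbf{u}_2,\mathbf{u}_3$ are unconstrained, the first step of the plan is to eliminate them via their linear first-order conditions, producing a reduced quadratic objective $\bar{Q}(\mathbf{u}_4,\mathbf{u}_5,\mathbf{u}_6)$.

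The key observation is that $Q$ carries an $S_3$-symmetry which simultaneously permutes the leg index pairs $(1,4),(2,5),(3,6)$: all summands in $D_\blacktriangle^\vartriangle$ and $D_\vartriangle^\vartriangle$ are invariant under such a relabeling. Hence $\bar{Q}$ is invariant under permutations of $(\mathbf{u}_4,\mathbf{u}_5,\mathbf{u}_6)$ and must have the form
\begin{equation*}
\bar{Q}(\mathbf{u}_4,\mathbf{u}_5,\mathbf{u}_6)=a\sum_{j=4}^{6}\|\mathbf{u}_j\|^{2}+b\,\|\mathbf{u}_4+\mathbf{u}_5+\mathbf{u}_6\|^{2},
\end{equation*}
with constants $a,b$ satisfying $a>0$ and $a+3b>0$, inherited from the positive-definiteness of $Q$ via the Schur complement.

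Now fix a candidate line $g$ with unit normal $\mathbf{n}$ and signed offset $c$, and decompose $\mathbf{u}_j=\alpha_j\mathbf{n}+\beta_j\mathbf{d}$ with $\mathbf{d}\perp\mathbf{n}$. The constraint $\mathbf{k}'_j\in g$ forces $\alpha_j=\mathbf{n}^{T}\mathbf{k}_j-c$ while the $\beta_j$ remain free, and $\bar{Q}$ splits orthogonally into an $\alpha$-part and a tangential $\beta$-part $a\sum\beta_j^{2}+b\,(\sum\beta_j)^{2}$. By the positivity of $a$ and $a+3b$ the tangential quadratic form is strictly convex with unique minimum $\beta_4=\beta_5=\beta_6=0$, whence $\mathbf{u}_j\perp g$ and $\mathbf{k}'_j$ is the pedal point of $\mathbf{k}_j$ on $g$, proving the first claim.

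It remains to determine the optimal line. Setting $\bar{\mathbf{k}}:=\tfrac{1}{3}(\mathbf{k}_4+\mathbf{k}_5+\mathbf{k}_6)$ and writing $\alpha_j=\mathbf{n}^{T}(\mathbf{k}_j-\bar{\mathbf{k}})+(\mathbf{n}^{T}\bar{\mathbf{k}}-c)$, the $\alpha$-part splits as
\begin{equation*}
a\,\mathbf{n}^{T} M\,\mathbf{n}+(3a+9b)(\mathbf{n}^{T}\bar{\mathbf{k}}-c)^{2},\qquad M:=\sum_{j=4}^{6}(\mathbf{k}_j-\bar{\mathbf{k}})(\mathbf{k}_j-\bar{\mathbf{k}})^{T}.
\end{equation*}
The second term vanishes by choosing $c=\mathbf{n}^{T}\bar{\mathbf{k}}$, i.e., by forcing $g$ through the centroid; the first is minimized over unit $\mathbf{n}$ at the eigenvector of $M$ with the smaller eigenvalue, which characterizes the total least-squares regression line of the three platform points. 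At this optimum $\bar{Q}=a\,\lambda_{\min}(M)$, and since $\mathbf{k}_j=\mathbf{R}\mathbf{p}_j+\mathbf{t}$ gives $M=\mathbf{R}\,M_\mathbf{p}\,\mathbf{R}^{T}$ with $M_\mathbf{p}:=\sum(\mathbf{p}_j-\bar{\mathbf{p}})(\mathbf{p}_j-\bar{\mathbf{p}})^{T}$, the value $\lambda_{\min}(M)=\lambda_{\min}(M_\mathbf{p})$ is invariant under orthogonal conjugation and thus depends only on the intrinsic platform geometry; hence $D_\star^\vartriangle(\mathbf{K},\mathbf{K}')=\sqrt{a\,\lambda_{\min}(M_\mathbf{p})}$ is pose-independent. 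The main technical hurdle is the symmetry-plus-Schur step that yields the form of $\bar{Q}$ without carrying out the explicit base elimination; once that is in hand, the remaining analysis is elementary linear algebra.
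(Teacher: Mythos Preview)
Your argument is correct and takes a genuinely different route from the paper. The paper's proof is computer-aided: it writes down the two Lagrangian ideals $\mathcal{I}_1$ (for the full problem) and $\mathcal{I}_2$ (for orthogonal projection onto a line), computes elimination ideals in \texttt{Maple}, and verifies they coincide; pose-independence is then checked by solving the system symbolically for a parametrized pose and observing that the result depends only on $x_5,x_6,y_6$. Your approach instead exploits the structure of the objective directly---Schur-complement elimination of the base variables, the $S_3$-symmetry in the leg indices, and the isotropy of the building blocks---to reduce the problem to the classical total-least-squares characterization of the regression line. This buys a transparent, computation-free explanation of \emph{why} pedal points and the regression line appear, and it makes pose-independence an immediate consequence of the orthogonal invariance of eigenvalues; the paper's approach, by contrast, serves primarily as a verification.

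One point of exposition deserves tightening. The step ``$\bar Q$ is $S_3$-invariant and must therefore have the form $a\sum\|\mathbf{u}_j\|^2+b\|\sum\mathbf{u}_j\|^2$'' uses more than permutation symmetry: it also uses that $Q$ is built from the inner products $\mathbf{u}_i^{T}\mathbf{u}_j$ alone (cf.\ Eqs.~(\ref{metricdp})--(\ref{metricdp1})), i.e., that $Q$ is invariant under a common rotation of all $\mathbf{u}_i$. Permutation invariance by itself would only force $\bar Q=\sum_j\mathbf{u}_j^{T}A\mathbf{u}_j+\sum_{j\neq k}\mathbf{u}_j^{T}B\mathbf{u}_k$ for some $2\times2$ matrices $A,B$; the additional $O(2)$-covariance is what forces $A,B$ to be scalar multiples of the identity. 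Since the first-order conditions for $\mathbf{u}_1,\mathbf{u}_2,\mathbf{u}_3$ inherit this covariance, the Schur complement does too, and your displayed form follows. Making this explicit closes the only gap in the argument.
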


\begin{proof}
The partial derivatives of Eq.\ (\ref{c3}) with respect to the  twelve  coordinates $c_{i}, d_{i}$ ($i=1,\dots 6$) and the 
Lagrange multiplier $\lambda$ results in the following ideal of thirteen equations:
\begin{equation}
\mathcal{I}_{1}:=\langle g_{1}, \dots, g_{13}\rangle \subseteq \mathbb{K}[c_{i},d_{i},x_{2},x_{3}, x_{4}, x_{5}, x_{6} ,y_{3},y_{4},y_{5},y_{6},\lambda] \quad \text{for}  \quad i= 1, \dots , 6.
\label{ideala}
\end{equation}
On the other hand, the pedal points of $\mathbf{k}_4,\mathbf{k}_5,\mathbf{k}_6$ on the line of regression can be obtained as a solution of the following optimization problem:
\begin{equation}
   L = {\frac{1}{3}\sum_{i=1}^{3}||{\mathbf{k}'_{i}}-\mathbf{k}_{i}||^2}+\lambda_{1} C_P
      \quad \text{for} \quad  i=4,5,6.
    \label{eq:colmetric}
    \end{equation}
The partial derivatives of Eq.\ (\ref{eq:colmetric}) with respect to the  six unknowns  $c_{i}, d_{i}$ ($i=4,5,6$) including the multiplier $\lambda_1$ results in the following ideal: 
 \begin{equation}
\mathcal{I}_{2}:=\langle f_{1}, \dots, f_{7}\rangle \subseteq \mathbb{K}[c_{i},d_{i},x_{4},x_{5}, x_{6},y_{4},y_{5},y_{6},\lambda_{1}] \quad \text{for}  \quad i=4,5,6.
\label{idealb}
\end{equation}
In order to show that the critical points of both Lagrange formulations are identical, one can eliminate $c_{i}, d_{i},\lambda$  for $i=1,2,3$ from the ideal $\mathcal{I}_{1}$ and $\lambda_1$ from $\mathcal{I}_{2}$  i.e.\
\begin{equation}
\mathcal{I}_{3}:=\mathcal{I}_1 \cap \mathbb{K}[c_{i},d_{i},x_{i},y_{i},x_{2},x_{3},y_{3}], \quad
\mathcal{I}_{4}:=\mathcal{I}_{2} \cap \mathbb{K}[c_{i},d_{i},x_{i},y_{i}]  
\quad \text{for} \quad i= 4,5,6.
\end{equation}
By the usage of the software \texttt{Maple}, it can be verified that $\mathcal{I}_{3}$ is contained in $\mathcal{I}_{4}$ and vice versa.

To prove that the distance $D_{\star}^{\vartriangle}(\mathbf{K},\mathbf{K}')$ 
is pose independent, we parameterize the pose determined by 
$\mathbf{R}$ and $\mathbf{t}$ in Eq.\ (\ref{initial}) by
\begin{equation}
\mathbf{R}:=\frac{1}{e_0^2+e_1^2}
\begin{pmatrix}
    e_0^2-e_1^2 & -2e_0e_1 \\
    2e_0e_1 & e_0^2-e_1^2
\end{pmatrix}, \quad
\mathbf{t}:=
\begin{pmatrix}
\alpha \\ \beta
\end{pmatrix}.
\end{equation}
Using this parametrization we take again the partial derivatives 
of Eq.\ (\ref{c3}) with respect to the  twelve  coordinates $c_{i}, d_{i}$ ($i=1,\dots 6$) and the 
Lagrange multiplier $\lambda$. 
 We solve the resulting system of $13$ equations for $\lambda, c_{i}, d_{i}$ ($i=1,\dots 6$) by using Gr\"obner basis package implemented in \texttt{Maple} and obtain two solution sets. 
 Substituting each of the two obtained solutions for $c_{i}, d_{i}$ ($i=1,\dots ,6$) back into the extrinsic distance functions given by Eq.~\ref{Ex7} and Eq.~\ref{Ex9}, respectively, shows that the resulting expression only depends on the geometry parameters $x_{5}, x_{6}, y_{6}$. 
 For the explicit expressions of 
 $D^{\circ}_{\vartriangle}(\mathbf{K,K'})$ with $\circ \in \{{\blacktriangle}, {\vartriangle}\}$ we refer to  \ref{explicit}.

The \texttt{Maple} files used for proving Theorem~\ref{regression1} can be downloaded from ~\cite{codes}.
\end{proof}

Clearly, this theorem also holds by exchanging the platform and the base which yields:

\begin{thm}\label{regression2}
    The base anchor points of the closest configuration $\mathbf{K'}$ on the collinearity variety $C_{B}=0$ with respect to the extrinsic metric $D_{\vartriangle}^{\circ}(\mathbf{K},\mathbf{K}')$ with $\circ \in \{{\blacktriangle}, {\vartriangle}\}$ are the pedal points of $\mathbf{k}_{1},\mathbf{k}_{2},\mathbf{k}_{3}$ on their line of regression. Moreover, the distance  $D_{\vartriangle}^{\circ}(\mathbf{K},\mathbf{K}')$  only depends on the geometry of the manipulator; i.e.\ it is pose independent.
\end{thm}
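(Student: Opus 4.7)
The plan is to mirror the argument of Theorem~\ref{regression1}, exploiting the fact that the nine extrinsic metrics of Section~\ref{sec:extrinsic} and the collinearity conditions of Eq.~(\ref{coll}) are symmetric under exchange of the roles of the base anchor points $\mathbf{k}_1,\mathbf{k}_2,\mathbf{k}_3$ and the platform anchor points $\mathbf{k}_4,\mathbf{k}_5,\mathbf{k}_6$.

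First, I would set up the Lagrange function as in Eq.~(\ref{c4}),
\begin{equation*}
L = D^{\circ}_{\vartriangle}(\mathbf{K},\mathbf{K}')^2 + \lambda\, C_B,
\end{equation*}
and form the ideal $\mathcal{J}_1$ generated by its partial derivatives with respect to the twelve coordinates $c_i,d_i$ ($i=1,\dots,6$) and the multiplier $\lambda$. Next, in analogy with Eq.~(\ref{eq:colmetric}), I would write the auxiliary projection problem
\begin{equation*}
\widetilde{L} = \frac{1}{3}\sum_{i=1}^{3}\|\mathbf{k}'_i-\mathbf{k}_i\|^2 + \lambda_1\, C_B,
\end{equation*}
whose critical points are exactly the pedal points of $\mathbf{k}_1,\mathbf{k}_2,\mathbf{k}_3$ on their line of regression. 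Its partial derivatives with respect to $c_1,d_1,c_2,d_2,c_3,d_3,\lambda_1$ generate an ideal $\mathcal{J}_2$.

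To show that both critical-point systems coincide on the base coordinates, I would eliminate the platform unknowns $c_j,d_j$ ($j=4,5,6$) together with $\lambda$ from $\mathcal{J}_1$ and $\lambda_1$ from $\mathcal{J}_2$, obtaining ideals $\mathcal{J}_3$ and $\mathcal{J}_4$ in $\mathbb{K}[c_i,d_i,x_i,y_i,x_2,x_3,y_3]$ for $i=1,2,3$. A Gröbner basis computation in \texttt{Maple}, completely parallel to the one used in the proof of Theorem~\ref{regression1}, should establish $\mathcal{J}_3 \subseteq \mathcal{J}_4$ and $\mathcal{J}_4 \subseteq \mathcal{J}_3$, thereby identifying the base part of the closest point on $C_B=0$ with the triple of pedal points on the line of regression.

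For the pose-independence claim, I would insert the parametrization of $\mathbf{R}$ and $\mathbf{t}$ in terms of $(e_0,e_1,\alpha,\beta)$ used in Theorem~\ref{regression1}, solve the resulting critical-point system by Gröbner basis for $\lambda$ and $c_i,d_i$ ($i=1,\dots,6$), substitute the solutions back into $D^{\circ}_{\vartriangle}(\mathbf{K},\mathbf{K}')^2$, and verify that the outcome depends only on the base design parameters $x_2,x_3,y_3$. The main obstacle is not conceptual but computational: the elimination ideals and the symbolic solution of the critical-point system can be heavy, and as in Theorem~\ref{regression1} one has to keep track of the spurious branches arising from the two solution components, discarding the non-minimizing one. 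Beyond this, the proof is purely a rerun of the symbolic pipeline of Theorem~\ref{regression1} with the indices $1,2,3$ and $4,5,6$ swapped, and the desired geometric interpretation follows.
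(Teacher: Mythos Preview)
Your proposal is correct and follows exactly the approach the paper takes: the paper simply states that Theorem~\ref{regression2} holds by exchanging the roles of platform and base in Theorem~\ref{regression1}, and you have spelled out this symmetry argument in full detail. One minor notational slip: in your elimination step the surviving design parameters for $\mathcal{J}_3$ should be the platform parameters $x_5,x_6,y_6$ (the analogue of the base parameters $x_2,x_3,y_3$ kept in $\mathcal{I}_3$ of Theorem~\ref{regression1}), not $x_2,x_3,y_3$ again, but this does not affect the argument.
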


Note that the first sentence of Theorem~\ref{regression1} (resp.\ Theorem~\ref{regression2})  does not hold for the metric  $D_{\rectangleblack}^{\vartriangle}(\mathbf{K},\mathbf{K}')$ (resp.\ $D_{\vartriangle}^{\rectangleblack}(\mathbf{K},\mathbf{K}')$), which is demonstrated by the following counter-example.

\begin{Counter}\label{counter1}
We use as input for our numerical example the one discussed in  \cite[Section 3]{Nawratil_2019} for $\phi=\frac{\pi}{2}$. For this configuration, we solved the optimization problems stated in Eq.~(\ref{c1}) and Eq.~(\ref{c3}). 
The configurations that correspond to the global minima 
are displayed in Fig.~\ref{example} and their coordinates are given in Table.~\ref{coordinates}.
\end{Counter} 
\begin{figure}[h!]
\begin{center}
\begin{overpic}
    [width=150mm]{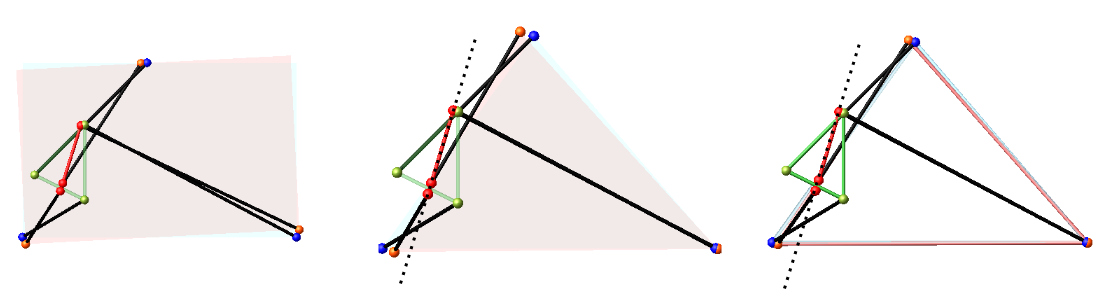}
    \begin{small}
     \put(5,0) {(a)  $D_{\rectangleblack}^{\vartriangle}(\mathbf{K,K'})$}
      \put(40,0) {(b)  $D_{\blacktriangle}^{\vartriangle}(\mathbf{K,K'})$}
       \put(80,0) {(c)  $D_{\vartriangle}^{\vartriangle}(\mathbf{K,K'})$}
      \end{small}
\end{overpic}
\end{center}
\caption{The line of regression is indicated by a dashed line and its equation is given by $0.9570920262x-0.2897841482y-0.9336136247=0$. From the visual point of view the platform anchor points in (a) are very similar to the ones of (b) and (c), which are identical due to Theorem \ref{regression1}. From Table~ \ref{coordinates} it can be seen that the coordinates of $\mathbf{k}'_{4}, \mathbf{k}'_{5}, \mathbf{k}'_{6}$ of (a) differ from those of (b) and (c) and that they are not fulfilling the equation of the line of regression.} 
\label{example}
\end{figure}

\begin{table}[H]
\caption{Coordinates of the closest singular configuration for counter-example \ref{counter1}}
\centering
\begin{tabular}{|l||l|l|l|}
\hline
$\mathbf{K}'$ & $D_{\rectangleblack}^{\vartriangle}(\mathbf{K},\mathbf{K}')=  0.5735791$   & $D_{\blacktriangle}^{\vartriangle}(\mathbf{K},\mathbf{K}')=0.5195729$  & $D_{\vartriangle}^{\vartriangle}(\mathbf{K},\mathbf{K}')=0.46807561$        \\ \hline
$\mathbf{k}'_{1}$ & $(0.1302373, -0.2775441)$ & $(0.3921934, 
-0.1187466)$ & $(0.1960967,
-0.0593733)$ \\ \hline
$\mathbf{k}'_{2}$ & $(11.114982, 0.3015644)$  & $(11.059373,
-0.0179767)$ & $(11.029686,
-0.0089883)$ \\ \hline
$\mathbf{k}'_{3}$ & $(4.7547798, 6.9759796)$  &  $(4.5484332,
7.1367234)$ & $(4.7742166,
7.0683617)$ \\ \hline
$\mathbf{k}'_{4}$  &  $(1.5271323,1.8504855)$  &  $(1.5195164,
1.7968665)$  &  $(1.5195164,
1.7968665)$ \\ \hline
$\mathbf{k}'_{5}$          &  $(2.3464552,
4.4803586)$        & $(2.3515667,
4.5449419)$         & $(2.3515667,
4.5449419)$   \\ \hline
$\mathbf{k}'_{6}$  & $(1.6264123,
2.1691557)$  & (1.6289168,
2.1581914) &  (1.6289168,
2.1581914)  \\ \hline
 \end{tabular}
\label{coordinates}
\end{table}

Also, the second sentence of Theorem~\ref{regression1} (resp.\ Theorem~\ref{regression2})  does not hold for   $D_{\rectangleblack}^{\vartriangle}(\mathbf{K},\mathbf{K}')$ (resp.\ $D_{\vartriangle}^{\rectangleblack}(\mathbf{K},\mathbf{K}')$), which can be seen from the example (cf.\ Fig.\ \ref{fig:result3}(a,d)) discussed in Section \ref{numericalexample}.

\section{Singular points of the constraint varieties}\label{sec:singpoints}
As already mentioned the Lagrangian formulations given in Eqs.\ (\ref{formulation1}--\ref{side3}) and Eqs.\ (\ref{c1}--\ref{c4})  do not take the singular points of the constraint varieties into account.  Therefore we have to take care of them separately, 
which is done in the following two subsections:
\subsection{Singular points of the singularity variety}

We are interested in giving a geometric characterization of the singular points of the singularity variety $V=0$. As a preparatory work towards this goal, we prove the following lemma:

\begin{lem} \label{invar}
The set of singular points of the singularity variety $V=0$ remains invariant under affine motions.
\end{lem}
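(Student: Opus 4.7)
The plan is to prove the stronger claim that under any affine motion $\Phi$ of the plane, acting diagonally on $\mathbb{R}^{12}$, the singularity polynomial $V$ is merely rescaled by a nonzero multiplicative constant, namely $V\circ\Phi = \det(\mathbf{M})^{2}\,V$, where $\mathbf{M}$ is the (invertible) linear part of $\Phi$. From this identity the invariance of the singular locus of $V=0$ follows immediately by a short chain-rule argument.

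First, I would write the affine motion as $\mathbf{k}'_{i}\mapsto \mathbf{M}\mathbf{k}'_{i}+\mathbf{v}$ and inspect how the matrix $\mathbf{V}(\mathbf{K}')$ of Eq.~(\ref{variety}) transforms. The differences $\mathbf{k}'_{i+3}-\mathbf{k}'_{i}$ forming the top two rows are translation-invariant, and become $\mathbf{M}(\mathbf{k}'_{i+3}-\mathbf{k}'_{i})$. Hence the top $2\times 3$ block of $\mathbf{V}(\Phi(\mathbf{K}'))$ is obtained from that of $\mathbf{V}(\mathbf{K}')$ by left multiplication with $\mathbf{M}$. For the third row, I would expand each entry $\det(\mathbf{M}\mathbf{k}'_{i}+\mathbf{v},\,\mathbf{M}(\mathbf{k}'_{i+3}-\mathbf{k}'_{i}))$ by the multilinearity of the $2\times 2$ determinant; the outcome is $\det(\mathbf{M})$ times the original third-row entry plus a term that is a fixed linear combination (with coefficients depending only on $\mathbf{v}$) of the first two entries of the \emph{same} column of the transformed matrix.

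These spurious terms can be cleared by elementary row operations, which leave the determinant unchanged, reducing the computation to
\[
\det\bigl(\mathbf{V}(\Phi(\mathbf{K}'))\bigr) \;=\; \det\!\begin{pmatrix}\mathbf{M} & \mathbf{0}\\ \mathbf{0}^{T} & \det(\mathbf{M})\end{pmatrix}\,\det\bigl(\mathbf{V}(\mathbf{K}')\bigr) \;=\; \det(\mathbf{M})^{2}\,\det\bigl(\mathbf{V}(\mathbf{K}')\bigr),
\]
which establishes the claim $V\circ\Phi = \det(\mathbf{M})^{2}\,V$ with $\det(\mathbf{M})^{2}\neq 0$, since $\mathbf{M}$ is invertible.

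Finally, I would differentiate both sides of this identity. The Jacobian of $\Phi$ on $\mathbb{R}^{12}$ is the constant invertible block matrix $\operatorname{diag}(\mathbf{M},\ldots,\mathbf{M})$, so the chain rule gives $\det(\mathbf{M})^{2}\,\nabla V(\mathbf{K}') = \operatorname{diag}(\mathbf{M},\ldots,\mathbf{M})^{T}\,\nabla V(\Phi(\mathbf{K}'))$, whence $\nabla V(\Phi(\mathbf{K}'))=\mathbf{0}$ if and only if $\nabla V(\mathbf{K}')=\mathbf{0}$. Combined with the invariance of the zero set, this shows that $\mathbf{K}'$ is a singular point of $V=0$ iff $\Phi(\mathbf{K}')$ is, as required. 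The main obstacle I foresee is the bookkeeping in the third-row calculation: the translation vector $\mathbf{v}$ produces correction terms that must be identified \emph{precisely} as a linear combination of the already-transformed first two rows, so that they really can be absorbed by row operations without introducing any hidden extra factors into the determinant.
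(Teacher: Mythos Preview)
Your proof is correct and follows essentially the same route as the paper: both arguments establish that under the diagonal action of an affine motion $\mathbf{k}'_i\mapsto\mathbf{M}\mathbf{k}'_i+\mathbf{v}$ the polynomial $V$ is rescaled by the nonzero constant $(\det\mathbf{M})^2$, and deduce invariance of the singular locus from this. You supply more detail than the paper does (the row-operation argument for the third row and the explicit chain-rule step), but the underlying idea is identical.
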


\begin{proof}
We apply a regular affine transformation on the 
configuration space $\mathbb{R}^{12}$ by 
\begin{equation}
    \mathbf{k}'_i \mapsto \mathbf{A} \mathbf{k}'_i+ \mathbf{a} \quad \text{for}\quad
    i=1,\ldots,6
\end{equation}
where $\mathbf{A}$ is a regular $2\times2$ matrix and $\mathbf{a}\in\mathbb{R}^2$. 
This transformation induces a linear automorphism of the singularity variety $V=0$, as $\det \mathbf{V}$ is mapped to $(\det \mathbf{A})^2 \det \mathbf{V}$. As $\det \mathbf{A}\neq 0$ holds, this already shows the linear automorphism, which maps regular points to regular ones and singular ones to singular ones. 

\end{proof}

By using Lemma~\ref{invar}we can prove the following theorem:
\begin{thm}\label{col1}
Singular points of the singularity variety $V=0$ correspond to one of the following configurations:
\begin{enumerate}
\item Three legs of the manipulator are collinear (see Fig.~\ref{characterization}(a)).

\item  Two legs are collinear and one leg degenerates to a point (see Fig.~\ref{characterization}(b)). 

\item Two legs degenerate to points (see Fig.~\ref{characterization}(c)).

\item One leg degenerates to a point and the carrier lines of the remaining two legs pass through that point (see Fig.~\ref{characterization}(d)). 
\end{enumerate}
\end{thm}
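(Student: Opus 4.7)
The singular points of the singularity variety $V=0$ are by definition the configurations $\mathbf{K}'$ at which $V=\nabla V=0$. The plan is to compute this gradient explicitly via the multilinearity of the determinant and to extract the four geometric cases from a short linear-algebraic dichotomy. Since $V=\det(L_1,L_2,L_3)$ and the Plücker vector $L_i$ depends only on the pair $(\mathbf{k}'_i,\mathbf{k}'_{i+3})$, the derivative of $V$ with respect to any coordinate $\xi$ of this pair has the form $\partial V/\partial\xi=(\partial L_i/\partial\xi)\cdot N_i$, where $N_i:=L_j\times L_k$ with $\{i,j,k\}=\{1,2,3\}$ (together with the appropriate cyclic sign). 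A one-line computation from $L_i=(x'_{i+3}-x'_i,\,y'_{i+3}-y'_i,\,x'_i y'_{i+3}-y'_i x'_{i+3})^T$ gives the four tangent vectors
\[
(-1,0,y'_{i+3})^T,\quad (0,-1,-x'_{i+3})^T,\quad (1,0,-y'_i)^T,\quad (0,1,x'_i)^T,
\]
so the vanishing of the four partials tied to leg $i$ is equivalent to $N_i$ being orthogonal to the span of these four vectors.

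The crucial linear-algebraic step is the dichotomy that controls this span. When leg $i$ is non-degenerate ($\mathbf{k}'_i\neq\mathbf{k}'_{i+3}$), the four tangent vectors span all of $\mathbb{R}^3$, forcing $N_i=0$. When leg $i$ is degenerate ($\mathbf{k}'_i=\mathbf{k}'_{i+3}$), the first and third (resp.\ second and fourth) tangent vectors become negatives of one another, so the span drops to a two-dimensional subspace whose orthogonal complement is spanned by $(y'_i,-x'_i,1)^T$. The latter vector is precisely the Plücker representative of the point $\mathbf{k}'_i$, so the proportionality $N_i\parallel(y'_i,-x'_i,1)^T$ with nonzero last entry translates geometrically to "the carrier lines of legs $j$ and $k$ meet at $\mathbf{k}'_i$", and the fact that the last entry cannot vanish automatically rules out the parallel-but-distinct subcase. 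Likewise, $N_i=0$ translates to "the carrier lines of legs $j$ and $k$ coincide projectively", with the natural convention that a leg collapsed to a point is trivially coincident with anything.

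The argument then concludes by a case split on the number $n\in\{0,1,2,3\}$ of degenerate legs. For $n=0$, all three $N_i$ vanish, every pair of carrier lines coincides, and all three legs share a common line (Case 1). For $n=1$, say leg $i$ degenerates, then $L_i=0$ makes $N_j=N_k=0$ automatically, and the condition on $N_i$ splits into either $N_i=0$, meaning lines $j$ and $k$ coincide (Case 2), or $N_i\parallel(y'_i,-x'_i,1)^T$ with $N_i\neq 0$, meaning those two lines cross precisely at $\mathbf{k}'_i$ (Case 4). For $n\geq 2$, all three $N_i$ vanish identically, giving Case 3. Sufficiency in each of the four cases is a direct verification that the required condition on every $N_i$ holds. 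The main obstacle of the proof is the careful identification in the degenerate case of the two-dimensional tangent span together with its one-dimensional orthogonal complement; once this is in place, everything else is organised bookkeeping, and the affine invariance of the singular locus (Lemma~\ref{invar}) may be used as a normalisation tool if one wishes to streamline the geometric interpretation of the individual subcases.
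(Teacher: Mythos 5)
Your proposal is correct, and it takes a genuinely different route from the paper. The paper's proof is computational: it appends $V$ to its twelve partial derivatives, normalizes via the affine invariance of Lemma~\ref{invar} by setting $c_1=d_1=d_2=0$, and then runs a Gr\"obner basis computation in \texttt{Maple} that returns $30$ solution sets, each of which is inspected and matched to one of the four cases; sufficiency is likewise checked by substitution. You instead exploit the structure $V=\det(L_1,L_2,L_3)$ directly: the cofactor expansion reduces $\nabla V=0$ to conditions on the vectors $N_i=L_j\times L_k$, and the dichotomy on the span of the four tangent vectors $\partial L_i/\partial\xi$ (full rank for a non-degenerate leg, rank two with orthogonal complement spanned by $(y_i',-x_i',1)^T$ for a degenerate one) converts these conditions into the exact line-geometric statements of the theorem. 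I verified the key computations: the four tangent vectors are as you state, their span is all of $\mathbb{R}^3$ precisely when $\mathbf{k}_i'\neq\mathbf{k}_{i+3}'$, the vector $(y_i',-x_i',1)^T$ is indeed the incidence dual of the point $\mathbf{k}_i'$, and the case split on the number of degenerate legs exhausts all possibilities (including the exclusion of the parallel-but-distinct subcase via the vanishing last entry of $L_j\times L_k$). What your approach buys is a human-verifiable, computation-free proof in which necessity and sufficiency both drop out of the same dichotomy, with no need for the normalization or the external Gr\"obner basis certificate; what the paper's approach buys is uniformity with its overall computational pipeline and an explicit enumeration of the solution components. If you write this up in full, do spell out the incidence computation showing that $L_j\cdot(y_i',-x_i',1)^T=0$ is exactly the condition that $\mathbf{k}_i'$ lies on the carrier line of leg $j$, since that identification carries the whole geometric interpretation of Cases 2 and 4.
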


\begin{figure}
 \begin{center}
 \begin{overpic}[width=14cm]{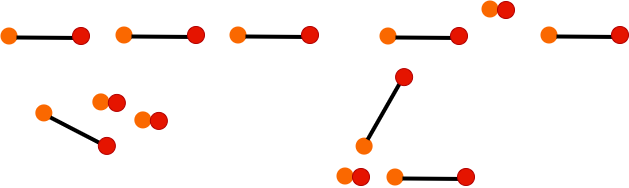}
   \begin{small}
\put(30,20){(a)}
\put(75,20){(b)}
\put(30,10){(c)}
\put(75,10){(d)}
\end{small}
\end{overpic}
\caption{Schematic sketch of the geometric characterization of singular points of the singularity variety $V=0$. Base and platform anchor points are indicated in orange and red, respectively.}\label{characterization}
 \end{center}
\end{figure}
\begin{proof}

From Eq.\ (\ref{variety}) it can be seen that $V$ only depends on the free coordinates $c_{i}, d_{i}$ ($i=1,\dots,6$). Partial derivatives of Eq.\ (\ref{variety}) with respect to these twelve unknowns plus the singularity polynomial $V$ results in an overdetermined system of thirteen equations spanning the following ideal: 
\begin{equation}
\mathcal{I}:=\langle g_{1}, \dots, g_{13}\rangle \subseteq \mathbb{K}[c_{i},d_{i}] \quad i= 1, \dots ,6.
\label{ideal4}
\end{equation}

Due to Lemma~\ref{invar} we can set $c_{1}=d_{1}=d_{2}=0$ 
in $g_i$ of Eq.\ (\ref{ideal4}) for $i=1,\ldots, 13$. This simplification allows us to solve the resulting overdetermined system in nine unknowns by using Gr\"obner basis method implemented in \texttt{Maple}. 
It can easily be checked that each of the obtained 30 solution sets falls into one of the given four geometric characterizations, which shows their necessity. 
The \texttt{Maple} file for the computation of these  30 solution sets can be downloaded from ~\cite{codes}.

The proof of sufficiency is straightforward by checking that the 13 equations of  Eq.\ (\ref{ideal4}) are fulfilled under consideration of the listed four geometric conditions.
\end{proof}

\begin{rmk}
Note that case 4 of Theorem \ref{col1} also shows that singular points of the singularity variety are not characterized by rk($\mathbf{V}$)=$1$ with $\mathbf{V}$ of Eq.\ (\ref{variety}). 
Therefore, this also falsifies the conjecture of \cite[End of Sec.\ IV]{manfred} that singularities of the singularity variety yield higher-order singularities. \hfill $\diamond$
\end{rmk}

If we restrict the base (resp.\ platform) to be transformed by 
Euclidean motions, then our set of singular configurations is only a subset $V_B=0$ (resp.\ $V_P=0$) of $V=0$. 

Let us first assume that the base is transformed by Euclidean motions. 
Then the variety $V_B=0$ can be obtained as the intersection of the four hypersurfaces $V=0$, $E_B=0$, $F_1=0$, $F_2=0$ with 
\begin{equation}
F_{1}=(c_{2}-c_{1})x_{3}+(d_{1}-d_{2})y_{3}+(c_{1}-c_{3})x_{2}, \quad 
F_{2}=(d_{2}-d_{1})x_{3}+(c_{2}-c_{1})y_{3}+(d_{1}-d_{3})x_{2},
\end{equation}
where the latter two conditions are implied by Eq.\ (\ref{PBR1}). 

\begin{thm}\label{base}
The set of singular points of the singularity variety $V_B=0$ remains invariant under Euclidean motions. Moreover, these points are also characterized by the four cases given in Theorem \ref{col1}.
\end{thm}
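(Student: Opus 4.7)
The plan is to adapt the arguments of Lemma \ref{invar} and the proof of Theorem \ref{col1} to the base-constrained setting, by first establishing the invariance statement and then using it to normalize the problem before invoking Gr\"obner bases.

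For the invariance, I would apply an arbitrary Euclidean motion $\mathbf{k}'_i \mapsto \mathbf{R}\mathbf{k}'_i + \mathbf{t}$ with $\mathbf{R} \in \mathrm{SO}(2)$ and $\mathbf{t} \in \mathbb{R}^2$ simultaneously to all six anchor points of $\mathbf{K}'$. This induces an affine automorphism of the ambient configuration space $\mathbb{R}^{12}$. By Lemma \ref{invar} (with $\det\mathbf{R}=1$) the singularity polynomial $V$ is invariant, and the constraint locus $\{E_B = F_1 = F_2 = 0\}$ is preserved as a set because a Euclidean motion maps the $\mathrm{SE}(2)$-orbit of the reference base to itself. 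Thus the automorphism maps the subvariety $V_B = 0$ into itself. Since smoothness of a variety at a point is an intrinsic property, independent of the choice of defining equations, singular points are sent to singular points.

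For the characterization, I would exploit the invariance to normalize the base to its canonical Euclidean position by setting $c_1 = d_1 = d_2 = 0$, $c_2 = x_2$, $c_3 = x_3$, $d_3 = y_3$ (choosing one branch of $E_B = 0$). The constraint manifold $M = \{E_B = F_1 = F_2 = 0\}$ is $9$-dimensional, and the $\mathrm{SE}(2)$-invariance of $V$ implies that the derivative of $V|_M$ along the three orbit directions vanishes identically on $M$. Consequently, a point of $V_B = 0$ in the normalized slice is singular precisely when $\tilde V$ and its six platform-partials $\partial \tilde V / \partial c_j,\, \partial \tilde V / \partial d_j$ ($j=4,5,6$) all vanish, where $\tilde V$ denotes the specialization of $V$ to the fixed base. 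I would solve this overdetermined system of seven equations in six unknowns with Gr\"obner bases in \texttt{Maple}, in analogy with the proof of Theorem \ref{col1}.

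The final step is to verify that the resulting solution set coincides with the four geometric cases of Theorem \ref{col1}. Sufficiency is immediate: at every singular point of the full variety $V=0$ the entire gradient $\nabla V$ vanishes in all twelve coordinates, so the six platform-partials in particular vanish. For necessity --- which is the main technical obstacle --- one must check that the Gr\"obner basis computation produces no other components. I expect this to hold because the four cases of Theorem \ref{col1} already exhaust all the ways in which the Pl\"ucker linear dependence of the three legs can degenerate, and restricting the base to a Euclidean position does not introduce new degeneration types. Once the match is confirmed, the singular locus of $V_B = 0$ carries the same geometric classification as that of $V = 0$.
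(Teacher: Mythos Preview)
Your invariance argument agrees with the paper's, though the paper verifies explicitly that $F_1\mapsto\cos\phi\,F_1-\sin\phi\,F_2$ and $F_2\mapsto\sin\phi\,F_1+\cos\phi\,F_2$ under the Euclidean action rather than appealing to the orbit description of the constraint locus. For the characterization, however, you take a genuinely different route. The paper does not pass to a transversal slice; it keeps $V_B$ as the intersection $V=E_B=F_1=F_2=0$ in the ambient space and applies the Jacobian criterion directly, introducing multipliers $\lambda_0,\dots,\lambda_3$ and differentiating $\lambda_0 V+\lambda_1 E_B+\lambda_2 F_1+\lambda_3 F_2$ with respect to all nineteen unknowns $c_1,\dots,d_6,x_2,x_3,y_3,\lambda_0,\dots,\lambda_3$. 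After setting $c_1=d_1=d_2=0$ a Gr\"obner basis returns 48 components; once those with $x_2=0$ or with all $\lambda_i=0$ are discarded, the remaining fourteen satisfy $\lambda_1=\lambda_2=\lambda_3=0$, forcing $\nabla V=0$ so that Theorem~\ref{col1} applies. Your $\mathrm{SE}(2)$-orbit/slice reduction is more conceptual and yields a far smaller system (seven equations in the six platform coordinates), but be careful with the design parameters $x_2,x_3,y_3$: if they sit in the coefficient field, the Gr\"obner basis only certifies the generic design, whereas the paper's choice to carry them as polynomial-ring variables is bulkier but uniform across all base geometries --- including the collinear base $y_3=0$ that is needed for case~1 of Theorem~\ref{col1} to be non-vacuous.
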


\begin{proof}

The proof of the first part of the theorem can be done similarly to Lemma \ref{invar}, but by restricting to Euclidean motions; i.e.\ 
\begin{equation}
    \mathbf{k}'_i \mapsto \mathbf{R} \mathbf{k}'_i+ \mathbf{t} \quad \text{with}\quad
\mathbf{R}:=    \begin{pmatrix}
\cos{\phi} & -\sin{\phi} \\
\sin{\phi} & \cos{\phi}
\end{pmatrix}.
\end{equation}
We know already from Lemma \ref{invar} that  $\det \mathbf{V}=0$ remains invariant under this action. It can easily be seen that the same holds true for $E_B=0$. The remaining two hypersurfaces are transformed as follows:
\begin{equation}
F_{1} \mapsto \cos{\phi}F_{1}-\sin{\phi}F_{2}, \qquad 
F_{2} \mapsto \sin{\phi}F_{1}+\cos{\phi}F_{2}.
\end{equation}
 This already shows that Euclidean motions imply a linear automorphism of $V_B=0$.  

We proceed with the proof of the second part of the theorem. 
A singular point of $V_B=0$ is either a singular point of one of the four hypersurfaces  $V=0$, $E_B=0$, $F_1=0$, $F_2=0$ or it is a point, where the four tangent hyperplanes to these four hypersurfaces are linearly dependent. Algebraically this can be expressed by the set of equations resulting from the partial differentiation of 
\begin{equation}
\lambda_0V+\lambda_{1} E_B+\lambda_{2}F_{1}+\lambda_{3}F_{2}
\label{main3}
\end{equation}
with respect to the 19 unknowns
\begin{equation}
    c_1,\ldots,c_6,d_1,\ldots, d_6, x_2,x_3,y_3,\lambda_0,\ldots,\lambda_3.
    \label{unknowns}
\end{equation}
Due to the first part of the proof we can set $c_{1}=d_{1}=d_{2}=0$ in order to simplify the set of 19 equations, which allows us to solve them again by using Gr\"obner basis method implemented in \texttt{Maple}. The corresponding \texttt{Maple} file, which results in  $48$ solution sets, can be downloaded from ~\cite{codes}.

Out of the obtained $48$ solution sets, there are $24$ sets with $x_{2}=0$ contradicting our assumption $x_2\neq 0$ implied by Lemma \ref{lem:app}. There are $10$ sets  of trivial solutions as $\lambda_{0}=\lambda_{1}=\lambda_{2}=\lambda_{3}=0$ holds. For the remaining 14 solution sets $x_{2}\neq 0$ holds true and $\lambda_{1}=\lambda_{2}=\lambda_{3}=0$. The latter already shows that these 14 solution sets correspond to singular points of $V=0$.

 Note that the solution sets
 which correspond to case 1 of Theorem \ref{col1}
 also include a condition on the design parameters ($x_2,x_3,y_3$); namely $y_3=0$ rendering the base collinear.
\end{proof}

Clearly, this theorem also holds by exchanging the platform and the base which yields:

\begin{thm}\label{platform}
The set of singular points of the singularity variety $V_P=0$ remains invariant under Euclidean motions. Moreover, these points are also characterized by the four cases given in Theorem \ref{col1}.
\end{thm}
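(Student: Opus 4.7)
The plan is to deduce Theorem~\ref{platform} from Theorem~\ref{base} by exploiting the symmetry under base--platform exchange. First I would verify that the singularity polynomial $V=\det\mathbf{V}$ of Eq.~(\ref{variety}) is invariant under the involution $\mathbf{k}'_i\leftrightarrow\mathbf{k}'_{i+3}$ for $i=1,2,3$: the first row of $\mathbf{V}$ flips sign, and using $\det(\mathbf{k}'_{i+3},\mathbf{k}'_i-\mathbf{k}'_{i+3})=-\det(\mathbf{k}'_i,\mathbf{k}'_{i+3}-\mathbf{k}'_i)$, the second row flips sign as well, leaving $\det\mathbf{V}$ unchanged. Parallel to Eq.~(\ref{PBR1}), the point-based representation (\ref{PBR2}) supplies two auxiliary polynomials
\begin{equation}
G_{1}=(c_{5}-c_{4})x_{6}+(d_{4}-d_{5})y_{6}+(c_{4}-c_{6})x_{5},\quad
G_{2}=(d_{5}-d_{4})x_{6}+(c_{5}-c_{4})y_{6}+(d_{4}-d_{6})x_{5},
\end{equation}
so that $V_P=0$ can be described as the intersection of the four hypersurfaces $V=0$, $E_P=0$, $G_1=0$, $G_2=0$. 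These four equations are exactly the images of $V=0$, $E_B=0$, $F_1=0$, $F_2=0$ under the combined swap of anchor indices and of design parameters $x_2,x_3,y_3\leftrightarrow x_5,x_6,y_6$.

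For the first statement of the theorem, I would mimic the argument of Theorem~\ref{base} line by line, but with the Euclidean motion acting on the platform anchor points $\mathbf{k}'_4,\mathbf{k}'_5,\mathbf{k}'_6$: the conditions $V=0$ and $E_P=0$ are preserved, while $G_1,G_2$ transform into $\cos\phi\, G_1-\sin\phi\, G_2$ and $\sin\phi\, G_1+\cos\phi\, G_2$, respectively. This yields a linear automorphism of $V_P=0$ and hence preserves its singular locus.

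For the second statement, I would give two alternative arguments. The quick one is to invoke the involution directly: it induces a bijection between singular points of $V_B=0$ and singular points of $V_P=0$, and the four geometric conditions of Theorem~\ref{col1} are themselves symmetric in the two triples of anchor points (collinearity of the three carrier lines; two collinear legs with the third degenerate; two degenerate legs; one degenerate leg through which the other two carrier lines pass). Hence the classification transfers verbatim. Alternatively, one can set up the analogue of expression (\ref{main3}), namely
\begin{equation}
\lambda_0 V+\lambda_1 E_P+\lambda_2 G_1+\lambda_3 G_2,
\end{equation}
differentiate with respect to the 19 unknowns $c_1,\ldots,c_6,d_1,\ldots,d_6,x_5,x_6,y_6,\lambda_0,\ldots,\lambda_3$, and solve the Gr\"obner basis. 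Using Lemma~\ref{lem:app} one may assume $x_5\neq 0$, and by the first part of the theorem the normalization $c_4=d_4=d_5=0$ is admissible. The resulting $48$ solution sets split exactly as in the proof of Theorem~\ref{base} ($24$ sets violating $x_5\neq 0$, $10$ trivial sets with all multipliers zero, $14$ sets with $\lambda_1=\lambda_2=\lambda_3=0$ lying on $V=0$).

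The only real obstacle is bookkeeping: making sure the involution argument of option (i) respects the fact that in the Lagrangian the design parameters enter asymmetrically through $E_B,F_1,F_2$ versus $E_P,G_1,G_2$, so that the swap of indices must be coupled with the swap of design parameters. Once this is noted, the proof is essentially a transcription of the proof of Theorem~\ref{base}, and the corresponding \texttt{Maple} file could be archived together with the ones already referenced in~\cite{codes}.
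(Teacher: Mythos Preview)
Your proposal is correct and follows the paper's own approach: the paper proves Theorem~\ref{platform} in a single line, ``Clearly, this theorem also holds by exchanging the platform and the base,'' i.e.\ precisely the base--platform involution you spell out. Your auxiliary polynomials $G_1,G_2$ coincide with what the paper later calls $F_3,F_4$ (introduced just before Theorem~\ref{eucledian}), and your two options for the second statement both amount to transporting the proof of Theorem~\ref{base} through this symmetry; the only minor imprecision is the phrase ``Euclidean motion acting on the platform anchor points''---as in Lemma~\ref{invar} and Theorem~\ref{base}, the motion acts on all six $\mathbf{k}'_i$, though of course only the platform coordinates enter $E_P,G_1,G_2$.
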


If we restrict both base and platform to be transformed by 
Euclidean motions, then our set $V_{BP}=0$ of singular configurations are obtained by intersecting the varieties $V_B=0$ and $V_P=0$. 
Note that the variety  $V_P=0$ is also  obtained as the intersection of the four hypersurfaces $V=0$, $E_P=0$, $F_3=0$, $F_4=0$ with 
\begin{equation}
F_{3}=(c_{5}-c_{4})x_{6}+(d_{4}-d_{5})y_{6}+(c_{4}-c_{6})x_{5},\quad
F_{4}=(d_{5}-d_{4})x_{6}+(c_{5}-c_{4})y_{6}+(d_{4}-d_{6})x_{5}
\end{equation}
where the latter two conditions are implied by Eq.\ (\ref{PBR2}). 

\begin{thm}\label{eucledian}
The set of singular points of the singularity variety $V_{BP}=0$ remains invariant under Euclidean motions. Moreover, these points are also characterized by the four cases given in Theorem \ref{col1}.
\end{thm}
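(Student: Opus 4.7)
The plan is to mirror the two-step argument of Theorems \ref{base} and \ref{platform}, but now applied to the intersection $V_{BP}=0$, which we describe as the variety cut out by the seven polynomials $V, E_B, F_1, F_2, E_P, F_3, F_4$ (the first four coming from the base restriction, the latter three from the platform restriction on top of $V$).

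For the invariance part, I would lift the arguments of Theorems \ref{base} and \ref{platform} to a product action. A base Euclidean displacement $\mathbf{k}_i'\mapsto \mathbf{R}(\phi_B)\mathbf{k}_i'+\mathbf{t}_B$ for $i=1,2,3$ leaves $V$ and $E_B$ fixed and rotates the pair $(F_1,F_2)$ by $\mathbf{R}(\phi_B)$ exactly as in the proof of Theorem~\ref{base}; it acts trivially on $E_P, F_3, F_4$ because these involve only platform coordinates. Symmetrically, a platform displacement acts trivially on $E_B, F_1, F_2$, leaves $V, E_P$ fixed, and rotates $(F_3,F_4)$. The two actions commute since they touch disjoint sets of variables, so the product group $SE(2)\times SE(2)$ induces a linear automorphism of the ideal $\langle V,E_B,F_1,F_2,E_P,F_3,F_4\rangle$, and hence maps regular points of $V_{BP}=0$ to regular points and singular ones to singular ones.

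For the characterization, I would set up the combined Lagrange expression
\begin{equation*}
L = \lambda_0 V + \lambda_1 E_B + \lambda_2 F_1 + \lambda_3 F_2 + \lambda_4 E_P + \lambda_5 F_3 + \lambda_6 F_4,
\end{equation*}
differentiate with respect to the twelve anchor coordinates $c_i, d_i$ ($i=1,\ldots,6$) as well as the six design parameters $x_2,x_3,y_3,x_5,x_6,y_6$, and then append the seven defining equations themselves. Exploiting the invariance just established, I would reduce the system by fixing $c_1=d_1=d_2=0$ (and, if needed for tractability, also $c_4=d_4=d_5=0$), and feed the resulting system into the Gröbner basis engine of \texttt{Maple}, as was done for the earlier theorems. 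The expected partition of the solution list is: solutions with $x_2=0$ or $x_5=0$ (excluded by Lemma~\ref{lem:app}), trivial solutions with all $\lambda_j=0$, and genuine solutions in which all multipliers except $\lambda_0$ vanish, so that the point is already singular on $V=0$ and therefore falls into one of the four cases of Theorem~\ref{col1}. Sufficiency then follows by direct substitution as in Theorem~\ref{col1}.

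The main obstacle I anticipate is computational: the system has seven polynomial constraints in up to nineteen unknowns (after the normalization), and the degrees introduced by $V$ make a brute-force Gröbner basis expensive. I would mitigate this by maximally exploiting the $SE(2)\times SE(2)$ action to eliminate frame parameters on both sides at once and, if the symbolic engine still struggles, by decomposing the ideal via primary decomposition or by branching on whether each $\lambda_j$ vanishes, which is precisely the bookkeeping that makes the analogous proofs of Theorems~\ref{base} and~\ref{platform} terminate in a manageable number of components.
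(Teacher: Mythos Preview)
Your invariance argument contains a genuine error: a base-only Euclidean displacement (moving $\mathbf{k}_1',\mathbf{k}_2',\mathbf{k}_3'$ while fixing $\mathbf{k}_4',\mathbf{k}_5',\mathbf{k}_6'$) does \emph{not} leave $V$ fixed. Geometrically, $V=0$ expresses concurrence of the three leg carrier lines, and moving only one endpoint of each line destroys concurrence in general; e.g.\ three parallel vertical legs become non-concurrent after rotating only the base anchors by $90^\circ$. Hence the product action $SE(2)\times SE(2)$ does not preserve $V_{BP}=0$, and your proposed extra normalization $c_4=d_4=d_5=0$ is unjustified. Only the diagonal $SE(2)$ (the same rigid motion applied to all six points simultaneously) is available---which is exactly what the paper invokes, citing the proofs of Theorems~\ref{base} and~\ref{platform}---and that legitimizes $c_1=d_1=d_2=0$ alone.

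For the characterization, the paper does not attempt your direct Gr\"obner solve. After the single normalization $c_1=d_1=d_2=0$, it exploits Lemma~\ref{lem:app} to assume $x_2x_5\neq 0$ and then \emph{adds the scaling relation} $x_2x_5=1$ to the ideal. Because the system is homogeneous in $(\lambda_0,\ldots,\lambda_6)$, it then dehomogenizes by setting $\lambda_j=1$ for each $j\in\{1,\ldots,6\}$ in turn and verifies, via Hilbert's Nullstellensatz (the reduced Gr\"obner basis equals $\{1\}$), that each resulting ideal has empty variety. This establishes directly that no type~(II) singular points exist---i.e.\ no points where the seven tangent hyperplanes are linearly dependent with some $\lambda_j\neq 0$ for $j\geq 1$---forcing $(\lambda_0:\lambda_1:\ldots:\lambda_6)=(1:0:\ldots:0)$ and reducing everything to the singular locus of $V=0$ already classified in Theorem~\ref{col1}. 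Your fallback of ``branching on whether each $\lambda_j$ vanishes'' is headed in this direction, but the two missing ingredients---the scale normalization $x_2x_5=1$ and the emptiness certificate in place of a full solve---are precisely what makes the computation tractable.
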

\begin{proof}
The first statement follows directly from the fact that Euclidean motions imply a linear automorphism of $V_{BP}=0$, which results from the proofs of Theorems \ref{base} and \ref{platform}.

Therefore, we can focus on proving the second part of the theorem.
A singular point of the singularity variety $V_{BP}=0$ 
is either (I) a singular point of  $V_B=0$ or $V_P=0$ or (II) it is a point, where the seven tangent hyperplanes to the seven hypersurfaces  
\begin{equation}\label{seven}
V=0,\quad E_{B}=0,\quad E_P=0, \quad F_{1}=0,\quad F_{2}=0,\quad F_3=0,\quad F_4=0, 
\end{equation}
are linearly dependent. Algebraically this can be expressed by the set of equations resulting from the partial differentiation of 
\begin{equation}
\lambda_0 V+\lambda_{1} E_{B}+\lambda_{2} F_{1}+\lambda_{3} F_{2}+\lambda_{4} E_{P}+\lambda_{5} F_{3}+\lambda_{6} F_{4}
\label{nabla1}
\end{equation}
with respect to the 25 unknowns
\begin{equation}
    c_1,\ldots,c_6,d_1,\ldots, d_6, x_2,x_3, x_5, x_6,y_3, y_6, \lambda_0,\ldots,\lambda_6.
\end{equation}
They form the following ideal:
\begin{equation}
\langle g_{1}, \dots, g_{25}\rangle \subseteq \mathbb{K}[c_i,d_i, x_2,x_3, x_5, x_6,y_3, y_6, \lambda_0,\ldots,\lambda_6] \quad i= 1, \dots ,6.
\label{idealnew}
\end{equation}
Due to the first sentence of the theorem, we can set $c_{1}=d_{1}=d_2=0$.
Moreover, due to Lemma \ref{lem:app} given in \ref{app:lem} we can assume without loss of generality that $x_{2}x_{5}\neq 0$ holds. 
Thus by assuming a suitable scale unit we can set  $x_{2}x_{5}=1$. 
By adding this equation to the ideal of Eq.\  \ref{idealnew}, we end up with:
\begin{equation}
\mathcal{I}:=\langle g_{1}, \dots, g_{26}\rangle \subseteq \mathbb{K}[c_{2},\ldots,c_6,d_{3},\ldots,d_6,x_2,x_3, x_5, x_6,y_3, y_6,\lambda_0,\ldots,\lambda_6 ] 
\label{ideal10}
\end{equation}
In the following, we prove that no singular points of type (II)  exist. By setting $\lambda_j=1$ for $j\in\left\{1,\dots,6\right\}$ we  obtain the ideal
\begin{equation}
\mathcal{I}_{j}:=\langle g_{1}, \dots, g_{26}\rangle \subseteq \mathbb{K}[c_{2},\ldots,c_6,d_{3},\ldots,d_6,x_2,x_3, x_5, x_6,y_3, y_6,\lambda_0,\ldots \lambda_{j-1}, \lambda_{j+1},\ldots, \lambda_6 ]
\label{ideal11}
\end{equation}
The corresponding basis with respect to graded reverse lexicographic order 
is denoted by $\mathcal{B}_{j}$. According to \cite{sturmfels2005grobner}, \textit{Hilbert’s Nullstellensatz} implies that 
the variety of $\mathcal{I}_{j}$ is empty if and only if $\mathcal{B}_{j}=\left\{1\right\}$. 

It can be verified by using \texttt{Maple} that 
$\mathcal{B}_{j}=\left\{1\right\}$ holds true. The used \texttt{Maple} file can be downloaded from ~\cite{codes}.

Therefore, the 25 equations $g_i$ of Eq.\ (\ref{idealnew}), which are homogeneous with respect to $\lambda_0,\ldots,\lambda_6$, can only have a solution  for $(\lambda_0:\lambda_1:\ldots:\lambda_6)=(1:0:\ldots :0)$. Furthermore,  they belong to type (I).

 Clearly, the design parameters $x_2,x_3, x_5, x_6,y_3, y_6$ have to fulfill certain conditions such that the four cases given in Theorem \ref{col1} are feasible. Case 1 can only occur when the base and platform are both linear; case 2 when the base triangle and platform triangle have an equal corresponding height; 
 case 3 when they have an equal corresponding side and case 4 when they have an equal corresponding angle.
\end{proof}

\subsubsection{Parametrizing the set of singular points of the singularity variety}\label{sec:para}
To include the set of singular points from the singularity variety in our computation of the singularity distance, we parameterize it. We restrict ourselves to the singular points corresponding to case 1 of Theorem \ref{col1}. In practice, no leg can have zero length; thus, cases 2-4 are not of interest. As case 1 of Theorem \ref{col1} imposes more restrictive conditions than the collinearity conditions in Section~\ref{collinearsing}, the following relations must hold:
\begin{equation}
D_\star^{\vartriangle}(\mathbf{K},\mathbf{K}')\leq
D_\star^{\vartriangle}(\mathbf{K},\mathbf{K}''),\quad
D^\circ_{\vartriangle}(\mathbf{K},\mathbf{K}')\leq
D^\circ_{\vartriangle}(\mathbf{K},\mathbf{K}'') \quad
\text{with}\quad
\star,\circ\in\left\{\blacktriangle, \vartriangle, \hrectangleblack\right\}
\end{equation}
where $\mathbf{K}'$ is the global minimizer of Theorem 
\ref{regression1} and \ref{regression2}, respectively, and
$\mathbf{K}''$ denotes the closest singular point belonging to case 1 of  Theorem \ref{col1}. 

Therefore one only has to compute $\mathbf{K}''$
for the metrics $D^{\circ}_\star(\mathbf{K,K''})$ with $\circ,\star \in \{{\blacktriangle}, {\hrectangleblack}\}$, which is done next:

\begin{itemize}
\item
$D_\blacktriangle^{\blacktriangle}(\mathbf{K},\mathbf{K}'')$: 
The point $\mathbf{k}''_{1}$ is parameterized by its coordinates $(a,b)^T$  and the remaining points by:
\begin{equation}\label{eq:para}
    \mathbf{k}_{i+1}''=\mathbf{k}_{1}''+\delta_{i} \begin{pmatrix}
   e^2_{0}-e^2_{1} \\
   2e_{0}e_{1} 
    \end{pmatrix}     \quad    \textnormal{for} \quad i=1,\dots,5. 
\end{equation}
Now we minimize $D_\blacktriangle^{\blacktriangle}(\mathbf{K},\mathbf{K}'')$ under the normalization condition $N=0$ with $N:=e^2_{0}+e^2_{1}-1$, which results in the following Lagrangian formulation:
\begin{equation}
L=D_\blacktriangle^{\blacktriangle}(\mathbf{K},\mathbf{K}'')^2+\lambda N.
\label{singularformulation}
\end{equation}
\item
$D_\hrectangleblack^{\blacktriangle}(\mathbf{K},\mathbf{K}'')$: Clearly, $\mathbf{K}''$ can only exist if the base points of the given manipulator are collinear. If this is the case the parametrization can be done analogously to Eq.\ (\ref{eq:para}) with the sole difference that $\delta_1$ and $\delta_2$ are already known as they have to equal $x_2$ and $x_3$, respectively. 
\item
$D^\hrectangleblack_{\blacktriangle}(\mathbf{K},\mathbf{K}'')$: The same procedure can be applied as in the last case, just by swapping the roles of the platform and the base. 
\item
$D_\hrectangleblack^{\hrectangleblack}(\mathbf{K},\mathbf{K}'')$: Now $\mathbf{K}''$ can only exist if the given base points are collinear as well as the platform points. If this is the case the parametrization can be done as in Eq.\ (\ref{eq:para}) under consideration of $\delta_1=x_2,\delta_2=x_3$ with $\delta_4=\delta_3 \pm x_5$ and $\delta_5=\delta_3 \pm x_6$. 
Due to $\pm$ one has to run two optimization problems with six unknowns $a,b,\delta_3,e_0,e_1,\lambda$ and take the minimum over both. 
\end{itemize}

\begin{table}[]
\caption{Summary of the Lagrange formulations of the optimization problems of Section \ref{sec:para}.}
\centering
\begin{tabular}{|l||l|l|}
\hline
Extrinsic metric & unknowns in the Lagrangian & $\#$ unknowns \\ \hline
$D_{\blacktriangle}^\blacktriangle(\mathbf{K},\mathbf{K}')$                  &  $a, b, \delta_{1},  \delta_{2}, \delta_{3},  \delta_{4},  \delta_{5}, e_{0}, e_{1}, \lambda$                         &     $10$     \\ \hline
$D_{\hrectangleblack}^\blacktriangle(\mathbf{K},\mathbf{K}')$, $D_{\blacktriangle}^{\hrectangleblack}(\mathbf{K},\mathbf{K}')$                &   $a, b,  \delta_{3},  \delta_{4},  \delta_{5}, e_{0}, e_{1}, \lambda$     &    $8$      \\ \hline
$D_{\hrectangleblack}^{\hrectangleblack}(\mathbf{K,K'})$                   &      $a, b,  \delta_{3},  e_{0}, e_{1}, \lambda$                        &    $6$      \\ \hline
\end{tabular}
\label{singunknowns}
\end{table}
\subsection{Singular points of the collinearity variety}\label{collinearsing}
It can easily be verified by the reader that the following theorem, which is illustrated in Fig.\ \ref{characterization1}, holds true:
\begin{thm}\label{col3}
The set of singular points of the collinearity variety $C_B=0$ (resp.\ $C_P=0$) of the base (resp.\ platform) remains invariant under affine motions. Moreover, these points are characterized by 
$\mathbf{k}'_1=\mathbf{k}'_2=\mathbf{k}'_3$ (resp.\  
$\mathbf{k}'_4=\mathbf{k}'_5=\mathbf{k}'_6$).
\end{thm}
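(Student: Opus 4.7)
The plan is to verify both claims by direct calculation, exploiting the fact that $C_B$ (and by symmetry $C_P$) is a simple bilinear expression in the coordinates, so its gradient has only six components.

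First, for the affine-invariance part, I would mimic the strategy used in Lemma \ref{invar}. Under the affine action $\mathbf{k}'_i \mapsto \mathbf{A}\mathbf{k}'_i+\mathbf{a}$ with $\det \mathbf{A}\neq 0$, each column $(1,c_i',d_i')^T$ of the $3\times 3$ matrix in Eq.~(\ref{coll}) is mapped by the block-matrix $\operatorname{diag}(1,\mathbf{A})$ composed with a translation that, because the first row consists of ones, only adds multiples of the first row to rows two and three and hence leaves the determinant unchanged. This yields $C_B \mapsto \det(\mathbf{A})\,C_B$, which is a linear automorphism of the variety $C_B=0$ and therefore sends regular points to regular points and singular points to singular points. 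The argument for $C_P$ is identical.

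Second, for the characterization, I would expand the determinant as
\begin{equation}
C_B = c_1(d_2-d_3)+c_2(d_3-d_1)+c_3(d_1-d_2),
\end{equation}
and then compute all six partial derivatives directly:
\begin{align}
\partial_{c_1}C_B &= d_2-d_3, & \partial_{c_2}C_B &= d_3-d_1, & \partial_{c_3}C_B &= d_1-d_2,\\
\partial_{d_1}C_B &= c_3-c_2, & \partial_{d_2}C_B &= c_1-c_3, & \partial_{d_3}C_B &= c_2-c_1.
\end{align}
Setting all six to zero immediately forces $c_1=c_2=c_3$ and $d_1=d_2=d_3$, i.e.\ $\mathbf{k}'_1=\mathbf{k}'_2=\mathbf{k}'_3$; conversely, when the three points coincide, $C_B$ and its gradient vanish identically, so such points do lie on $C_B=0$ and are singular. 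The case for $C_P$ is analogous, giving $\mathbf{k}'_4=\mathbf{k}'_5=\mathbf{k}'_6$.

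There is no real obstacle here; the only point worth noting explicitly is consistency with the invariance claim, namely that the condition $\mathbf{k}'_1=\mathbf{k}'_2=\mathbf{k}'_3$ is obviously preserved by any affine map, which reconfirms the first statement at the level of the characterized point set.
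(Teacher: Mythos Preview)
Your proposal is correct and is precisely the elementary verification the paper leaves to the reader (the paper gives no proof beyond the sentence ``It can easily be verified by the reader\ldots''). The only cosmetic point is that in the ambient configuration space $\mathbb{R}^{12}$ the gradient of $C_B$ formally has twelve components, six of which are identically zero; your phrase ``has only six components'' should be read in that sense.
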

\begin{figure}[b]
 \begin{center}
 \begin{overpic}[width=6cm]{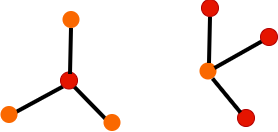}
   \begin{small}
\put(25,0){(a)}
\put(75,0){(b)}
\end{small}
\end{overpic}
\caption{Schematic sketch of the geometric characterization of singular points of the collinearity variety (a) $C_{P}=0$ (b) $C_{B}=0$. Base and platform anchor points are indicated in orange and red, respectively.}\label{characterization1}
 \end{center}
\end{figure}

As the condition that three points collapse to a single point is more restrictive than the condition that these three points are collinear the following relation has to hold:

\begin{equation}\label{eq:ineq}
D_\star^{\vartriangle}(\mathbf{K},\mathbf{K}')\leq
D_\star^{\vartriangle}(\mathbf{K},\mathbf{K}'')\quad\text{for}\quad
\star\in\left\{\blacktriangle, \vartriangle\right\}
\end{equation}
where $\mathbf{K}'$ is the global minimizer of Theorem \ref{regression1} and 
$\mathbf{K}''$ denotes the closest singular point of 
the collinearity variety $C_P=0$. 
We only have to take a closer look at the case of equality in Eq.\ (\ref{eq:ineq}). According to the geometric interpretation given in Theorem \ref{regression1} the three points $\mathbf{k}'_4,\mathbf{k}'_5,\mathbf{k}'_6$ can only
collapse into one point if they are already collinear which contradicts our assumption that the given configuration is not singular. Therefore  Eq.\ (\ref{eq:ineq}) can be sharpened as follows:
\begin{equation}\label{eq:ineq2}
D_\star^{\vartriangle}(\mathbf{K},\mathbf{K}')<
D_\star^{\vartriangle}(\mathbf{K},\mathbf{K}'')\quad\text{for}\quad
\star\in\left\{\blacktriangle, \vartriangle\right\}.
\end{equation}
Analogous considerations to the above case yield the inequality
\begin{equation}\label{eq:ineq3}
D_{\vartriangle}^{\circ}(\mathbf{K},\mathbf{K}')<
D_{\vartriangle}^{\circ}(\mathbf{K},\mathbf{K}'')\quad\text{for}\quad\circ\in\left\{\blacktriangle, \vartriangle\right\},
\end{equation}
where $\mathbf{K}'$ is the global minimizer of 
Theorem \ref{regression2} and $\mathbf{K}''$ denotes the closest singular point of 
the collinearity variety $C_B=0$. 

Due to the validity of the Eqs.\ (\ref{eq:ineq2}) and 
(\ref{eq:ineq3}), we can abstain from computing the  configurations $\mathbf{K}''$ for the respective metrics.

If we restrict the base (resp.\ platform) to be transformed by Euclidean motions,  then our set of collinear configurations is only a subset $\tilde{C}_P$ (resp.\ $\tilde{C}_B$) of $C_P$ (resp.\ $C_B$). The singular points of these subsets have the same geometric characterization as given in Theorem \ref{col3}, which is proven next:
\begin{thm}\label{collinearbase}
The set of singular points of the collinearity variety  $\tilde{C}_{P}=0$ remains invariant under Euclidean motions. Moreover, these points are characterized by
$\mathbf{k}'_4=\mathbf{k}'_5=\mathbf{k}'_6$.
\end{thm}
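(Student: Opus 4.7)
The plan is to mirror the proof of Theorem~\ref{base}, swapping the roles of the base and the platform, and replacing the singularity variety by the collinearity variety. Specifically, $\tilde C_P=0$ is the intersection of the four hypersurfaces $C_P=0$, $E_P=0$, $F_3=0$, $F_4=0$, and I would work with these defining equations throughout.

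First I would establish invariance under Euclidean motions by applying $\mathbf{k}'_i\mapsto \mathbf{R}\mathbf{k}'_i+\mathbf{t}$ for $i=4,5,6$. The polynomial $C_P$ is scaled by $\det\mathbf{R}=1$ (after eliminating the translation by row operations exploiting the top row of ones), hence preserved; $E_P$ is invariant as a squared distance; and a direct computation shows $F_3,F_4$ transform linearly via the rotation exactly as $F_1,F_2$ do in Theorem~\ref{base}, namely $F_3\mapsto \cos\phi\, F_3-\sin\phi\, F_4$ and $F_4\mapsto \sin\phi\, F_3+\cos\phi\, F_4$. Hence Euclidean motions induce a linear automorphism of $\tilde C_P=0$, which in particular preserves the singular locus, and I can normalize $c_4=d_4=d_5=0$ without loss of generality.

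Next I would split singular points of $\tilde C_P=0$ into type~(I), points that lie on a singularity of one of the four defining hypersurfaces, and type~(II), points where the four gradients are linearly dependent without being individually singular. The polynomials $F_3,F_4$ are linear and hence smooth everywhere; the singular locus of $E_P=0$ is $\mathbf{k}'_4=\mathbf{k}'_5$, excluded since $\|\mathbf{p}_5-\mathbf{p}_4\|\neq 0$; and by Theorem~\ref{col3} the singular locus of $C_P=0$ is exactly $\mathbf{k}'_4=\mathbf{k}'_5=\mathbf{k}'_6$. This settles sufficiency immediately, because at such a point the gradient of $C_P$ vanishes identically so the point is automatically a singular point of the intersection. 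What remains is to rule out type~(II).

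Finally, following the strategy of Theorem~\ref{eucledian}, I would form the Lagrangian
\[
\lambda_0 C_P+\lambda_1 E_P+\lambda_2 F_3+\lambda_3 F_4,
\]
differentiate with respect to $c_4,c_5,c_6,d_4,d_5,d_6$, adjoin the four defining equations together with the normalizations $c_4=d_4=d_5=0$ from step one (and rescale to $x_5=1$ via Lemma~\ref{lem:app}), then successively set $\lambda_j=1$ for $j\in\{1,2,3\}$ and compute a graded reverse lexicographic Gr\"obner basis of each resulting ideal in \texttt{Maple}. If every basis reduces to $\{1\}$, Hilbert's Nullstellensatz shows the corresponding variety is empty, so the 25-line homogeneous system in $\lambda_0,\ldots,\lambda_3$ forces $\lambda_1=\lambda_2=\lambda_3=0$, reducing type~(II) to type~(I). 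The main obstacle is precisely this symbolic step: although the ideal lives in fewer unknowns than in Theorem~\ref{eucledian} and should be tractable, its actual collapse to $\{1\}$ must be verified by computation, and any surviving branch would have to be interpreted geometrically and either shown to collapse into $\mathbf{k}'_4=\mathbf{k}'_5=\mathbf{k}'_6$ or ruled out by the assumed non-degeneracy of the platform design.
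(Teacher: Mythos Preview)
Your proposal contains a genuine misidentification of the variety $\tilde C_P$. In the paper, $\tilde C_P$ arises from the metric $D_{\rectangleblack}^{\vartriangle}$, where the \emph{base} is undeformable and the \emph{platform} is the triangular bar structure that may become collinear. Consequently $\tilde C_P$ is the intersection of $C_P=0$ with the \emph{base} Euclidean constraints $E_B=0$, $F_1=0$, $F_2=0$ (exactly the system used in the paper's proof), not with $E_P=0$, $F_3=0$, $F_4=0$. The Euclidean motion in the invariance step must also be applied to all six points $\mathbf{k}'_1,\ldots,\mathbf{k}'_6$, not only to the platform triple; and the admissible normalization is $c_1=d_1=d_2=0$, not $c_4=d_4=d_5=0$.

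That this is not a harmless relabeling is visible in your own sufficiency argument: at a point with $\mathbf{k}'_4=\mathbf{k}'_5=\mathbf{k}'_6$ one has $E_P=-\|\mathbf{p}_5-\mathbf{p}_4\|^2\neq 0$, so such a point does not even lie on your candidate variety, let alone on its singular locus. With the correct constraints $E_B,F_1,F_2$ this obstruction disappears, since those conditions involve only the base coordinates. There is also a methodological difference to note: the paper does not run the Nullstellensatz case split $\lambda_j=1$ borrowed from Theorem~\ref{eucledian}, but instead differentiates $\lambda_0 C_P+\lambda_1 E_B+\lambda_2 F_1+\lambda_3 F_2$ with respect to all $19$ unknowns of Eq.~(\ref{unknowns}), solves the resulting system directly by Gr\"obner bases, and sorts the $24$ solution branches into those with $x_2=0$ (excluded by Lemma~\ref{lem:app}), trivial ones with all $\lambda_i=0$, and two branches with $\lambda_1=\lambda_2=\lambda_3=0$ that recover the singular points of $C_P=0$ from Theorem~\ref{col3}.
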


\begin{proof}
The first sentence of the theorem can be done similarly to the 
proof of Theorem \ref{base}.

Therefore, we can proceed with the second part.

A singular point of $\tilde{C}_{P}=0$ is either a singular point of one of the four hypersurfaces  $C_P=0$, $E_B=0$, $F_1=0$, $F_2=0$ or it is a point, where the four tangent hyperplanes to these four hypersurfaces are linearly dependent. Algebraically this can be expressed by the set of equations resulting from the partial differentiation of 
    \begin{equation}
\lambda_0C_{P}+\lambda_{1} E_B+\lambda_{2}F_{1}+\lambda_{3}F_{2}
\label{main6}
\end{equation}
with respect to the 19 unknowns listed in Eq.\ (\ref{unknowns}). 
Due to the first part of the proof we can set $c_{1}=d_{1}=d_{2}=0$ in order to simplify the set of 19 equations, 
which is solved by using Gr\"obner basis method implemented in \texttt{Maple}.

The corresponding \texttt{Maple} file, which results in  $24$ solution sets, can be downloaded from ~\cite{codes}. Out of the obtained $24$ solution sets, there are $16$ solutions sets with $x_{2}=0$, which contradicts Lemma \ref{lem:app}. Moreover there are six trivial solution sets with $\lambda_{0}=\lambda_{1}=\lambda_{2}=\lambda_{3}=0$ (and $x_{2}\neq 0$). The remaining two solution sets correspond to singular points of $C_{P}=0$.
\end{proof}

Clearly, this theorem also holds by exchanging the platform and the base which yields:

\begin{thm}\label{collinearplatform}
The set of singular points of the collinearity variety  $\tilde{C}_{B}=0$ remains invariant under Euclidean motions. Moreover, these points are characterized by
 $\mathbf{k}'_1=\mathbf{k}'_2=\mathbf{k}'_3$.
\end{thm}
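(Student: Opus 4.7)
The plan is to mirror the proof of Theorem~\ref{collinearbase} by swapping the roles of the base and the platform throughout. In the present setting the platform anchor points are free to undergo an affine deformation (up to the collinearity constraint $C_B=0$ on the base), while the base anchor points are now constrained to move by Euclidean motions; this is encoded by the hypersurfaces $E_P=0$, $F_3=0$, $F_4=0$ instead of $E_B=0$, $F_1=0$, $F_2=0$.

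First I would prove the invariance statement in the same spirit as Theorems~\ref{base} and~\ref{collinearbase}. Under the Euclidean action $\mathbf{k}'_i\mapsto \mathbf{R}\mathbf{k}'_i+\mathbf{t}$, the polynomial $C_B$ gets multiplied by $\det\mathbf{R}=1$, the length equation $E_P$ is trivially preserved, and the pair $(F_3,F_4)$ is mapped to $(\cos\phi\,F_3-\sin\phi\,F_4,\;\sin\phi\,F_3+\cos\phi\,F_4)$ by the same calculation that appeared for $(F_1,F_2)$ in the proof of Theorem~\ref{base}. Thus the ideal cutting out $\tilde{C}_B=0$ is mapped to itself by a linear automorphism, so regular points go to regular points and singular points to singular points.

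Next I would set up the singularity equations. A singular point of $\tilde{C}_B=0$ is either a singular point of one of the four hypersurfaces $C_B=0$, $E_P=0$, $F_3=0$, $F_4=0$, or a point at which their four gradients are linearly dependent. Algebraically, this is captured by the partial derivatives of
\begin{equation}
\lambda_0 C_B+\lambda_1 E_P+\lambda_2 F_3+\lambda_3 F_4
\end{equation}
with respect to the 19 unknowns $c_1,\dots,c_6,d_1,\dots,d_6,x_5,x_6,y_6,\lambda_0,\dots,\lambda_3$. By the invariance just established, I may normalize the platform frame by setting $c_4=d_4=d_5=0$, which simplifies the system considerably.

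Finally, I would solve the resulting system with a Gr\"obner basis computation in \texttt{Maple}, exactly as in the proof of Theorem~\ref{collinearbase}. By analogy with that proof I expect the output to split into three types: solution sets with $x_5=0$, which are excluded by Lemma~\ref{lem:app}; trivial sets with $\lambda_0=\lambda_1=\lambda_2=\lambda_3=0$; and a small number of remaining sets which should force $\mathbf{k}'_1=\mathbf{k}'_2=\mathbf{k}'_3$, corresponding to genuine singular points of $C_B=0$ itself (the characterization supplied by Theorem~\ref{col3}). The main obstacle is ensuring that the Gr\"obner basis classification is genuinely symmetric to the one for $\tilde{C}_P=0$ and that no extraneous component arises from the different role now played by the design parameters $x_5,x_6,y_6$; once this is checked, sufficiency of the condition $\mathbf{k}'_1=\mathbf{k}'_2=\mathbf{k}'_3$ follows by direct verification that the gradients of the four defining polynomials indeed collapse at such points.
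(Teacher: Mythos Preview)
Your plan is essentially the paper's own: it proves Theorem~\ref{collinearplatform} by the base/platform swap of Theorem~\ref{collinearbase}, using $C_B,E_P,F_3,F_4$ in place of $C_P,E_B,F_1,F_2$, the design parameters $x_5,x_6,y_6$ in place of $x_2,x_3,y_3$, and the normalization $c_4=d_4=d_5=0$; so the approach is correct and matches the paper. One slip to fix in your prose: in the first paragraph you state that the platform points are free and the base points are Euclidean-constrained, but for $\tilde{C}_B$ it is the \emph{platform} that is Euclidean-constrained (which is exactly what $E_P,F_3,F_4$ encode) while the base points are free subject to $C_B=0$---your subsequent equations and normalization are consistent with this correct reading, so only the wording needs adjusting.
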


If the base or platform is made of undeformable material we have to compute the closest singular points $\mathbf{K}''$, on the collinearity varieties $\tilde{C}_{P}=0$ and $\tilde{C}_{B}=0$ of  Theorems \ref{collinearbase} and \ref{collinearplatform},  which can be done by solving the optimization problems formulated in the following two equations:

\begin{equation} \label{cc1}
 L=D_{\rectangleblack}^{\vartriangle}(\mathbf{K},\mathbf{K}'')^2+\mu E_B,
\end{equation}
where the three platform anchor points of $\mathbf{K}''$
degenerate to a point $\mathbf{k}'':=(c,d)^T$ and with 
$\mathbf{k}''_{3}$ according to Eq.~\ref{PBR1};
 \begin{equation} \label{cc2}
 L=D^{\rectangleblack}_{\vartriangle}(\mathbf{K},\mathbf{K}'')^2+\kappa E_p,
\end{equation}
where the three base anchor points of $\mathbf{K}''$
degenerate to a point $\mathbf{k}'':=(c,d)^T$ and with $\mathbf{k}''_{6}$ according to Eq.~\ref{PBR2}.
The number of unknowns for these two optimization problems is summarized in Table~\ref{spcv}.

\begin{table}[]
\caption{
Summary of the Lagrange formulations of the constrained optimization problems of Section \ref{collinearsing}.}
\centering
\begin{tabular}{|l||l|l|}
\hline
Extrinsic metric & unknowns in the Lagrangian & $\#$ unknowns \\ \hline
$D_{\rectangleblack}^{\vartriangle}(\mathbf{K},\mathbf{K}')$               &     $c_{1}, d_{1}, c_{2},d_{2}, c, d, \mu$                        &      $7$    \\ \hline
$D^{\rectangleblack}_{\vartriangle}(\mathbf{K},\mathbf{K}')$                 &   $c, d, c_{4},d_{4}, c_{5}, d_{5}, \kappa$                           &    $7$        \\ \hline
\end{tabular}
\label{spcv}
\end{table}

\section{Computational procedure}\label{results}
In this section, we introduce a pipeline for computing the closest singular configuration along a one-parametric motion, which is discretized into $n$ poses, as illustrated in Fig.\ \ref{inter}. We employ a numerical algebraic geometry algorithm implemented in the freeware \texttt{Bertini} to calculate generic finite solutions and enable user-defined homotopy. For each of the $n$ poses along the one-parametric motion, we compute critical points using \texttt{Paramotopy}~\cite{bates2018paramotopy}.
 The practical reasons for utilizing the software \texttt{Bertini} are discussed in \cite[Section 1]{kapilavai2020homotopy}. A detailed discussion of homotopy continuation algorithms and the functionality of the software is beyond the scope of this paper; for that, we refer readers to~\cite{BHSW06, sommese2005numerical}.

 \begin{figure}[t]
    \begin{center}
       \includegraphics[width=16cm]{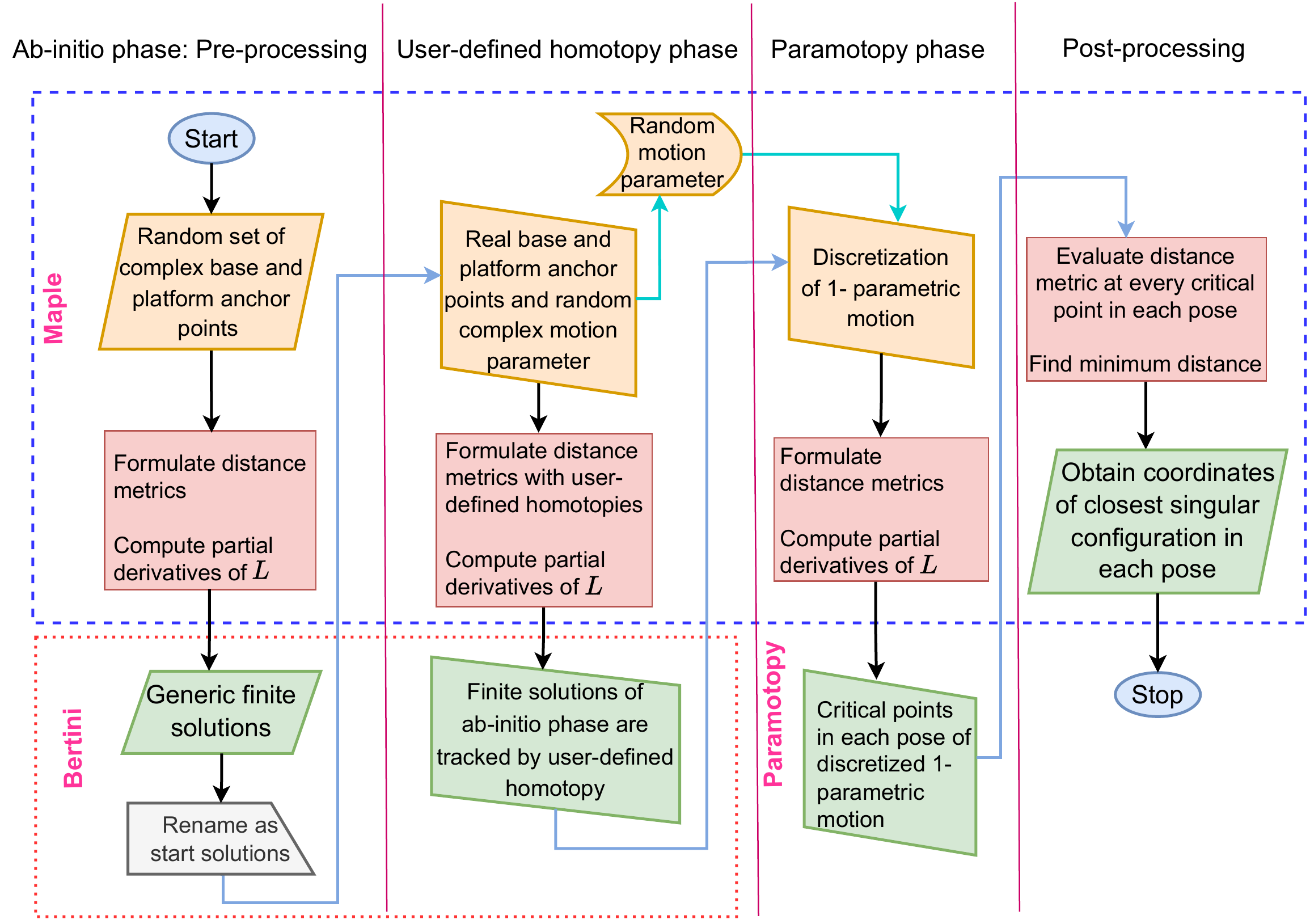}
      \caption{Operational flowchart illustrating the computational pipeline.}
      \label{flow}
    \end{center}
\end{figure}
The
computations were performed in parallel using a total of 64 threads using AMD Ryzen 7 2700X, 3.7 GHz processor. The computational procedure can be divided into four steps, which are discussed within the following subsections and are illustrated in Fig.~\ref{flow} of the operational flowchart.

\subsection{Step 1: Ab-Initio phase}\label{step1}  
We start with a generic framework by picking randomly all nine remaining
coordinates blue (besides $x_1=y_1=y_2=0$) with respect to the fixed frame from the set of complex numbers $\mathbb{C}$. Therefore, we denote the resulting vectors of the anchor points with respect to the fixed frame by $\mathbf{k}^{\mathbb{C}}_i$ for $i=1,\ldots,6$ and consequently the complex configuration by $\mathbf{K}^{\mathbb{C}}$, which is called   {\it source configuration.}
For this input, we want to find the critical points over $\mathbb{C}$,
 for each of the Lagrange optimization problems formulated in Section~\ref{sec:singvar}, Section~\ref{collinearity},  and Section~\ref{sec:para} and Section~\ref{collinearsing}. The corresponding unknowns in $L$  for the optimization problems are summarized in Table~\ref{extab1}, Table~\ref{singunknowns}, and Table~\ref{spcv}, respectively. As already mentioned the number of unknowns for the optimization problems presented in Section~\ref{collinearity} is the same as summarized in Table~\ref{extab1}.
 
Note that the partial derivatives of $L$ with the listed unknowns always result in the square system of non-homogeneous polynomial equations. The critical points obtained from the {\it ab-initio phase} are referred to as solutions to the {\it source configuration.} 
 
 As \texttt{Bertini} computes solutions to the system of polynomial equations with numerical approximations up to 16 digits (default), it is not possible to determine the exact root count beforehand without tracking all the paths. For the optimization problems presented in Sections \ref{sec:singvar}, \ref{collinearity}, \ref{sec:para}, and \ref{collinearsing}, we must identify the global minimizers since they correspond to the closest singular configuration. Therefore, it is necessary to track all the finite solutions over $\mathbb{C}$ without encountering any numerical errors (such as path failures) in the ab-initio phase.

The input file needed for \texttt{Bertini} to compute the solutions of the source configuration is pre-processed in \texttt{Maple}. It should be noted that the \texttt{Bertini} package offers various configuration settings in the input file to enable solution tracking without encountering path failures. Currently, there is no artificial intelligence module implemented in \texttt{Bertini1.6v} to suggest the necessary configuration settings for a user-provided input file to ensure solution tracking without numerical errors. Therefore, these settings must be manually fine-tuned to suit the specific problem. For a detailed understanding of configuration settings, please refer to~\cite[p. 301-321]{bates2013numerically}. 

Firstly, we need to check whether the solution sets of the presented optimization problems contain any higher-dimensional components. This can be accomplished in \texttt{Bertini} by setting \texttt{Tracktype:1} in the configuration settings of the input files while keeping the remaining settings as default. This analysis reveals that no higher-dimensional components exist for the source configuration.

As a result, we can proceed with the computation of the isolated solutions using the regeneration algorithm~\cite{hauenstein2011regeneration}, which is implemented in \texttt{Bertini1.6v}. It is worth noting that the regeneration algorithm is not implemented in HC.jl. This algorithm finds solutions to the systems by sequentially solving equation-by-equation but has the following limitation.

 \begin{limit}\label{regenaration}
According to \cite [p.~93]{bates2013numerically} using the regeneration algorithm, it is not guaranteed to obtain all the solutions as it discards singular solutions during the computation.
\end{limit}

As mentioned earlier, our goal is to obtain all generic finite solutions, including singular solutions. By employing symbolic methods, such as the G\"oberner basis method within \texttt{Maple}, we can reduce each optimization problem to solving a univariate polynomial. The degree of this polynomial corresponds to the root count, which enables us to verify whether all solutions were successfully tracked by the regeneration algorithm, performed with the configuration settings outlined in Table \ref{tab:reg}. The verification of this tracking is confirmed by the results presented in Tables \ref{extab3} to \ref{spcvresult}. Note that these generic solutions only have to be computed once for each of the different optimization problems.

\begin{table}[h!]
\caption{Configuration settings used for the regeneration algorithm in the ab initio phase.}
\centering
\begin{tabular}{|l|l||l|l|}
\hline
{Configuration settings} & value & {Configuration settings} & value\\ \hline
TRACKTolBEFOREEG      & 1e-8  &
TRACKTolDURINGEG      & 1e-8  \\ \hline
SliceTolBeforeEG      & 1e-8  &
SliceTolDuringEG      & 1e-8  \\ \hline
SECURITYLEVEL         & 1     &
UseRegeneration       & 1     \\ \hline
\end{tabular}\label{tab:reg}
\end{table}

\begin{table}[h!]
\caption{Summary of generic finite solutions from the ab-initio phase for the Lagrangians formulated in Section \ref{sec:singvar}.}
\centering
\begin{tabular}{|l||l|l|}
\hline
Extrinsic metric &
  \begin{tabular}[c]{@{}l@{}} $\#$ finite solutions \\ over $\mathbb{C}$ (\texttt{Bertini})\end{tabular} &
  \begin{tabular}[c]{@{}l@{}}degree of univariate \\ polynomial (\texttt{Maple})\end{tabular} \\ \hline
$D_{\rectangleblack}^{\rectangleblack}(\mathbf{K,K'})$ & 88 & 88 \\ \hline
$D_{\rectangleblack}^\circ(\mathbf{K},\mathbf{K}')$ with $\circ \in \{{\blacktriangle}, {\vartriangle}\}$ & 80 & 80 \\ \hline
$D^{\rectangleblack}_\star(\mathbf{K},\mathbf{K}')$ with $\star \in \{{\blacktriangle}, {\vartriangle}\}$  & 80  & 80  \\ \hline
$D^{\circ}_\star(\mathbf{K,K'})$ with $\circ,\star \in \{{\blacktriangle}, {\vartriangle}\}$ & 50 & 50  \\ \hline
\end{tabular}
\label{extab3}
\end{table}
\begin{table}[h!]
\caption{Summary of generic finite solutions from the ab-initio phase for the Lagrangians formulated in Section~\ref{collinearity}}
\centering
\begin{tabular}{|l||l|l|l|}
\hline
Extrinsic metric &
  \begin{tabular}[c]{@{}l@{}} $\#$ finite solutions\\over $\mathbb{C}$(\texttt{Bertini})\end{tabular} &
  \begin{tabular}[c]{@{}l@{}}degree of univariate \\ polynomial (\texttt{Maple})\end{tabular} \\ \hline
$D_{\rectangleblack}^{\vartriangle}(\mathbf{K},\mathbf{K}')$, $D_\vartriangle^{\rectangleblack}(\mathbf{K},\mathbf{K}')$  & $8 $ & 8 \\ \hline
$D_\star^{\vartriangle}(\mathbf{K},\mathbf{K}')$, $D_{\vartriangle}^{\circ}(\mathbf{K},\mathbf{K}')$  
with $\star, \circ\in\left\{\blacktriangle, \vartriangle\right\}$    & 2     &  2     \\  \hline
\end{tabular}
\label{collinearityresult}
\end{table}
\begin{table}[h!]
\caption{Summary of generic finite solutions from the ab-initio phase for the Lagrangians formulated in Section \ref{sec:para}}
\centering
\begin{tabular}{|l||l|l|}
\hline
Extrinsic metric & \begin{tabular}[c]{@{}l@{}}  $\#$ finite solutions\\over $\mathbb{C}$ \texttt({Bertini})\end{tabular} & \begin{tabular}[c]{@{}l@{}}degree of univariate \\ polynomial (\texttt{Maple})\end{tabular} \\ \hline
$D_{\hrectangleblack}^{\hrectangleblack}(\mathbf{K,K'})$  & 8  & {8} \\ \hline
$D_{\rectangleblack}^\blacktriangle(\mathbf{K},\mathbf{K}')$, $D_{\blacktriangle}^{\rectangleblack}(\mathbf{K},\mathbf{K}')$  & $8$ & {8}  \\ \hline
$D_{\blacktriangle}^\blacktriangle(\mathbf{K},\mathbf{K}')$  & $8$ & {8}  \\ \hline
\end{tabular}
\label{spsvresult}
\end{table}
\begin{table}[h!]
\caption{Summary of generic finite solutions from the ab-initio phase for the Lagrangians formulated in Section \ref{collinearsing}}
\centering
\begin{tabular}{|l|l|l|}
\hline
Extrinsic metric &   \begin{tabular}[c]{@{}l@{}}  $\#$ finite solutions\\over $\mathbb{C}$\texttt({Bertini})\end{tabular}  & \begin{tabular}[c]{@{}l@{}}degree of univariate \\ polynomial (\texttt{Maple})\end{tabular} \\ \hline
$D_{\rectangleblack}^{\vartriangle}(\mathbf{K},\mathbf{K}')$,  $D^{\rectangleblack}_{\vartriangle}(\mathbf{K},\mathbf{K}')$                   & $2$
& $2$ \\ \hline
\end{tabular}
\label{spcvresult}
\end{table}
\subsection{Step 2: User-defined homotopy phase}\label{step2}
The input for this step is the geometry of the manipulator given by the real values {$x_2,x_3,y_3,x_5,x_6,y_6$} and a 1-parametric motion with parameter $\phi\in[v;w]$; i.e.\ 
\begin{equation}\label{motion_phi}
\mathbf{k}_{j}(\phi) =
\mathbf{R}(\phi) \mathbf{p}_j+ \mathbf{t}(\phi)
\quad \text{for} \quad j=4,5,6 
\end{equation}
according to Eq.~(\ref{initial}).
We consider one generic random complex pose of this motion, referred to as {\it seed configuration}, by setting 
\begin{equation}
  \phi^\mathbb{C}= v-(v-w)(1-\alpha^\mathbb{C})
\label{complex}
\end{equation}
where $\alpha^\mathbb{C}$ is a randomly chosen complex number. Now we define a linear homotopy between the vector coordinates of the ab-initio configuration $\mathbf{k}_{i}^\mathbb{C}$ and 
the seed configuration by
\begin{equation}
\begin{split}
\mathbf{k}_{i}^\mathbb{C} &\mapsto 
{{\mathbf{k}}}_{i} + m\left(\mathbf{k}_{i}^\mathbb{C} - {{\mathbf{k}}}_{i}\right)
\quad \text{for} \quad j=1,2,3 \\
\mathbf{k}_{i}^\mathbb{C} &\mapsto 
{{\mathbf{k}}}_{i}(\phi^\mathbb{C}) + m\left(\mathbf{k}_{i}^\mathbb{C} - {{\mathbf{k}}}_{i}(\phi^\mathbb{C})\right)
\quad \text{for} \quad j=4,5,6
\label{homotopy}
\end{split}
\end{equation}
where $m$ is the homotopy parameter. 
By applying this mapping to the polynomial systems of the ab-initio phase, we are able to track the finite solutions of the \textit{source configuration} 
to the finite solutions of the \textit{seed configuration} while $m$ is running from $1$ to $0$. 

\subsection{Step 3: Paramotopy phase}\label{step3} The utilization of the \texttt{Paramotopy} software, as described in \cite{bates2018paramotopy}, involves two steps. In the first step, \texttt{Paramotopy} calls \texttt{Bertini} to solve the system of polynomial equations at a generic parameter point. Subsequently, in the second step, \texttt{Paramotopy} tracks these solutions to all desired parameter values by again calling \texttt{Bertini}.

With our seed configuration and its solutions computed in Section \ref{step2}, we can proceed to the second step of \texttt{Paramotopy}. This step requires the additional parameters $\alpha^\mathbb{C}$ and the number $n$ of poses (which can be defined by the user) in which the one-parametric motion described by Eq.\ (\ref{motion_phi}) should be discretized. Specifically, the interval $\alpha\in[0;1]$ is evenly subdivided into $n$ values. While $\alpha$ traverses this interval, \texttt{Paramotopy} tracks the corresponding solutions for the $n$ poses.

During the execution of Step 2 of \texttt{Paramotopy}, there may still be path failures for each discretized pose. To address these failures, it is necessary to re-run the failed paths by adjusting the configuration settings.

\subsection{Step 4: Post-processing}\label{postprocess} In this phase, the obtained real solutions for $n$ discretized poses are given as input to \texttt{Maple}. For the corresponding configuration, the distance function is evaluated in order to filter out the global minimizer.

\subsection{Interface}
To operate this pipeline, we developed an open-source software interface between \texttt{Maple} and \texttt{Bertini} as well as
\texttt{Paramotopy}, thus that all calls can be made within \texttt{Maple} rather than by switching between the systems. 
Note that the user needs to run only steps 2--4. 

\begin{rmk}
Steps 2 and 3 of the above given computational pipeline can also be replaced by using both steps of \texttt{Paramotopy}~\cite{bates2018paramotopy}. The drawback is that one always needs to run the first step of \texttt{Paramotopy} which is substantially longer, as it can be seen from  Table~\ref{runtime}, which lists the computation times for the numerical example discussed in the next section. \hfill $\diamond$
\end{rmk}

\begin{rmk}
If one is interested in computing the closest singular configuration for a 3-RPR manipulator, then the generic finite solutions together with the complex coordinates chosen in Section ~\ref{step1} are provided as supplementary material~\cite{codes}. 
 In this case, it is sufficient to run only Sections \ref{step2}, \ref{step3} and \ref{postprocess} of the computation. 
\hfill $\diamond$
\end{rmk}

\section{Results and discussion}\label{numericalexample}
The presented computational procedure for determining singularity distances is illustrated by the following numerical example, which was also used for the singularity distance computations with respect to intrinsic metrics  \cite{intrinsic}. The base and platform anchor points and the one parametric motion are given by: 

\begin{equation}
\mathbf{k}_{1}=\mathbf{p}_{1}=(0,0), \quad \mathbf{k}_{2}=(11,0), \quad \mathbf{p}_{2}=(3,0), \quad \mathbf{k}_{3}=(5,7),   \quad \mathbf{p}_{3}=(1,2)
\label{design}
\end{equation}
and
\begin{equation}
\mathbf{k}_{i+3} =
\underbrace{\begin{pmatrix}
\cos{\phi} & -\sin{\phi} \\
\sin{\phi} & \cos{\phi}
\end{pmatrix}}_\mathbf{R}
\mathbf{p}_{i}
+
\underbrace{\frac{1}{2}
\begin{pmatrix}
{11}-6\sin{\phi} \\
{3}-{3}\cos{\phi}
\end{pmatrix}}_\mathbf{t} \text{for} \quad {i}={1,2,3}.
\label{parameter}
\end{equation}
Note that the one-parametric motion of the manipulator exhibits two singular configurations at $\phi=0$ and $\phi\approx 3.0356972$ radians. The interval for the  motion parameter $\phi$ is chosen between 
$[0;2\pi]$; i.e.\ $v=0$ and $w=2\pi$.
Following Eq.(\ref{complex}), we generate a random complex number $\alpha^\mathbb{C}$ using floating-point arithmetic and express the pose of the platform with respect to the base frame using Eq.(\ref{parameter}). By defining user-defined homotopies given by Eq.~(\ref{homotopy}), we obtain the system of polynomial equations depending on the homotopy parameter $m$. We invoke \texttt{User homotopy:2} 
in the input files, to avoid tracking infinitely long paths (for details see~\cite[Sec. 7.5.2]{bates2013numerically}). We use generic finite solutions of the \textit{source configuration} 
as \texttt{start solutions} and track them to the \textit{seed configuration} without any path failures. Note that the number of paths to be tracked equals the generic finite root count given in Tables \ref{extab3}--\ref{spcvresult}. For the  \texttt{Paramotopy} phase we set $n=90$; i.e.\ we discretized the 1-parametric motion into $90$ poses. The resulting solution set for each pose is post-processed according to Step 4. 
A comparison of the average  CPU run time\footnote{In this context we are referring to \textit{response time}.} (over 5 runs)  between both steps of \texttt{Paramotopy} and Step 2 and Step 3 of the proposed algorithm is given in Table \ref{runtime}. 

\begin{table}[t]
\caption{A comparison of average run time (in seconds) for Step 2 and Step 3 and both steps of \texttt{Paramotopy}.} 
\centering
\begin{tabular}{|l||l|l|l|l|}
\hline
Extrinsic metric & A) Step 2 & B) Step 3 & C) both steps of \texttt{Paramotopy} & Ratio $C/(A+B)$\\ \hline
$D_{\rectangleblack}^{\rectangleblack}(\mathbf{K},\mathbf{K}')$             &  $\approx 2.160$     &    $\approx 236.861$    &  $\approx 651.200$     &  $\approx 2.152$   \\ \hline
 $D_{\rectangleblack}^{\blacktriangle}(\mathbf{K},\mathbf{K}')$    &  $\approx 4.074$   &    $\approx 263.130$       &  $\approx 5\,100.240$    &    $\approx 19.087$   \\ \hline
  $D_{\rectangleblack}^{\vartriangle}(\mathbf{K},\mathbf{K}')$             &    $\approx2.860$          &  $\approx 292.250$     &  $\approx 5\,563.370$     &  $\approx 18.851$    \\ \hline
  $D^{\rectangleblack}_{\blacktriangle}(\mathbf{K},\mathbf{K}')$                 &       $\approx 4.258$   &     $\approx 315. 413$   &   $\approx 5\,152.523$    & $\approx 17.244$   \\ \hline
$D^{\blacktriangle}_{\blacktriangle}(\mathbf{K,K'})$        &  $\approx 1.640$     &    $\approx 104.636$    &    $\approx 6\,595.126$      &  $\approx 62.056$  \\ \hline
$D^{\vartriangle}_{\blacktriangle}(\mathbf{K,K'})$                  &     $\approx1.056$   &   $\approx 160.260$     & $\approx 5\,915.350$     &  $\approx 36.669$    \\ \hline
  $D^{\rectangleblack}_{\vartriangle}(\mathbf{K},\mathbf{K}')$       &    $\approx3.686$    &   $\approx 276.653$     &   $\approx 5\,548.023$     &   $\approx 19.790$    \\ \hline
 $D_{\vartriangle}^{\blacktriangle}(\mathbf{K,K'})$                 &  $\approx 1.280$     &     $\approx 147.720$
   &    $\approx 6\,063.070$   &   $\approx $40.691   \\ \hline
 $D^{\vartriangle}_{\vartriangle}(\mathbf{K,K'})$                &   $\approx 1.778$      &   $\approx 113.292$     &    $\approx 6\,731.132 $    &   $\approx 58.495$  \\ \hline
\end{tabular}
\label{runtime}
\end{table}

In Fig.\ \ref{measure_sing} we compare the distances to the singularity variety $V=0$ with respect to the different extrinsic metrics 
$D_\star^\circ(\mathbf{K},\mathbf{K}')$ with $\circ,\star\in\left\{\blacktriangle, \vartriangle, \hrectangleblack\right\}$  to the preliminary one  
$D^{\bullet}_{\bullet}(\mathbf{K},\mathbf{K}')$ of Eq.\ (\ref{eq:metric0}). 
Note that due to the discretization of the motion into $90$ poses by \texttt{Paramotopy}, the linearly interpolated value at the singularity $\phi\approx 3.0356972$  radians is not exactly zero. 
In Fig.~\ref{singulardemo} the closest singular configurations on $V=0$ are illustrated for the pose $\phi= 0.8471710528$, which is indicated by the black dashed line in Fig.~\ref{measure_sing}.
For the metric $D_{\bullet}^{\bullet}(\mathbf{K,K'})$ we refer to Fig.\ \ref{measure}.

\begin{figure}[h!]
    \begin{center}
    \begin{overpic}[width=14cm]{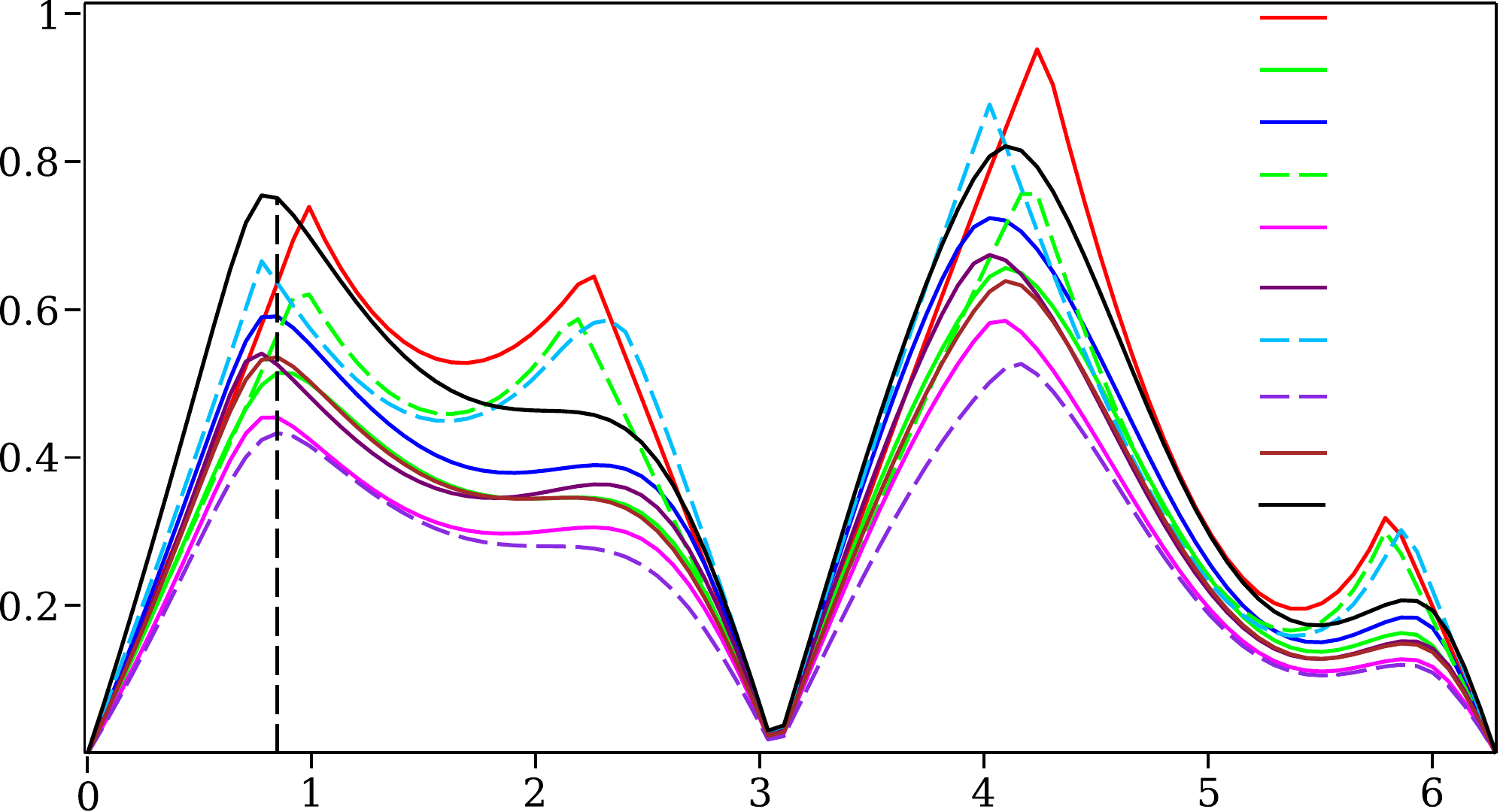}
     \put(85,1.3){$\phi$ (rad)} 
     \begin{footnotesize}
      \put(88.5,52) {$D_{\rectangleblack}^{\rectangleblack}(\mathbf{K,K'})$ } 
         \put(88.5,48.5) {$D_{\rectangleblack}^{\blacktriangle}(\mathbf{K,K'})$ } 
         \put(88.5,45) {$D_{\rectangleblack}^{\vartriangle}(\mathbf{K,K'})$ }
         \put(88.5,41.5) {$D^{\rectangleblack}_{\blacktriangle}(\mathbf{K,K'})$ }
          \put(88.5,38) {$D^{\blacktriangle}_{\blacktriangle}(\mathbf{K,K'})$ }
          \put(88.5,34.5) {$D_{\blacktriangle}^{\vartriangle}(\mathbf{K,K'})$ }
          \put(88.5,31) {$D^{\rectangleblack}_{\vartriangle}(\mathbf{K,K'})$ }
          \put(88.5,27) {$D^{\blacktriangle}_{\vartriangle}(\mathbf{K,K'})$ }
           \put(88.5,23) {$D_{\vartriangle}^{\vartriangle}(\mathbf{K,K'})$ }
           \put(88.5,20) {$D_{\bullet}^{\bullet}(\mathbf{K,K'})$ }
     \end{footnotesize}
       \put(-2.2,30){\makebox(0,0){\rotatebox{90}{Distance}}}
        \end{overpic}
     \caption{Distances to the singularity variety $V=0$  with respect to the different extrinsic metrics.}
     \label{measure_sing}
    \end{center}
\end{figure}
\begin{figure}[h!]
    \begin{center}
       \includegraphics[width=14.5cm]{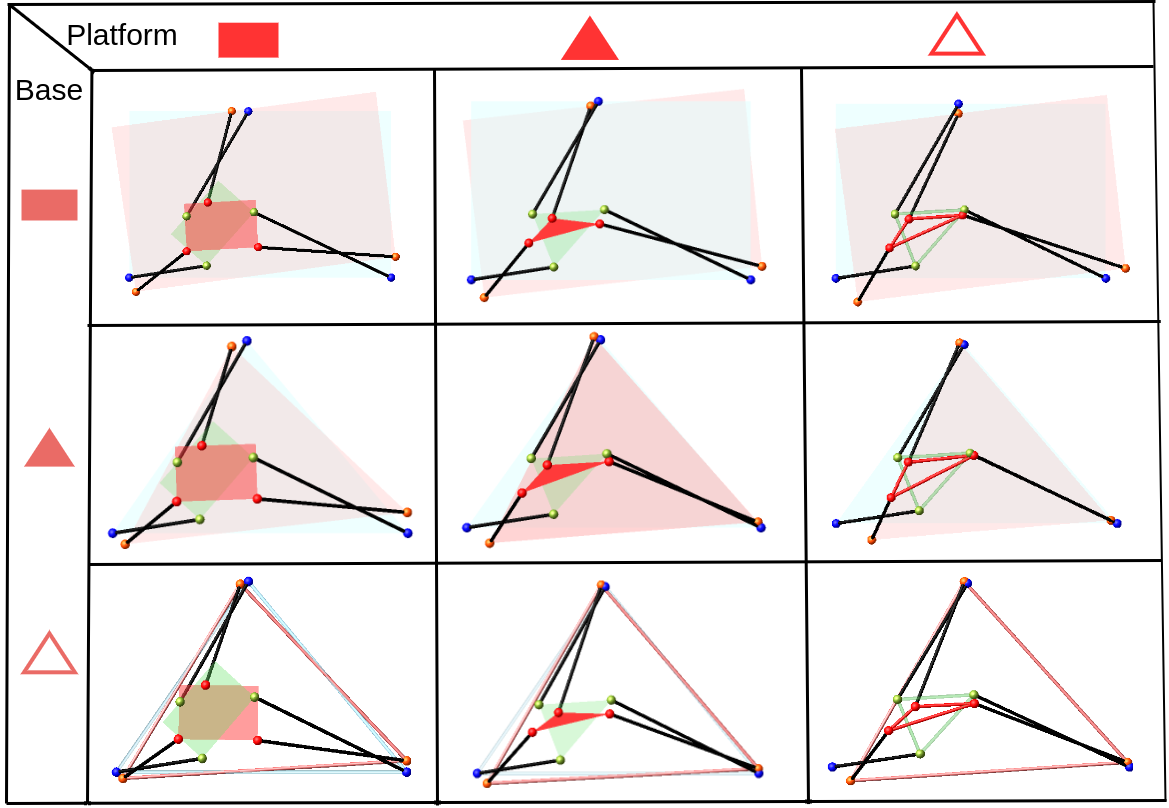}
      \caption{Closest singular configurations on $V=0$ for $\phi= 0.8471710528$ radians.}
      \label{singulardemo}
    \end{center}
\end{figure}

For the interpretations possessing a pin-jointed triangular bar structure in the platform or base, the distances to the singularity variety and collinearity variety are compared in Fig.~\ref{fig:result3}.
 In Fig.~\ref{fig:my_label} the corresponding closest configurations on the collinearity varieties are again illustrated for the pose $\phi= 0.8471710528$.

\begin{figure}[h!]
\begin{center}
\begin{overpic}
    [width=50mm]{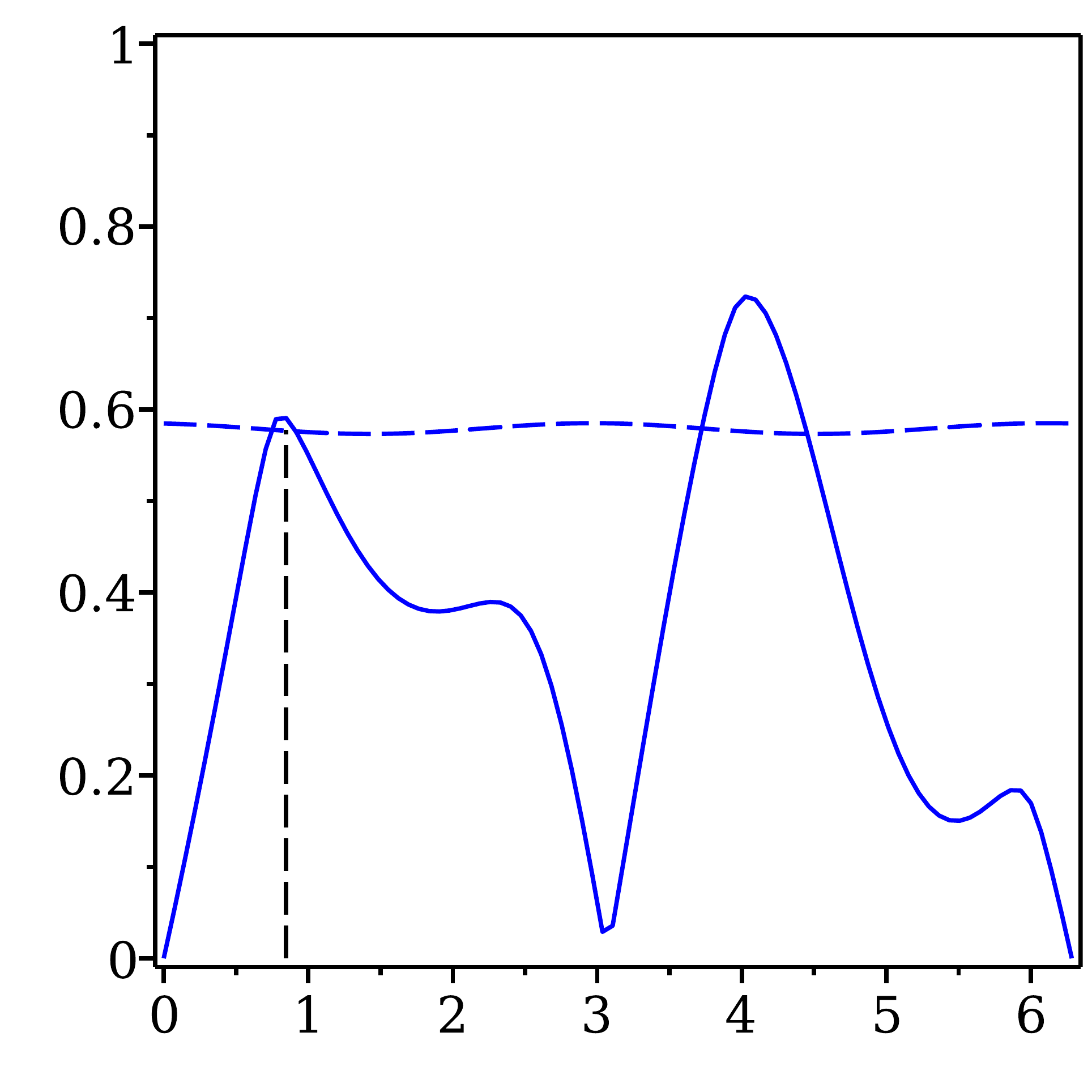}
    \begin{small}
     \put(35,-3) {(a)  $D_{\rectangleblack}^{\vartriangle}(\mathbf{K,K'})$}
      \end{small}
        \begin{scriptsize}   
   \put(0,50){\makebox(0,0){\rotatebox{90}{Distance}}}
   \put(85,1.5){$\phi$ (rad)}
   \end{scriptsize}
\end{overpic}
\quad
\begin{overpic}
   [width=50mm]{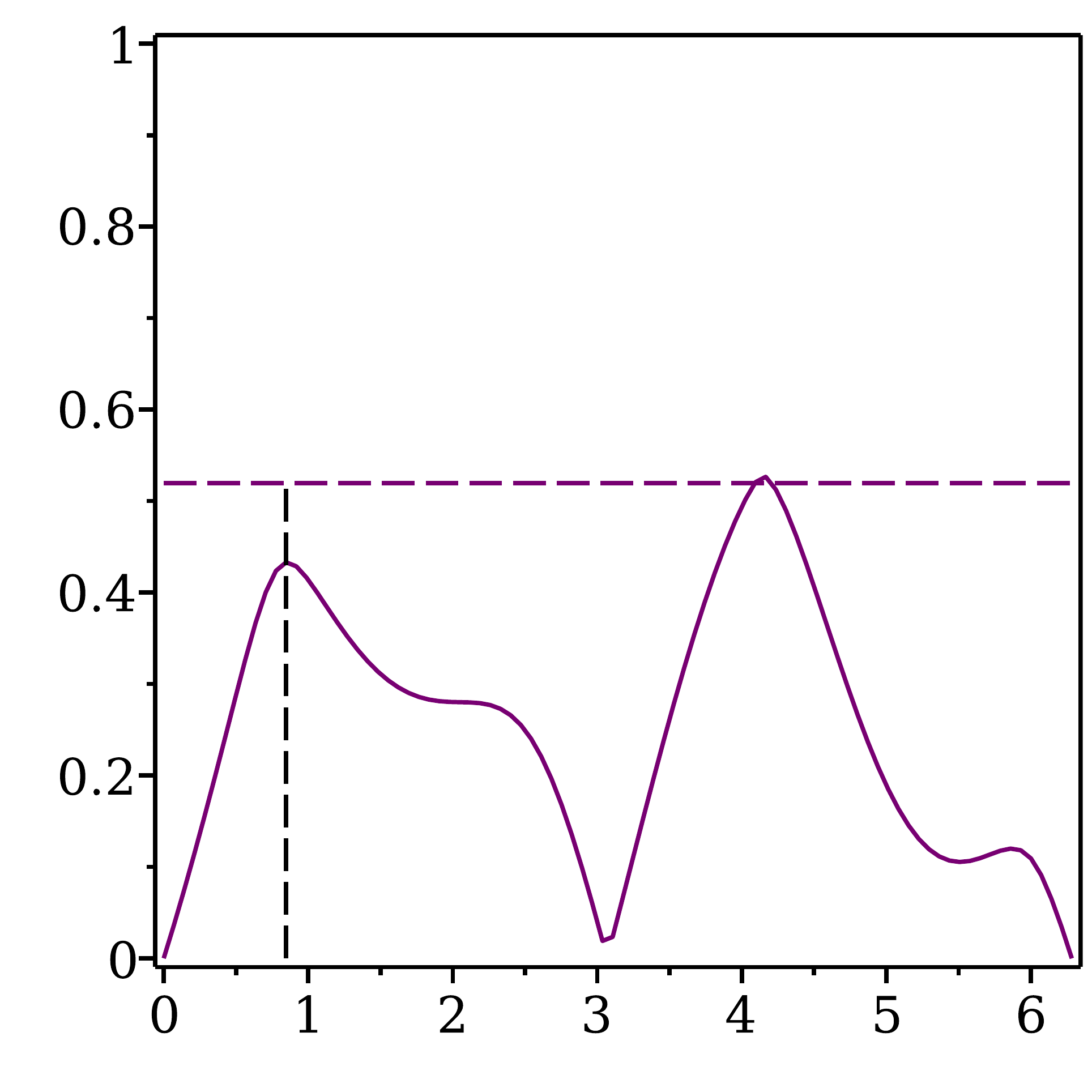}
     \begin{small}
     \put(35,-3) {(b) $D_{\blacktriangle}^{\vartriangle}(\mathbf{K,K'})$ }
     \end{small}
       \begin{scriptsize}   
   \put(0,50){\makebox(0,0){\rotatebox{90}{Distance}}}
   \put(85,1.5){$\phi$ (rad)}
   \end{scriptsize}
\end{overpic}
\quad 
\begin{overpic}
    [width=50mm]{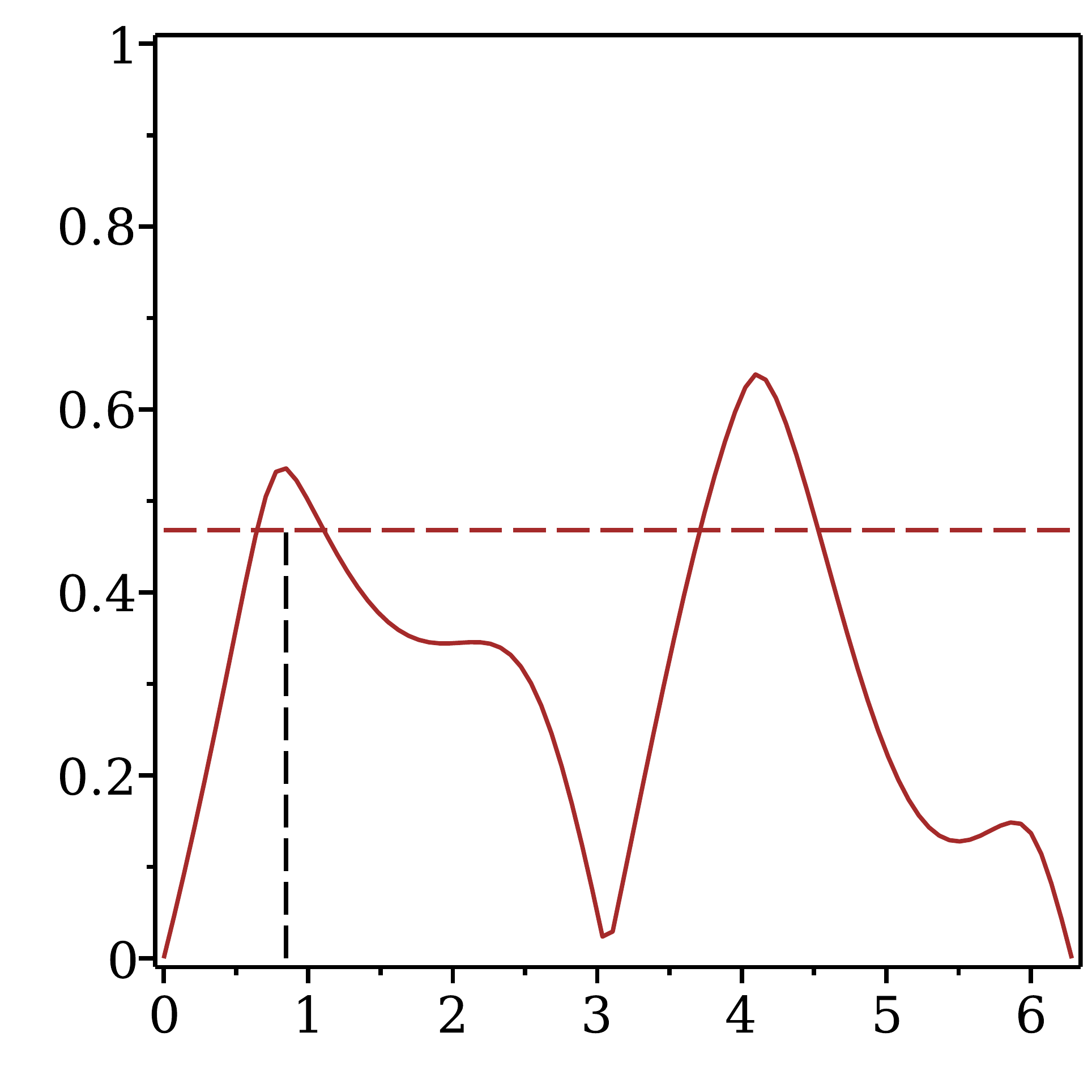}
    \begin{small}
        \put(30,-3.5){(c) $D_{\vartriangle}^{{\text{coll}(\vartriangle)}}(\mathbf{K,K'})$}  
    \end{small}
       \begin{scriptsize}   
   \put(0,50){\makebox(0,0){\rotatebox{90}{Distance}}}
    \put(86,1.5){$\phi$ (rad)}
   \end{scriptsize}
\end{overpic} 
\\ \phantom{x} \\
\begin{overpic}
    [width=50mm]{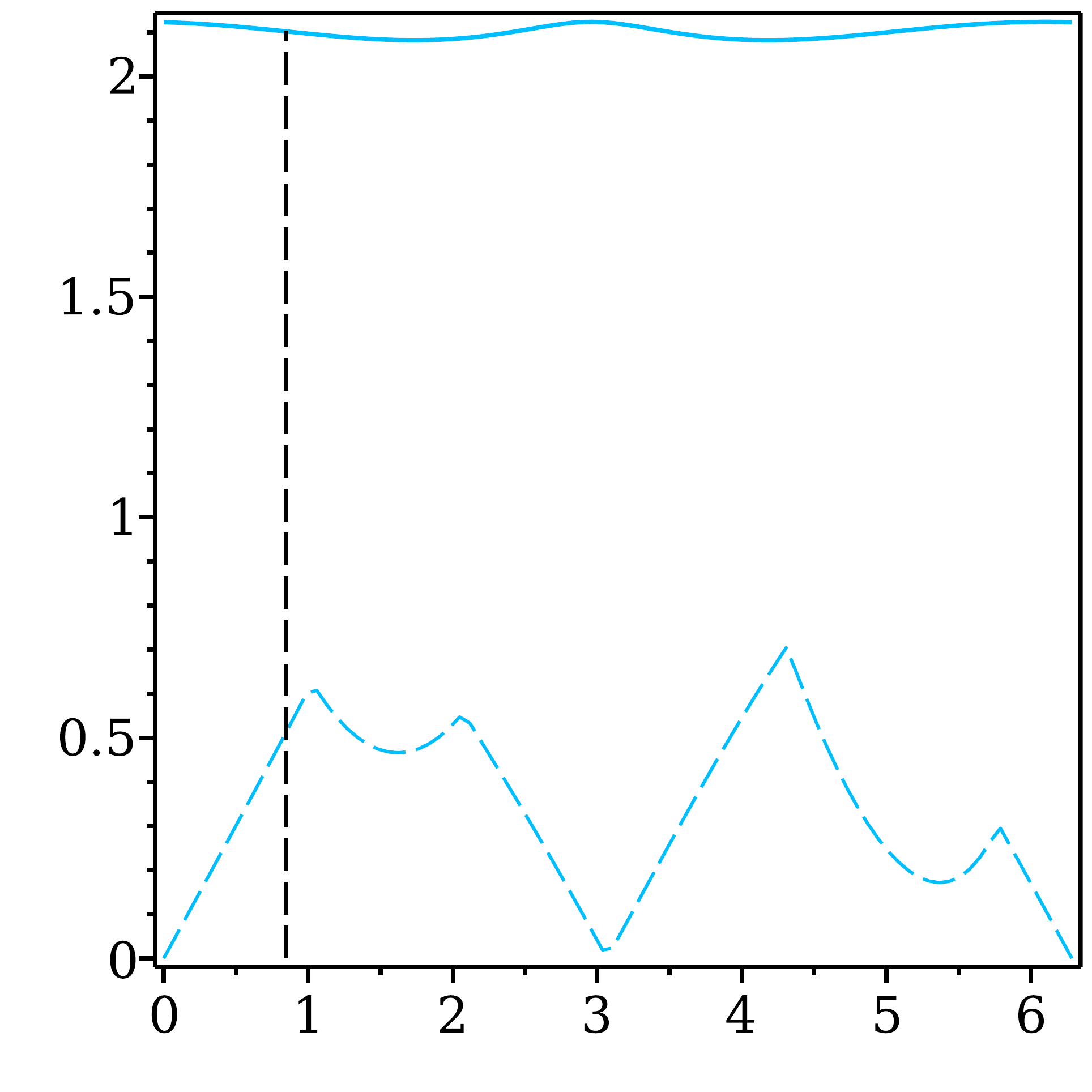}
    \begin{small}
    \put(35,-3) {(d) $D_{\vartriangle}^{\rectangleblack}(\mathbf{K,K'})$}
    \end{small}
      \begin{scriptsize}   
   \put(0,50){\makebox(0,0){\rotatebox{90}{Distance}}}
    \put(85,1.5){$\phi$ (rad)}
   \end{scriptsize}
\end{overpic}
\quad
\begin{overpic}
    [width=50mm]{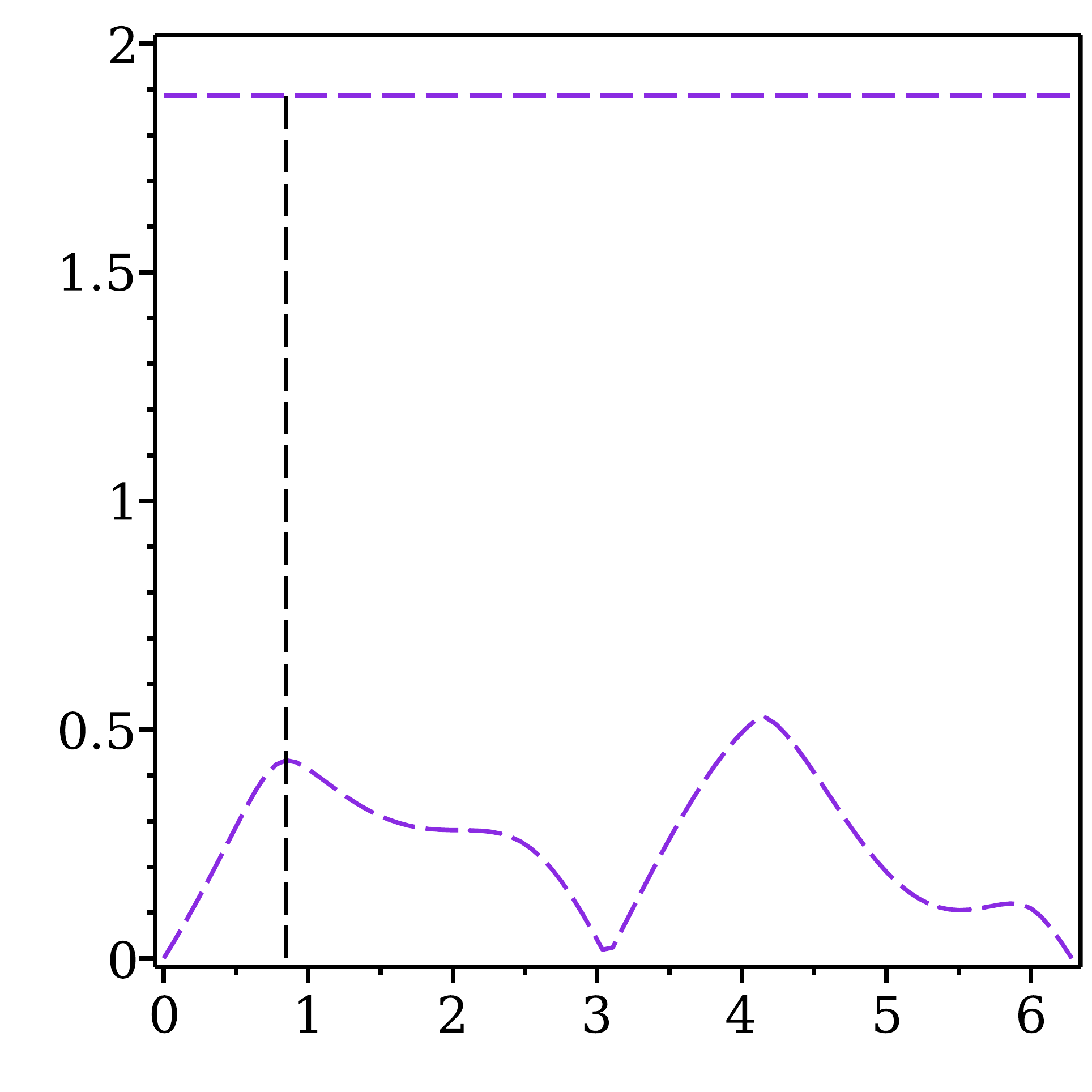}
    \begin{small}
      \put(35,-3) {(e) $D^{\blacktriangle}_{\vartriangle}(\mathbf{K,K'})$ }  
    \end{small}
  \begin{scriptsize}   
   \put(0,50){\makebox(0,0){\rotatebox{90}{Distance}}}
    \put(85,1.5){$\phi$ (rad)}
   \end{scriptsize}  
\end{overpic}
\quad
\begin{overpic}
   [width=50mm]{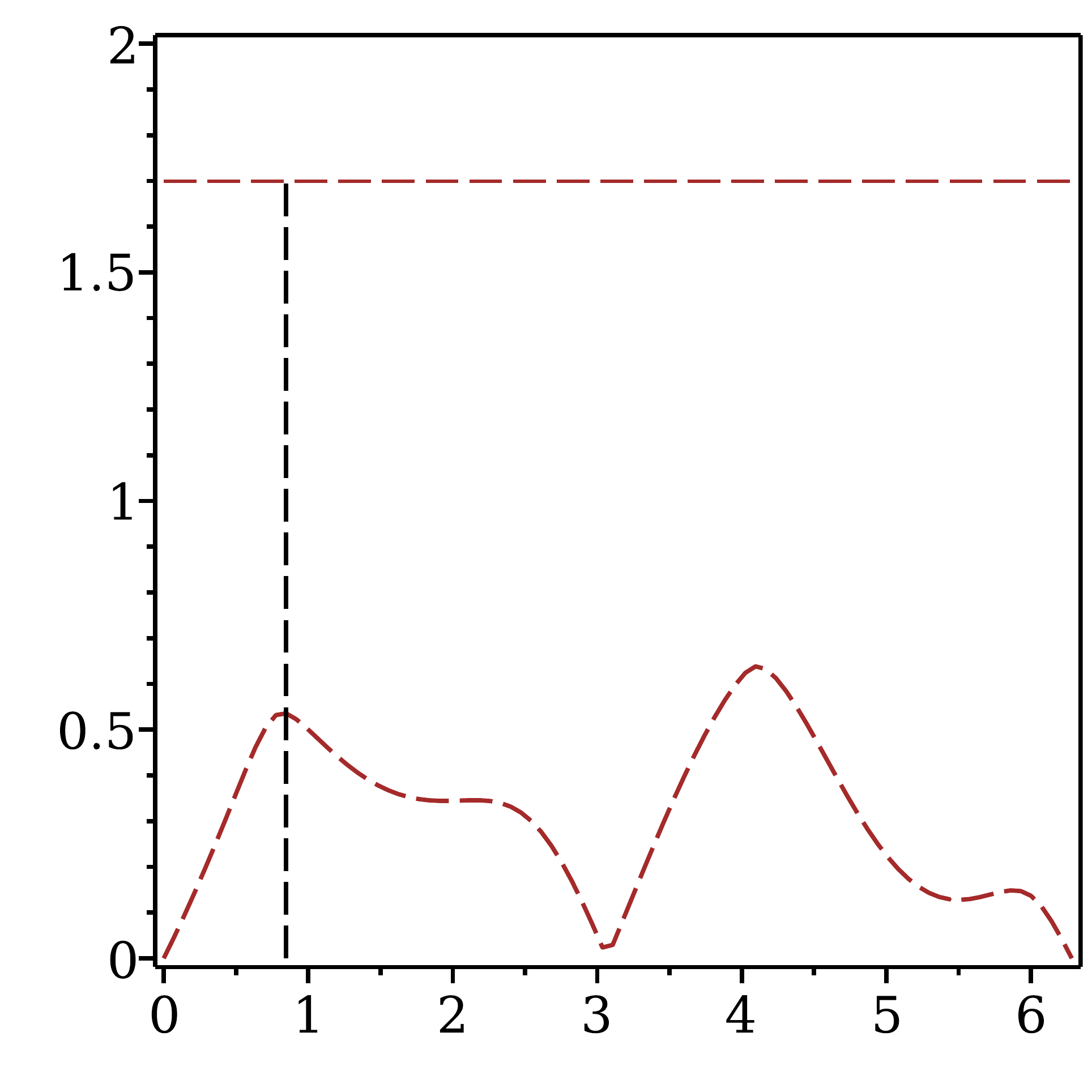}
    \begin{small}
      \put(30,-3) {(f) $D^{\vartriangle}_{\text{coll}(\vartriangle)}(\mathbf{K,K'})$ }  
    \end{small}
    \begin{scriptsize}   
   \put(0,50){\makebox(0,0){\rotatebox{90}{Distance}}}
    \put(85,1.5){$\phi$ (rad)}
   \end{scriptsize}
\end{overpic} 
\end{center}
 \caption{Comparison of distances to the singularity variety $V=0$ and the collinearity variety  $C_P=0$ (top) and $C_B=0$ (bottom), respectively.}
 \label{fig:result3}
  \end{figure}   

\begin{figure}[h!]
\begin{center}
\begin{overpic}
    [width=45mm]{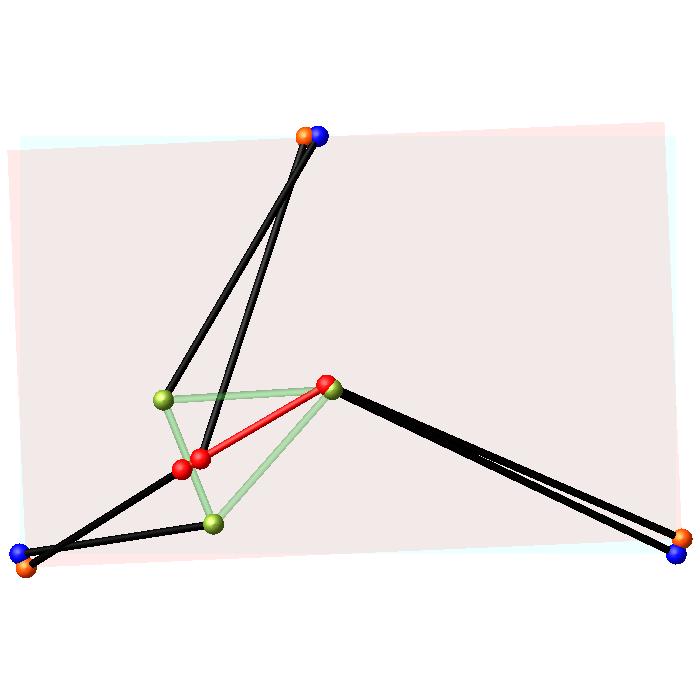}
    \begin{small}
     \put(35,10) {(a)  $D_{\rectangleblack}^{\vartriangle}(\mathbf{K,K'})$}
      \end{small}
\end{overpic}
\quad
\begin{overpic}
    [height=45mm]{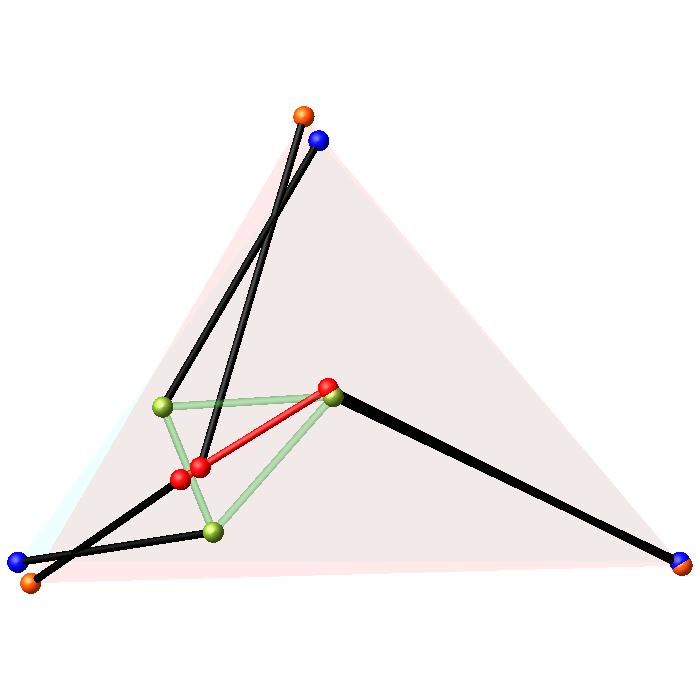}
     \begin{small}
     \put(35,10) {(b) $D_{\blacktriangle}^{\vartriangle}(\mathbf{K,K'})$ }
     \end{small}
\end{overpic}
\quad
\begin{overpic}
    [height=45mm]{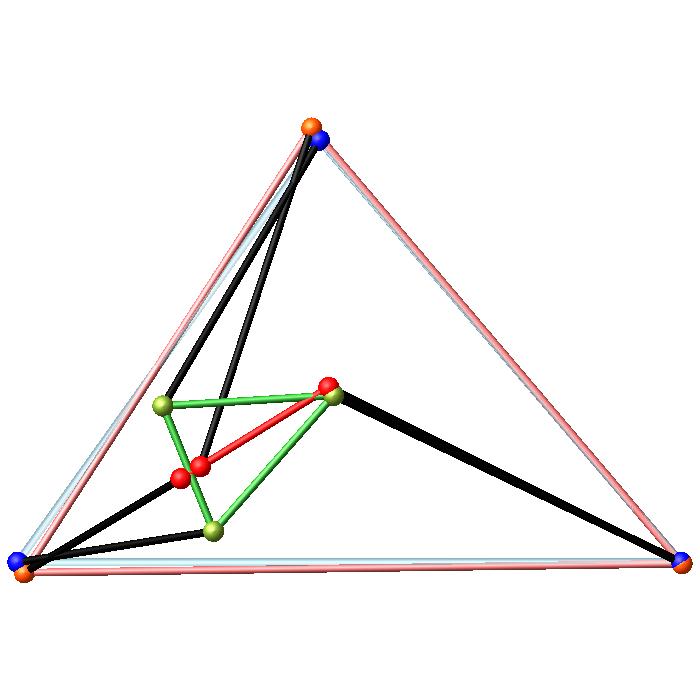}
    \begin{small}
        \put(35,10){(c) $D_{\vartriangle}^{\vartriangle}(\mathbf{K,K'})$}  
    \end{small}
\end{overpic} 
\newline
\begin{overpic}
    [height=45mm]{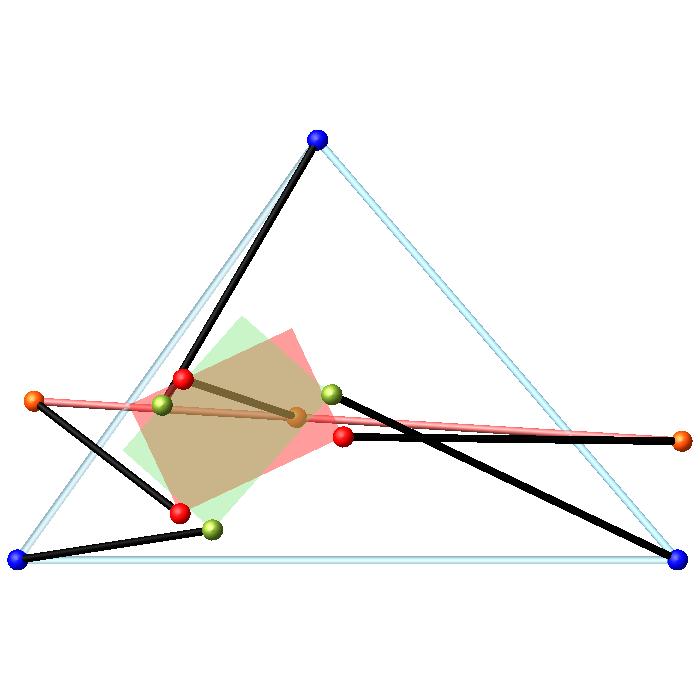}
    \begin{small}
    \put(35,10) {(d) $D_{\vartriangle}^{\rectangleblack}(\mathbf{K,K'})$}
    \end{small}
\end{overpic}
\quad
\begin{overpic}
    [height=45mm]{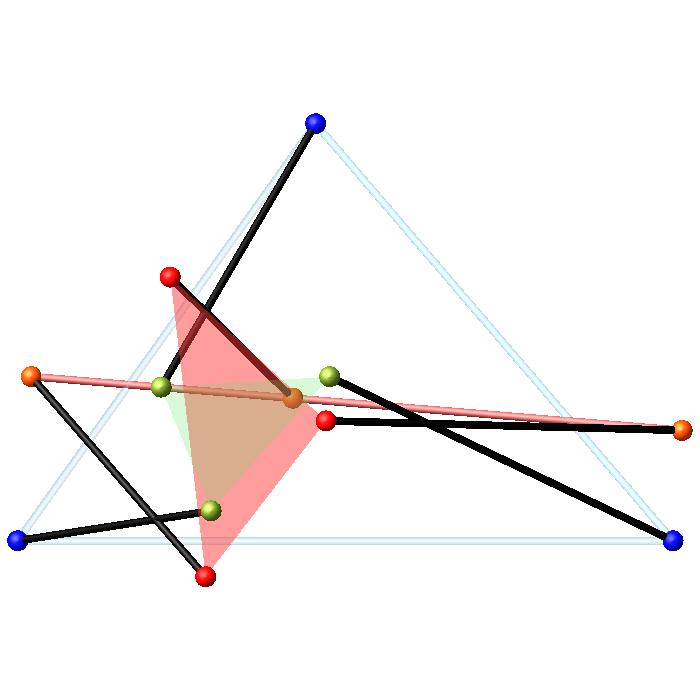}
    \begin{small}
      \put(35,10) {(e) $D^{\blacktriangle}_{\vartriangle}(\mathbf{K,K'})$ }  
    \end{small}
\end{overpic}
\quad
\begin{overpic}
    [height=45mm]{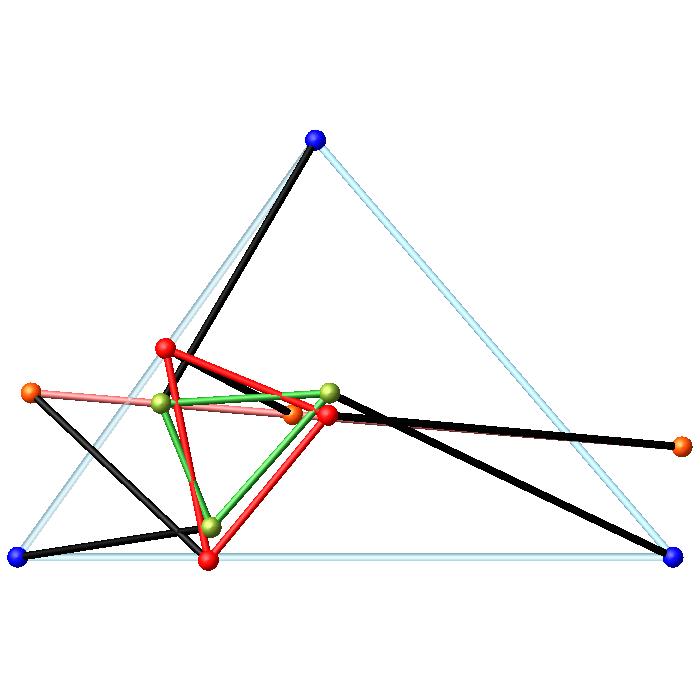}
    \begin{small}
      \put(35,10) {(f) $D^{\vartriangle}_{\vartriangle}(\mathbf{K,K'})$ }  
    \end{small}
\end{overpic} 
\end{center}
\caption{Closest singular configurations on the collinearity variety 
$C_P=0$ (top) and  $C_B=0$ (bottom), respectively, 
for $\phi= 0.8471710528$ radians.}
  \label{fig:my_label}
\end{figure}

\begin{figure}[t]
\begin{center}
\begin{overpic}
    [width=45mm]{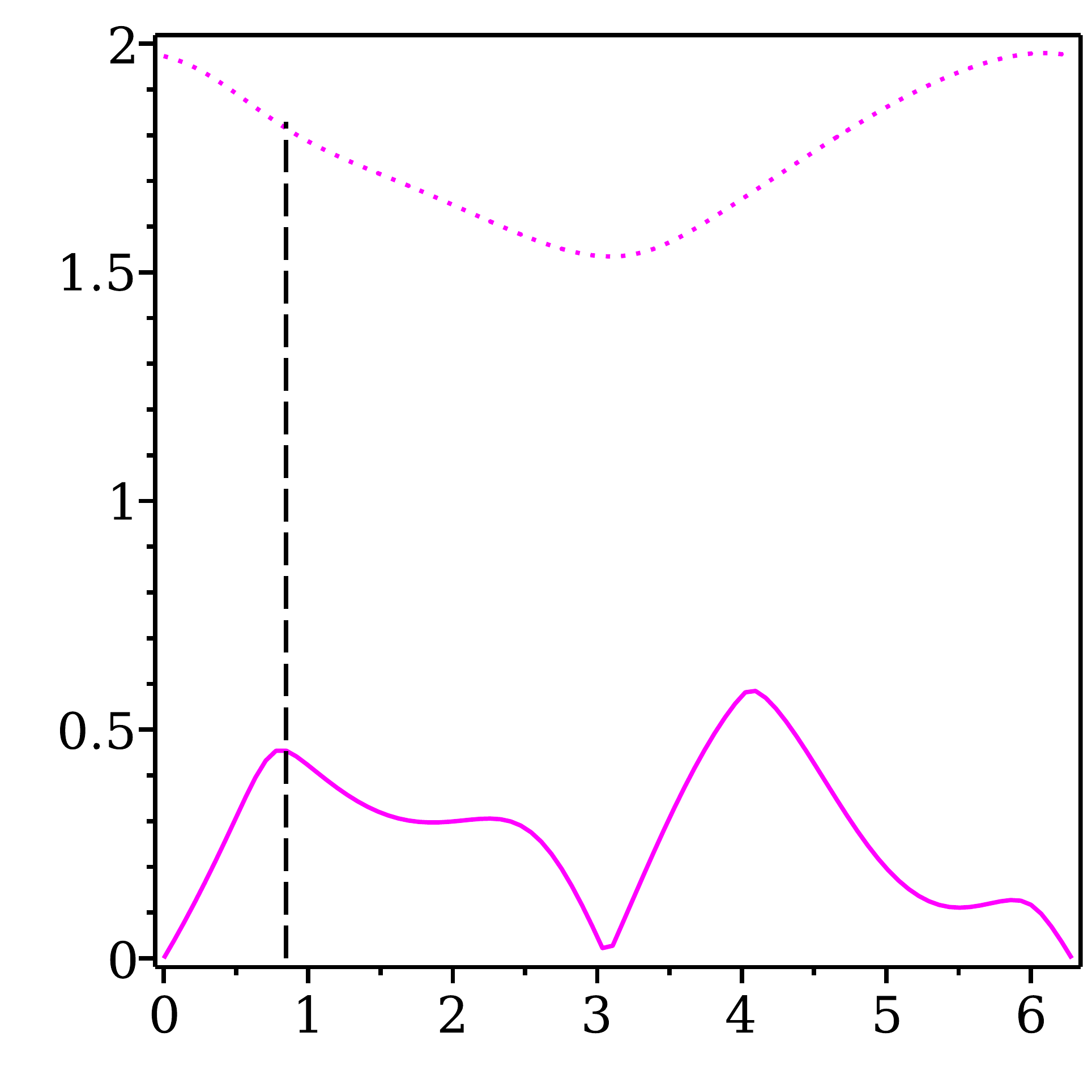}
    \begin{small}
    \put(30,-3) {(a) $D_{\blacktriangle}^{\blacktriangle}(\mathbf{K,K''})$}
     \end{small}
    \begin{scriptsize}   
   \put(0,50){\makebox(0,0){\rotatebox{90}{Distance}}}
   \put(85,1.5){$\phi$ (rad)}
   \end{scriptsize}  
    \end{overpic}
\quad
\begin{overpic}
    [width=45mm]{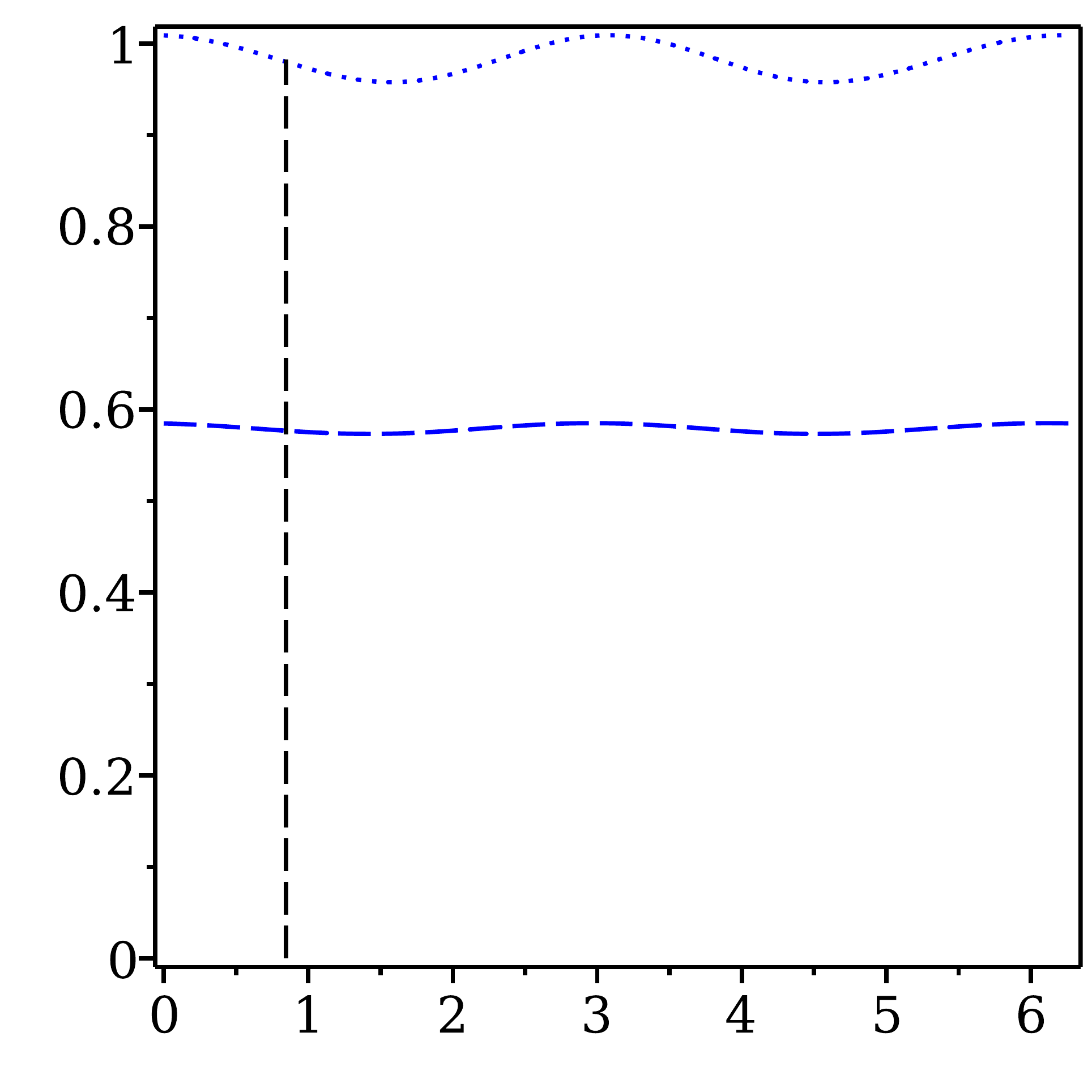}
    \begin{small}
      \put(30,-3) {(b) $D_{\rectangleblack}^{\vartriangle}(\mathbf{K,K''})$ }  
    \end{small}
  \begin{scriptsize}   
   \put(0,50){\makebox(0,0){\rotatebox{90}{Distance}}}
   \put(85,1.5){$\phi$ (rad)}
   \end{scriptsize}  
\end{overpic}
\quad
\begin{overpic}
   [width=45mm]{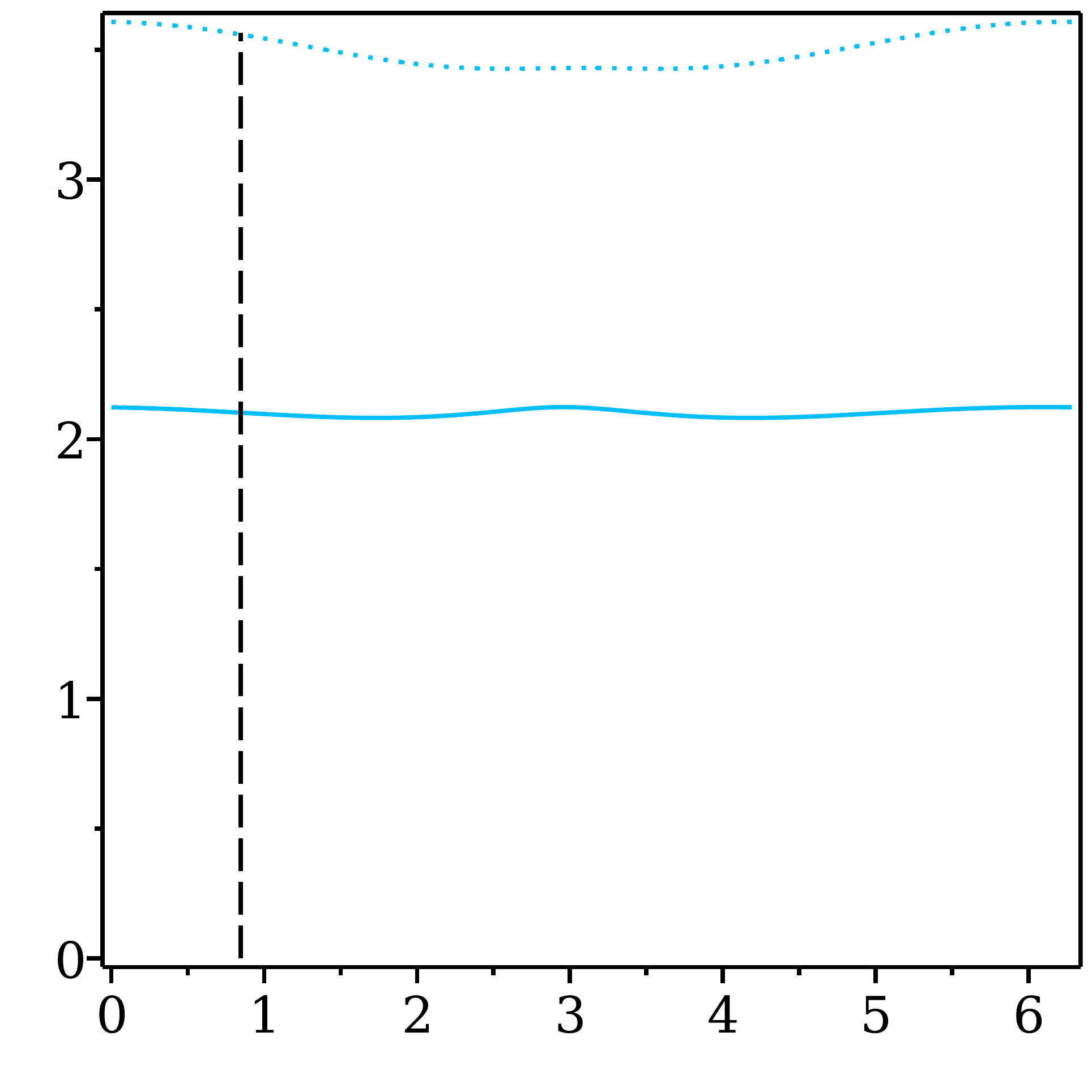}
    \begin{small}
      \put(30,-3) {(c) $D^{\rectangleblack}_{\vartriangle}(\mathbf{K,K''})$ }  
    \end{small}
    \begin{scriptsize}   
   \put(0,50){\makebox(0,0){\rotatebox{90}{Distance}}}
   \put(85,1.5){$\phi$ (rad)}
   \end{scriptsize}
\end{overpic} 
\\ \phantom{x} \\
\begin{overpic}
  [width=45mm]{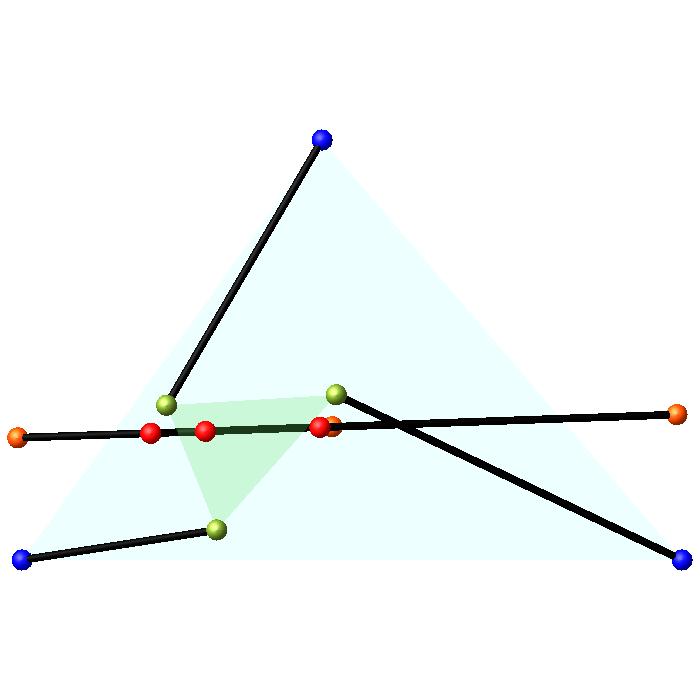}
    \begin{small}
     \put(35,-3) {(d)  $D_{\blacktriangle}^{\blacktriangle}(\mathbf{K,K''})$}
      \end{small}
\end{overpic}
\quad
\begin{overpic}
    [width=45mm]{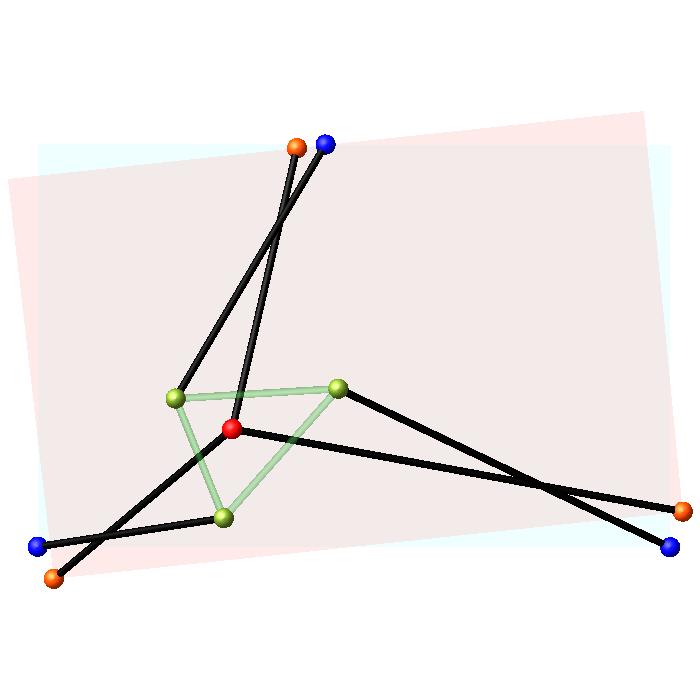}
     \begin{small}
     \put(35,-3) {(e) $D_{\rectangleblack}^{\vartriangle}(\mathbf{K,K''})$ }
     \end{small}
\end{overpic}
\quad
\begin{overpic}
    [width=50mm]{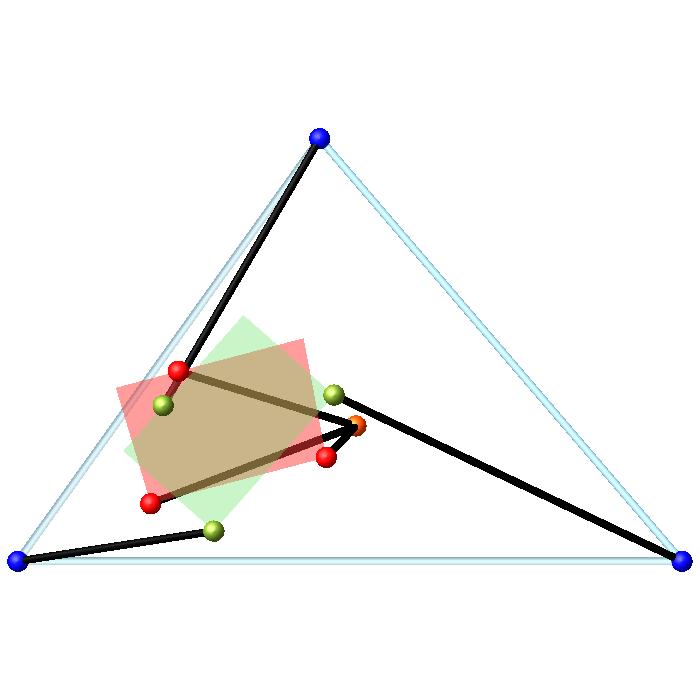}
    \begin{small}
        \put(35,-3){(f) $D_{\vartriangle}^{\rectangleblack}(\mathbf{K,K''})$}  
    \end{small}
\end{overpic} 
\end{center}
 \caption{Comparison of the distances to the closest regular and singular points of the constraint variety, (a) $V=0$,   (b) $C_P=0$,  and (c) $C_B=0$, respectively. The configuration, which corresponds to the closest singular point on the constraint variety (a,b,c), is illustrated in (d,e,f) for $\phi= 0.8471710528$ radians.} 

 \label{finalfig}
  \end{figure}    

In Fig.\ \ref{finalfig} (a,b,c), we present the comparison of the distances to the closest regular and singular points of the constraint variety. It can be observed that the latter is always larger. 
The configuration, which corresponds to the closest singular point on the constraint variety, is illustrated in Fig.\ \ref{finalfig} (d,e,f) for the pose  $\phi= 0.8471710528$.

For the singularity distance, we have to take the minimum of the graphs given in Figs. \ 
\ref{measure_sing}, \ref{fig:result3} and \ref{finalfig}, which yields the graphs given in Fig.\ \ref{measure1}. 
Finally, we want to point out that the cusps
in the graphs of Figs.\  \ref{measure_sing} and \ref{measure1} belong to the cut loci of the extrinsic distance functions; i.e.\ in the corresponding configurations, the closest singular configuration is not determined uniquely (there exist at least two closest singular configurations).

Note that the graphs of all nine presented extrinsic metrics have a similar course (cf.\ Fig.\ \ref{measure_sing} and Fig.\ \ref{measure1}, respectively), which will be compared in the following with the singularity-closeness indices 
reviewed in Section \ref{sec:review}.

 \begin{figure}[h!]
    \begin{center}
    \begin{overpic}[width=14cm]{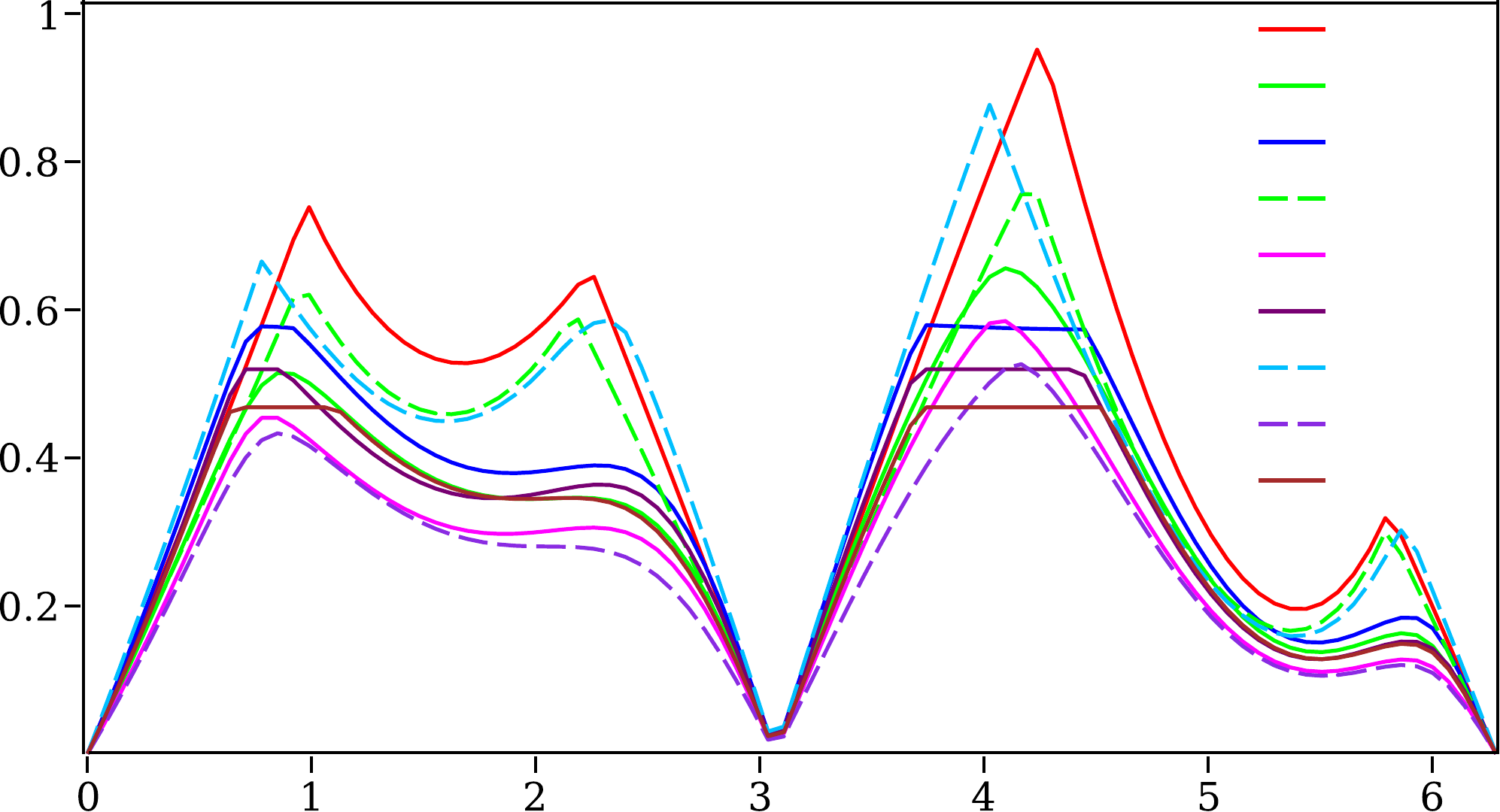}
     
     \begin{footnotesize}
      \put(88.5,51.5) {$D_{\rectangleblack}^{\rectangleblack}(\mathbf{K,K'})$ } 
         \put(88.5,48) {$D_{\rectangleblack}^{\blacktriangle}(\mathbf{K,K'})$ } 
         \put(88.5,44) {$D_{\rectangleblack}^{\vartriangle}(\mathbf{K,K'})$ }
         \put(88.5,40.5) {$D^{\rectangleblack}_{\blacktriangle}(\mathbf{K,K'})$ }
          \put(88.5,36.5) {$D^{\blacktriangle}_{\blacktriangle}(\mathbf{K,K'})$ }
          \put(88.5,33) {$D_{\blacktriangle}^{\vartriangle}(\mathbf{K,K'})$ }
          \put(88.5,29) {$D^{\rectangleblack}_{\vartriangle}(\mathbf{K,K'})$ }
          \put(88.5,25) {$D^{\blacktriangle}_{\vartriangle}(\mathbf{K,K'})$ }
           \put(88.5,21.5) {$D_{\vartriangle}^{\vartriangle}(\mathbf{K,K'})$ }
          \end{footnotesize}
        \put(85,1.3){$\phi$ (rad)}  
       \put(-2.2,30){\makebox(0,0){\rotatebox{90}{Distance}}}
        \end{overpic}
     \caption{Singularity distances with respect to the presented extrinsic metrics.}
     \label{measure1}
    \end{center}
\end{figure}

\subsection{Comparison with EE-independent KPIs}

As a consequence of Remark \ref{rmk:transmission}, both the TI and DS indices encounter the following issue: If we alter the viewpoint and exchange the roles of the platform and the base, we obtain different pressure angles $\beta_i$, resulting in different index values for the same configuration. Therefore, we introduce the {\it modified transmission index} (MTI):
\begin{equation*}
	min(\cos\tfrac{\alpha_1+\beta_1}{2}, \ldots,\cos\tfrac{\alpha_3+\beta_3}{2})\in [0;1],
\end{equation*}
and {\it modified distance to singularity} (MDS):
\begin{equation}\label{eq:MDS}
 1-\frac{2max(\tfrac{\alpha_1+\beta_1}{2},\ldots,\tfrac{\alpha_3+\beta_3}{2})}{\pi}\in [0;1].   
\end{equation}
In addition to the control number, we also provide a modified version (MCN) obtained by taking the square root of $1/\mu_+$, as our goal is not to achieve an index that remains invariant under similarities (which was the rationale behind defining the original control number as a fraction).

Besides the manipulability (M), we also plot the function $\det \mathbf{V}(\mathbf{K})$ as it has the advantage of taking into account the lengths of the legs compared to M (cf.\ Remark \ref{rmk:mani}).
In Fig.\ \ref{comparsionindicies} the graphs of the  EE-independent KPIs are displayed, where M and {$\mathbf{V}(\mathbf{K})$ are scaled between $0$ and $1$, while CN is scaled by a factor of 5} for better visualization.
Note that the IR graph exhibits a jump discontinuity at the pose $\phi =4.385748$ radians where the first and second leg are parallel (cf.\ Footnote \ref{jump}). 
The animation showing the incircle can be downloaded from \cite{codes}.

\begin{figure}[h!]
 \centering
\begin{overpic}[width=14cm]{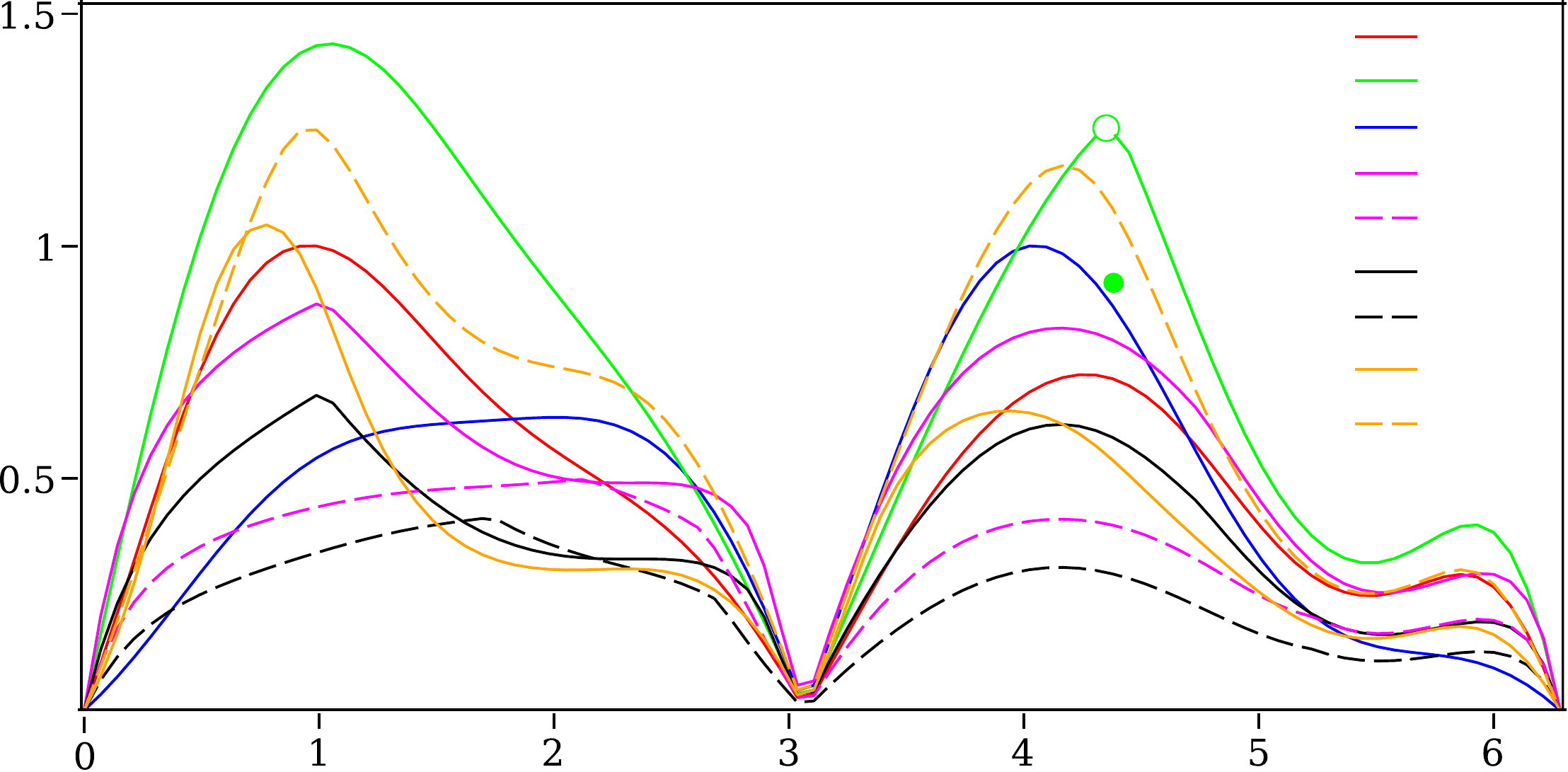}
\begin{small}
  \put(92,46.5) {M}
       \put(92,43.5) {IR}
       \put(92,40) {$\mathbf{V}(\mathbf{K})$}
       \put(92,37) {TI}
        \put(92,34) {MTI}
        \put(92,31) {DS}
         \put(92,28) {MDS}
          \put(92,25) {CN}
          \put(92,21.5) {MCN}
\end{small}
\put(85,1.3){$\phi$ (rad)}
   \put(-2.2,30){\makebox(0,0){\rotatebox{90}{Performance indicies}}}
\end{overpic}
\caption{Comparison with EE-independent KPIs.}
\label{comparsionindicies}
\end{figure}

\subsection{Comparison with the approach of singularity-free cylinders}

In~\cite{LI20061157} and \cite{Abbasnejad}, the orientation range of the platform is assumed to be bounded by some interval, i.e., $\zeta\in\left[\zeta_{\text{min}};\zeta_{\text{max}}\right]$. However, since we are not considering any motion limits, we have $\zeta_{\text{max}}=-\zeta_{\text{min}}=\pi$. Consequently, in every pose for $d$ of Eq.\ (\ref{compare1}), the value is zero, indicating that the singularity-free cylinder degenerates to a line. In other words, for the given position, there always exists an orientation causing a singularity. Following the idea presented in \cite{mao2013new}, in this case, we measure the distance as $|\zeta_0 - \zeta|$ to the nearest singularity in the orientation workspace (as shown in Fig. \ref{leiexample} (a)).

\begin{figure}[h!]
 \centering
\begin{overpic}[width=60mm]{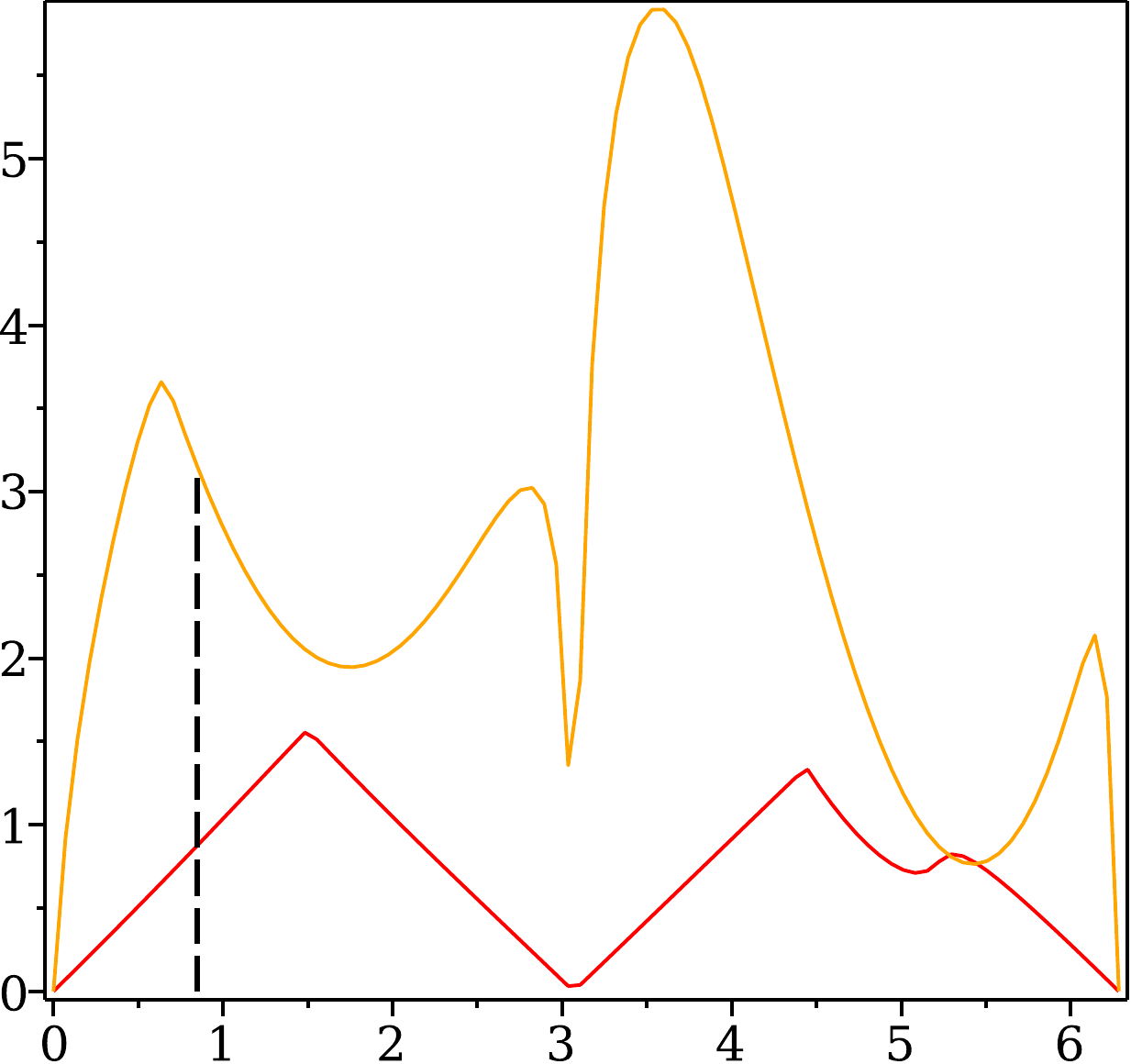}
 \begin{small}
     \put(40,-5) {(a)} 
    \begin{footnotesize}
        \put(-8,40){\makebox(0,0){\rotatebox{90}{Distance}}}
     \put(88,-3.5){$\phi$ (rad)} 
    \end{footnotesize}
    \end{small}
\end{overpic}
\quad \quad
\begin{overpic}[height=55mm]{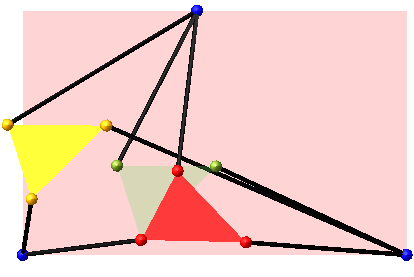}
\begin{small}
     \put(40,-4) {(b)}    
    \end{small}
\end{overpic}
\medskip
\caption{Distance to the closest singularity in the orientation/position workspace represented by the red/yellow graph. 
 (b) The closest singularity for a fixed position and orientation at the configuration $\phi = 0.8471710528$.}
\label{leiexample}
\end{figure}

To complete the picture, we also computed the distance to the closest singularity in the position workspace (e.g., \cite{nag2021singularity}), as illustrated in Fig.\ \ref{leiexample} (a). The closest singularity for a fixed position and orientation at $\phi=0.8471710528$ is shown in Fig.\ \ref{leiexample} (b). For corresponding animations, please refer to the supplementary material \cite{codes}. As previously mentioned in Section \ref{sec:motivation}, it's worth noting that we cannot draw conclusions about the distance to the next singularity within the configuration space of the 3-RPR robot from the two separate pieces of information regarding orientation and position singularity distances.

\begin{rmk}\label{rmk:signed}

Due to the discretization of motion into 90 poses, the singularity occurring at $\phi\approx 3.0356972$ radians is not precisely reached, in contrast to the singularity at $\phi=0$. This discrepancy is the reason why the graphs in {Figs.~\ref{measure1}, \ref{comparsionindicies} and \ref{leiexample} (a)} do not reach zero at this specific point. This deviation is particularly noticeable in the plot showing translational singularity distance, depicted in Fig.\ \ref{leiexample} (a). To address this issue, \footnote{Assumed that the path of the 1-dimensional motion crosses the singularity variety and does not only touch it.}, one can employ a signed distance function. We will illustrate the utilization of such a function in the forthcoming example discussed in the next section. \hfill $\diamond$
\end{rmk}

\subsection{Applying method to 3-RRR mechanisms} \label{sec:SG}

The method for computing the extrinsic distance to the closest singularity can be adopted straight forward to any type of robot (also redundant ones), which can be abstracted into a jointed composition of bars, triangular plates, and tetrahedral bodies.

For the latter, a comparable formula to Equation (\ref{metricdp1}) can be derived by drawing parallels to the explanation provided in Section \ref{minterpretation}. 

In the following, we want to demonstrate
the feasibility of our approach to more complicated mechanisms, like the 3-RRR robot. 
We consider the design parameters and the circular translation of the platform, as outlined in \cite[Section 5]{nasa2011trajectory}, as our input. The motion along the circle is discretized into $90$ equally spaced poses.

{\bf Distance to parallel singularities:} It is well known {\cite[Section 2]{Gomez}} 
that the {\it parallel singularities} of this mechanism can be determined similarly to those of 3-RPR robots. For computing the distance to these singularities we choose the interpretation that the 
platform and base are triangular plates ($\blacktriangle$), which are made of deformable material like the six bars needed for assembling the three legs.

By labeling the given nine points by $\mathbf{k}_1,\ldots ,\mathbf{k}_9$ and the corresponding ones of the closest singularity by $\mathbf{k}_1',\ldots ,\mathbf{k}_9'$ according to 
Fig.\ \ref{3rrrexample} (b), we can make use of the 
 singularity polynomial $V$ of Eq.(\ref{variety}). By employing Eqs.~(\ref{metricdp}) and (\ref{metricdp1})  the distance between two 3-RRR configurations can be written as follows:

\begin{align}\label{distance3rrr}
D_{\blacktriangle}^{\blacktriangle}(\mathbf{K},\mathbf{K}')^2 &=\frac{1}{8} \left[{ \sum_{(i, j)\in I_{5}}  d\left(\vert_{ij},\vert'_{ij}\right)^2}  +  d(\blacktriangle_{789}, \blacktriangle_{789}')^2 + d(\blacktriangle_{456}, \blacktriangle_{456}')^2 \right]    
\end{align}
with ${I_5}=\{(1, 4), (1, 7), (2, 5),(2, 8), (3, 6), (3, 9)\}$.  
According to Remark \ref{rmk:signed}, we want to use a signed distance function for this example. The sign of $D_{\blacktriangle}^{\blacktriangle}(\mathbf{K},\mathbf{K}')$ is obtained by the signum function of $V(\mathbf{K})$. By using Eq.~(\ref{distance3rrr}) the Lagrangian function for finding the closest singular configuration reads as:

\begin{equation}\label{3rrr}
L = D^{\blacktriangle}_\blacktriangle(\mathbf{K},\mathbf{K}')^2 +\lambda V.
\end{equation}


This is a non-homogeneous polynomial equation involving the unknowns $c_1,\ldots, c_9,d_1,\ldots,d_9$ and $\lambda$. The partial derivatives of $L$ with respect to the mentioned unknowns result in $19$ equations. This square system is solved for the source configuration $\mathbf{K}^{\mathbb{C}}$ (cf.\ Section \ref{step1}) using the regeneration algorithm implemented in Bertini. 
We obtain $50$ generic finite solutions over $\mathbb{C}$ (computed without any path failures), which are used to run the remaining computational pipeline described in Section \ref{results}. 
The obtained results are illustrated in Fig.\ \ref{3rrrexample}. 
For comparison, we plot the determinant of the Jacobian (used in \cite{nasa2011trajectory}) in  Fig.\ \ref{3rrrexample} (a) scaled by the factor 0.04
(without taking the absolute value as done in Fig. 7).

{\bf Distance to leg singularities:}
Our method can also be used to compute the distance to {\it leg singularities}. We just have to replace the side-condition $V=0$ in the Langrange function of Eq.~(\ref{3rrr}) by the collinearity condition $C_i=0$ of the three points of the $i$-th leg ($i=1,2,3$). Moreover, the sign of the distance is obtained by the signum function of $C_i(\mathbf{K})$. 
In this way, we get the three additional graphs plotted in Fig.\ \ref{3rrrexample} (a).

\begin{figure}[h!]
  \centering
\begin{overpic}[width=70mm]{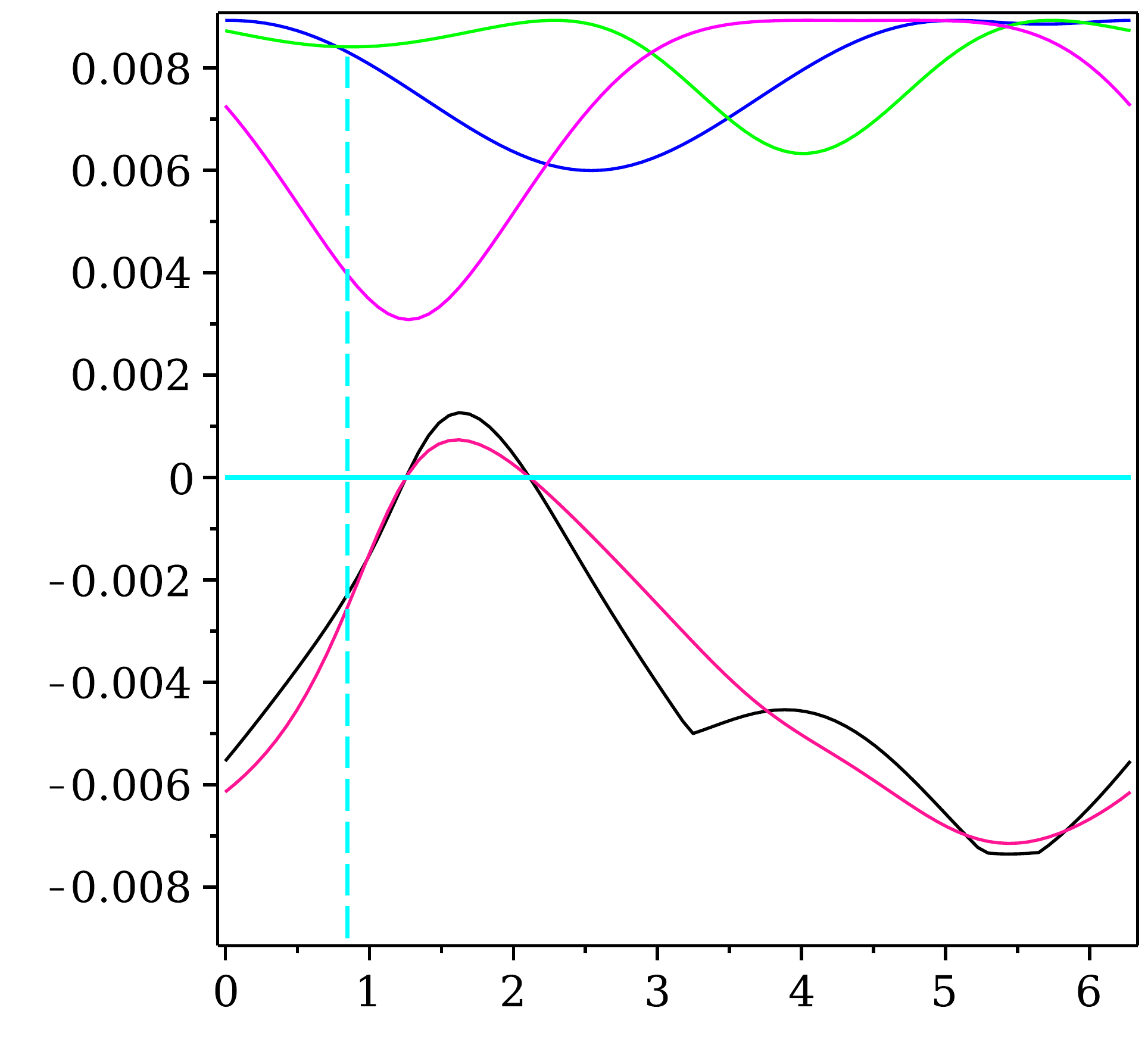}
\begin{footnotesize}
        \put(-6,40){\makebox(0,0){\rotatebox{90}{Distance}}}
     \put(88,1){$\phi$ (rad)} 
    \end{footnotesize}
     \begin{small}
     \put(40,-5) {(a)}
     \end{small}
 \end{overpic}
\quad \quad
\begin{overpic}[width=70mm]{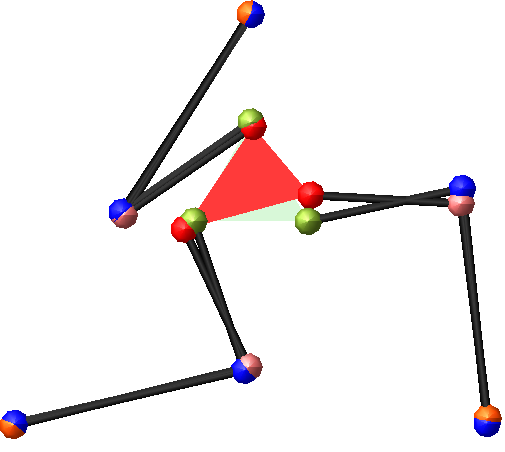}
 \begin{small}
     \put(40,-7) {(b)} 
      \put (-.1,10){$\mathbf{k}_{7}$}
    \put (-0,-2){$\mathbf{k}_{7}'$}
      \put (50,10){$\mathbf{k}_{1}$}
      \put (53,15){$\mathbf{k}_{1}'$}
       \put (42.5,41){$\mathbf{k}_{4}$}
       \put (29.5,38){$\mathbf{k}_{4}'$}
        \put (87,5){$\mathbf{k}_{8}$}
        \put (98.5,8){$\mathbf{k}_{8}'$}
         \put (90,57){$\mathbf{k}_{2}$}
          \put (95,48){$\mathbf{k}_{2}'$}
          \put (60,39){$\mathbf{k}_{5}$}
          \put (60,55.5){$\mathbf{k}_{5}'$}
          \put (46,71.5){$\mathbf{k}_{6}$}
           \put (53.5,63.5){$\mathbf{k}_{6}'$}
          \put (53,82){$\mathbf{k}_{9}$}
           \put (40,87){$\mathbf{k}_{9}'$}
          \put (19,52.5){$\mathbf{k}_{3}$}
           \put (21,39){$\mathbf{k}_{3}'$}
            \put (30,5){$\vert_{71}$}
            \put (35,26){$\vert_{14}$}
            \put (96,25){$\vert_{82}$}
             \put (70,40){$\vert_{25}$}
              \put (25,65){$\vert_{93}$}
              \put (38,63){$\vert_{36}$}
    \end{small}
     
\end{overpic}
\bigskip
\caption{(a) Singularity distances with respect to the presented distance metric, where the signed distance to the closest parallel singularity is given in black, and the distance to the closest singularity of the {$1^{st}$/$2^{nd}$/$3^{rd}$} leg is represented in blue/green/magenta. The course of the manipulability index is shown in red. (b) The closest parallel singularity (red) to the configuration is indicated by the cyan dashed line in (a).
} 
\label{3rrrexample}
\end{figure}

\begin{figure}[h!]
    \centering
    \begin{overpic}[width=52mm]{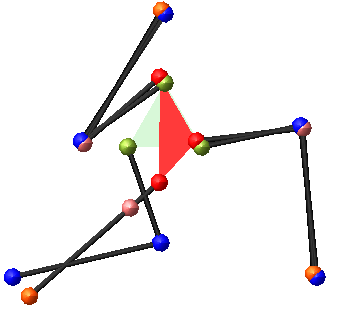}
    \begin{small}
     \put(35,5) {(a)}    
    \end{small}
\end{overpic}
\begin{overpic}[width=52mm]{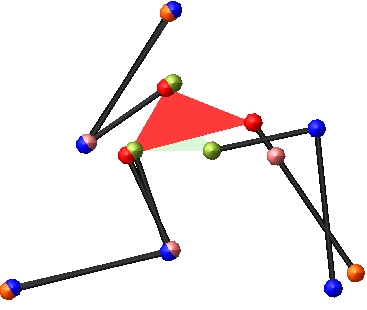}
\begin{small}
     \put(35,5) {(b)}    
    \end{small}
 \end{overpic}
\quad 
\begin{overpic}[width=52mm]{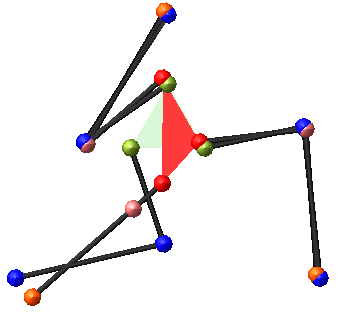}
\begin{small}
     \put(35,5) {(c)}    
    \end{small}
\end{overpic}
    \caption{The closest singularity for the {$1^{st}$/$2^{nd}$/$3^{rd}$} leg (a/b/c) to the configuration indicated by the cyan dashed line in Fig. \ref{3rrrexample} (a).}
\end{figure}

\begin{rmk}
The animations of the manipulator's one-parametric motion given in Eq.\ (\ref{parameter}) together with the closest singular configurations implying the graphs of  Figs.\ \ref{measure_sing}, \ref{fig:result3}, \ref{finalfig} \ref{leiexample} and \ref{3rrrexample} can be downloaded from~\cite{codes}.  \hfill $\diamond$
\end{rmk}

The singularity distance along the 1-parametric motion is then obtained by the distance from the nearest point on the four graphs\footnote{It can easily be checked that the closest singular points on the constrained varieties of the parallel singularity and leg singularity, respectively, always result in distances larger than those implied by the closest regular points.} 
  to the axis of the motion parameter. This shows up a further advantage of using a well-defined metric for measuring singularity distances as it allows to treat leg and parallel singularities in a uniform way.

\begin{rmk}
The application of our method to spatial mechanisms like the Stewart Gough platform needs some additional studies -- dedicated to future research -- concerning the various motion representations, which heavily affect the efficiency
of the homotopy continuation-based singularity distance computation  (cf.\ \cite{kapilavai2020homotopy}).
\hfill $\diamond$
\end{rmk}

\section{Conclusion and future work}\label{end}
We presented extrinsic metrics for the computation of the closest singular configuration of a 3-RPR parallel manipulator, 
which takes the combinatorial structure of the mechanism into account as well as different design options. For each of the resulting nine interpretations (Section \ref{sec:moti}) the corresponding extrinsic metric relies on the distance computation between structural components (Section \ref{sec:extrinsic}). 

The constrained optimization problem of computing the closest singular configuration with respect to these nine metrics was formulated using the Lagrangian approach (Section \ref{generalcase}). As the singular points of the constraint varieties are missed by this method, we determined them algebraically. Based on their obtained geometric characterization, we parameterized them for a separate optimization (Section \ref{sec:singpoints}).

Moreover, we presented an efficient computational algorithm for the singularity distance computation along a 1-parametric motion of the manipulator using the computer algebra system \texttt{Maple} as well as the numerical algebraic geometry software \texttt{Bertini} and \texttt{Paramotopy} (Section \ref{results}). We coded 
an open-source software interface allowing to make all calls from  \texttt{Maple} rather than by switching between the systems. Finally, we validated our approach on hand of a numerical example (Section \ref{numericalexample}).

The discussion of this concrete example
 showed that the graphs of all nine presented extrinsic metrics have a similar course (cf.\ Fig.\ \ref{measure_sing} and Fig.\ \ref{measure1}, respectively), which was compared with existing indices that claim to evaluate the closeness to singularities (cf.\ Figs.\ \ref{comparsionindicies} and \ref{leiexample} (a)). 
Note that all nine indices have the potential  (cf.\ Table \ref{runtime}) to be calculated in real time as the suggested method of homotopy continuation allows the parallelization of computations.

In \cite{intrinsic}, we conducted a study on the computation of singularity distances for the same nine interpretations of 3-RPR configurations using intrinsic metrics, and we compared them to the corresponding extrinsic metrics presented in this paper. The observed relationship between the extrinsic and intrinsic distances is not yet fully understood. This open problem, as well as the sensitivity analysis based on the paired intrinsic and extrinsic metrics mentioned in Remark \ref{rmk:sensitivity}, is dedicated to future research.

\bigskip

\noindent
{\bf Acknowledgement}\newline
This research is supported by Grant No.\ P 30855-N32 from the Austrian Science Fund (FWF). The first author would like to thank Silviana Amethyst for valuable suggestions and technical discussions on \texttt{Bertini}/\texttt{Paramotopy}. 

\bibliography{bibfile1}

\appendix
\section{Derivation of the distance between triangular plates}\label{minterpretation}
The triangular plate $\blacktriangle_{ijk}$ in $\mathbb{R}^{2}$ can be captured by the affine combinations of its vertices $\mathbf{k}_{i}, \mathbf{k}_{j},\mathbf{k}_{k}$;  i.e.\

\begin{equation}\label{affhull}
 \blacktriangle_{ijk}: u\ \mathbf{k}_{i} + v\ \mathbf{k}_{j} + (1-u-v)\ \mathbf{k}_{k}  \quad \text{with}\quad \textnormal{u}\in [0,1],\ \textnormal{v}\in [0,1-u].
\end{equation}

 In order to define a distance function between two triangles $\blacktriangle_{ijk} = (\mathbf{k}_{i},\mathbf{k}_{j}, \mathbf{k}_{k})$ and $\blacktriangle_{ijk}^{'}=(\mathbf{k}_{i}^{'},\mathbf{k}_{j}{'}, \mathbf{k}_{k}{'} )$ illustrated in Fig.~\ref{fig:test1} we use the  parameterization of Eq.\ (\ref{affhull}) to view both triangles as  images of the unit triangle $T_0$ under the following two affine mappings: 
\begin{align}
    &\psi_{\blacktriangle_{ijk}} :\  \begin{pmatrix}
     u \\ v
     \end{pmatrix} \mapsto u \mathbf{k}_i + v \mathbf{k}_j + (1 - u - v)\mathbf{k}_k  \quad \text{with}\quad  u \in [0,1], v \in [0,1-u], \\
     &\psi_{\blacktriangle_{ijk}'} :\  \begin{pmatrix}
     u \\ v
    \end{pmatrix} \mapsto u \mathbf{k}_i' + v \mathbf{k}_j' + (1 - u - v)\mathbf{k}_k'  \quad \text{with}\quad  u \in [0,1], v \in [0,1-u].
\end{align}
{Now we define the distance as the integral over the pointwise distances of corresponding triangle points by:}
\begin{equation}\label{triangle}
 \begin{split}
 &\frac{1}{|T_0|} \int_{T_0} \left( \psi_{\blacktriangle_{ijk}} - \psi_{\blacktriangle_{ijk}^{'}}\right)^2\, dA  
  = \\
  &2 \int_{0}^{1} \int_{0}^{1-u}\left[u\ \mathbf{(k}_{i}-\mathbf{k}_{i}^{'})+v \ \mathbf{(k}_{j}-\mathbf{k}_{j}^{'})+(1-u-v) (\mathbf{k}_{k}-\mathbf{k}_{k}^{'})\right]^2 \,du \,dv 
  \end{split}
 \end{equation}
{where $A$ denotes the area.}
 By solving Eq.~(\ref{triangle}) we obtain the squared distance between two triangles $\blacktriangle_{ijk}$ 
and $\blacktriangle'_{ijk}$ given in Eq.~(\ref{metricdp1}).

\section{Lemma}\label{app:lem}
\begin{lem}\label{lem:app}
For a 3-RPR manipulator, which is not architecturally singular, there always exists pairwise distinct indices $i,j\in\left\{1,2,3\right\}$ such that $\mathbf{k}_i\neq \mathbf{k}_j$ and 
$\mathbf{k}_{i+3}\neq \mathbf{k}_{j+3}$ hold true. 
\end{lem}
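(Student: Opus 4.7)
The plan is to argue by contrapositive: assume that for every pair of distinct indices $i,j\in\{1,2,3\}$ at least one of the equalities $\mathbf{k}_i=\mathbf{k}_j$ or $\mathbf{k}_{i+3}=\mathbf{k}_{j+3}$ holds, and show that the manipulator must then be architecturally singular. The strategy is to establish that this hypothesis forces either $\mathbf{k}_1=\mathbf{k}_2=\mathbf{k}_3$ or $\mathbf{k}_4=\mathbf{k}_5=\mathbf{k}_6$, since either of these design patterns makes every pose singular.

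First I would branch on the pair $(1,2)$. If $\mathbf{k}_1=\mathbf{k}_2$ and in addition $\mathbf{k}_1=\mathbf{k}_3$, the base triple collapses and we are done. If $\mathbf{k}_1=\mathbf{k}_2$ but $\mathbf{k}_1\neq\mathbf{k}_3$, the hypothesis applied to $(1,3)$ yields $\mathbf{k}_4=\mathbf{k}_6$, and since $\mathbf{k}_2=\mathbf{k}_1\neq\mathbf{k}_3$ the hypothesis applied to $(2,3)$ yields $\mathbf{k}_5=\mathbf{k}_6$, so the platform triple collapses. If instead $\mathbf{k}_1\neq\mathbf{k}_2$, then the hypothesis forces $\mathbf{k}_4=\mathbf{k}_5$, and by checking both alternatives ($\mathbf{k}_1=\mathbf{k}_3$ or $\mathbf{k}_4=\mathbf{k}_6$) provided by the pair $(1,3)$ -- the former triggering $\mathbf{k}_5=\mathbf{k}_6$ via $(2,3)$ since $\mathbf{k}_2\neq\mathbf{k}_3$, the latter being immediate -- one again arrives at $\mathbf{k}_4=\mathbf{k}_5=\mathbf{k}_6$.

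The last step is to observe that each of the two surviving outcomes is an architectural singularity. If $\mathbf{k}_1=\mathbf{k}_2=\mathbf{k}_3$, then in every pose the three legs emanate from this common base point, so their carrier lines concur there and the Pl\"ucker coordinates of the legs are linearly dependent, yielding $V\equiv 0$. If instead $\mathbf{p}_4=\mathbf{p}_5=\mathbf{p}_6$ (the rigid-body equivalent of $\mathbf{k}_4=\mathbf{k}_5=\mathbf{k}_6$ via Eq.~(\ref{initial})), then in every pose the three carrier lines meet at this common platform point, once more forcing $V\equiv 0$. Both outcomes contradict the non-architectural assumption and close the contrapositive. The only mildly subtle point is keeping track, in each mixed subcase, that every newly produced non-coincidence $\mathbf{k}_i\neq\mathbf{k}_j$ deterministically triggers the platform equality $\mathbf{k}_{i+3}=\mathbf{k}_{j+3}$ via the hypothesis, so that these equalities chain into a complete collapse on one side rather than leaving a genuinely mixed coincidence pattern.
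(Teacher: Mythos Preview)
Your proof is correct and follows essentially the same approach as the paper: both hinge on the characterization that a 3-RPR manipulator is architecturally singular precisely when the base or platform anchor points collapse to a single point, and both dispatch the claim by an elementary case distinction on which pairs coincide. The only cosmetic difference is that you phrase it as a contrapositive and spell out why a collapsed base/platform forces $V\equiv 0$, whereas the paper argues directly and simply asserts that geometric fact.
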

\begin{proof}
From the geometric point of view, a 3-RPR manipulator is only architecturally singular if and only if either the base or platform collapses to a point. 

Suppose that $\mathbf{k}_{1}, \mathbf {k}_{2}, \mathbf{k}_{3}$ are pairwise distinct. As the manipulator is not architectural singular, there has to  exist a pair of indices  $i,j\in\left\{1,2,3\right\}$ such that $\mathbf{k}_{i+1}\neq\mathbf{k}_{j+1}$ holds, and we are done. 

Suppose now that a pair of points among $\mathbf{k}_{1}, \mathbf{k}_{2}, \mathbf{k}_{3}$ coincide; without loss of generality we can assume $\mathbf{k}_{1}=\mathbf{k}_{2}$. As the manipulator is not architecturally singular, $\mathbf{k}_{3}$ is distinct from $\mathbf{k}_{1}=\mathbf{k}_{2}$. Now either $\mathbf{k}_{4}\neq \mathbf{k}_{6}$ has to hold or $\mathbf{k}_{5}\neq \mathbf{k}_{6}$, as otherwise this would imply the architecture singularity  $\mathbf{k}_{4}=\mathbf{k}_{5}=\mathbf{k}_{6}$. 
\end{proof}

\section{Explicit expressions for the distances of Theorem \ref{regression1}}\label{explicit}
The explicit expressions for the distances
$D_{\star}^{\vartriangle}(\mathbf{K},\mathbf{K}')$ with $\star \in \{{\blacktriangle}, {\vartriangle}\}$ read as follows:
\begin{align}
  D_{\blacktriangle}^{\vartriangle}(\mathbf{K},\mathbf{K}') = &min\left[\frac{23}{630} (x_{5}^2-x_{5}x_{6}+x_{6}^2+y_{6}^{2})\pm\frac{23}{630}\left(\text{sign}(\gamma)\sqrt{\eta}\right)\right]\\
  D_{\vartriangle}^{\vartriangle}(\mathbf{K},\mathbf{K}') =&min\left[\frac{4}{15} (x_{5}^2-x_{5}x_{6}+x_{6}^2+y_{6}^{2})\pm\frac{4}{15}\left(\text{sign}(\gamma)\sqrt{\eta}\right)\right]
\end{align}
with
\begin{align}
\eta:=&x_{5}^4 - 2x_{5}^3x_{6} + 3x_{5}^2x_{6}^2 - x_{5}^2y_{6}^2 - 2x_{5}x_{6}^3 - 2x_{5}x_{6}y_{6}^2 +x_{6}^4 + 2x_{6}^2y_{6}^2 + y_{6}^4,\\
\gamma:=&e_{0}^2x_{5}^2y_{6}+2e_{0}^2x_5x_6y_6 - 2e_0^2x_6y_6-2e_{0}^2y_{6}^3+2e_{0}e_{1}x_{5}^3- 6e_0e_1x_5^{2}x_{6}+6e_0e_1x_5x_{6}^2+ \\ &2e_{0}e_{1}x_{5}y_{6}^2-4e_0e_1x_{6}^3 - 4e_0e_1x_6y_{6}^2-e_{1}^{2}x_{5}^2y_{6}- 2e_{1}^2x_{5}x_6y_{6}+2e_{1}^2x_{6}^2y_{6}+ 2e_{1}^2y_{6}^3.
\end{align}
This shows that the obtained values only depend on the geometry of the manipulator.

\end{document}